\pdfoutput=1

\documentclass[a4paper,fleqn]{cas-sc}

\usepackage[authoryear]{natbib}

\usepackage{amsthm}
\usepackage{listings}
\usepackage{mathtools}
\usepackage{microtype}
\usepackage{placeins}
\usepackage{xcolor}

\definecolor{minoCodeBackground}{RGB}{247,249,252}
\definecolor{minoCodeFrame}{RGB}{183,192,204}
\definecolor{minoCodeKeyword}{RGB}{21,78,121}
\definecolor{minoCodeLibrary}{RGB}{111,66,193}
\definecolor{minoCodeComment}{RGB}{44,122,74}
\definecolor{minoCodeString}{RGB}{163,21,55}
\definecolor{minoCodeNumber}{RGB}{98,108,123}
\definecolor{minoCodeText}{RGB}{31,35,40}

\lstdefinestyle{mino-python}{
  language=Python,
  basicstyle=\ttfamily\footnotesize,
  backgroundcolor=\color{minoCodeBackground},
  identifierstyle=\color{minoCodeText},
  keywordstyle=\color{minoCodeKeyword}\bfseries,
  commentstyle=\color{minoCodeComment}\itshape,
  stringstyle=\color{minoCodeString},
  emph={jax,jnp},
  emphstyle=\color{minoCodeLibrary}\bfseries,
  columns=fullflexible,
  keepspaces=true,
  numbers=left,
  numberstyle=\ttfamily\tiny\color{minoCodeNumber},
  numbersep=7pt,
  stepnumber=1,
  numberblanklines=false,
  showspaces=false,
  showstringspaces=false,
  showtabs=false,
  tabsize=4,
  breaklines=true,
  breakatwhitespace=false,
  frame=single,
  rulecolor=\color{minoCodeFrame},
  framerule=0.4pt,
  framesep=4pt,
  xleftmargin=2.3em,
  framexleftmargin=1.8em,
  xrightmargin=0.5em,
  framexrightmargin=0.2em,
  aboveskip=0.8em,
  belowskip=0.8em,
  abovecaptionskip=0.2em,
  belowcaptionskip=0.5em,
  captionpos=t
}

\theoremstyle{plain}
\newtheorem{theorem}{Theorem}[section]
\newtheorem{proposition}[theorem]{Proposition}
\newtheorem{lemma}[theorem]{Lemma}
\newtheorem{corollary}[theorem]{Corollary}
\theoremstyle{definition}
\newtheorem{definition}[theorem]{Definition}
\newtheorem{example}[theorem]{Example}
\newtheorem{remark}[theorem]{Remark}

\newcommand{\R}{\mathbb{R}}

\newcommand{\Tst}{T^{*}\Omega}
\newcommand{\wh}[1]{\widehat{#1}}
\newcommand{\norm}[1]{\lVert #1 \rVert}
\newcommand{\abs}[1]{\lvert #1 \rvert}

\newcommand{\dd}{\,\mathrm{d}}
\newcommand{\WF}{\mathrm{WF}}

\renewcommand{\eqref}[1]{Eq.~(\ref{#1})}
\newcommand{\eqrefs}[2]{Eqs.~(\ref{#1}) and~(\ref{#2})}

\newcommand{\ie}{\emph{i.e.}}
\newcommand{\eg}{\emph{e.g.}}
\ExplSyntaxOn
\RenewDocumentCommand \printorcid { }
{
  \seq_if_empty:NF \g_stm_orcid_seq
    {
      \group_begin:
        \tex_let:D \thefootnote \relax \footnotetext
        {
          \raggedright
          \textsc{orcid}(s):\c_space_token
          \seq_use:Nn \g_stm_orcid_seq { ;~ }
        }
      \group_end:
    }
}
\cs_set:Npn \__first_footerline:
{
  \group_begin:
    \small
    \sffamily
    \__short_authors: :~
    { \rmfamily \itshape Preprint }
  \group_end:
}
\ExplSyntaxOff

\makeatletter
\AtEndDocument{%
  \clearpage
  \immediate\write\@auxout{%
    \string\csxdef{lastpage}{\the\numexpr\value{page}-1\relax}}}
\makeatother

\begin{document}
\let\WriteBookmarks\relax
\def\floatpagepagefraction{1}
\def\textpagefraction{.001}

\shorttitle{Microlocal neural operators}
\shortauthors{G.L.R. N'guessan and B.J. Kim}

\title[mode = title]{MiNO: Cotangent-bundle propagator learning for PDEs}

\author[1,2,3]{Gnankan Landry Regis N'guessan}[orcid=0009-0009-6161-4608]
\ead{rnguessan@aimsric.org}
\credit{Conceptualization, Methodology, Software, Formal analysis, Investigation, Data curation, Validation, Visualization, Writing -- original draft, Writing -- review \& editing}

\author[4]{Bum Jun Kim}[orcid=0000-0003-4155-9225]
\cormark[1]
\ead{bumjun.kim@weblab.t.u-tokyo.ac.jp}
\credit{Formal analysis, Visualization, Supervision, Validation, Writing -- review \& editing}

\affiliation[1]{op={},organization={Axiom Research Group}}
\affiliation[2]{organization={Department of Applied Mathematics and Computational Science, The Nelson Mandela African Institution of Science and Technology}, addressline={404 Nganana, Kikwe, Arumeru, P.O. Box 447}, city={Arusha}, postcode={23311}, country={Tanzania}}
\affiliation[3]{organization={African Institute for Mathematical Sciences, Research and Innovation Centre}, addressline={KN 3 Road, Gasharu Cell, Kicukiro Sector, Kicukiro District}, city={Kigali}, country={Rwanda}}
\affiliation[4]{organization={Graduate School of Engineering, The University of Tokyo}, addressline={7-3-1 Hongo, Bunkyo-ku}, city={Tokyo}, postcode={113-8656}, country={Japan}}

\cortext[cor1]{Corresponding author}

\begin{abstract}
Scientific machine learning for partial differential equations commonly targets solution fields, as in physics-informed neural networks, or solution maps, as in neural operators. We study a third target: the propagator itself, a phase and amplitude in phase space. The motivation is a gap in regularity. A transported discontinuity is nonsmooth in space and time, yet the rule that moves it can be a polynomial phase carrying unit amplitude, so the object that generates an evolution can be far smoother than the field it generates. The microlocal neural operator (MiNO) learns that object, using the eikonal equation for the phase and the transport equation for the amplitude, and recovers the solution by an oscillatory integral. Sharp fronts and caustics then belong to propagation geometry rather than to a field fitted pointwise. Small residuals certify more than the reconstructed field. They place the learned canonical relation, the geometry that carries singularities, close to the exact one, and they separate trainable error from the frequency-truncation tail. On a matched-budget discontinuous-advection benchmark, MiNO stops improving within 10,000 steps at the accuracy limit of its finite reconstruction window, a limit predicted in closed form, whereas a physics-informed neural network with neural-tangent-kernel loss balancing stays near its initial error. On smooth advection, the mean error is $3.84\times10^{-3}$ for MiNO and $3.12\times10^{-2}$ for a supervised Fourier neural operator. Single-branch MiNO is the smallest model compared, and one trained generator serves five unseen initial conditions without retraining.
\end{abstract}

\begin{keywords}
Propagator learning \sep Microlocal analysis \sep Fourier integral operator \sep Eikonal equation \sep Neural operator \sep Physics-informed learning
\end{keywords}

\maketitle
\hypersetup{
  pdfauthor={Gnankan Landry Regis N'guessan and Bum Jun Kim},
  pdfsubject={Propagator learning for partial differential equations},
  pdfkeywords={Propagator learning; Microlocal analysis;
    Fourier integral operator; Eikonal equation; Neural operator;
    Physics-informed learning},
}

\section{Introduction}
\label{sec:intro}

Most neural solvers for partial differential equations (PDEs) learn the solution itself. Physics-informed neural networks (PINNs) represent a field over space--time and enforce the governing equation through residual losses \citep{raissi2019pinn,sirignano2018dgm,e2018deepritz}, whereas neural operators learn maps between function spaces \citep{li2021fno,lu2021deeponet,kovachki2023neural}. These approaches have enabled broad progress in scientific machine learning \citep{karniadakis2021piml}. This paper adopts a different target. Rather than learning only the visible solution, we learn the phase-space mechanism that propagates that solution.

We formulate this target through the phase--amplitude pair $(\phi,A)$ of a Fourier-integral representation of the propagator. The phase describes how oscillations and fronts travel, the amplitude describes their strength, and an oscillatory reconstruction returns the solution. We call this paradigm propagator learning and instantiate the paradigm through the microlocal neural operator (MiNO). MiNO learns the phase and amplitude on the cotangent bundle by enforcing their eikonal and transport equations.

The motivation is clearest for sharp propagation. A moving discontinuity is nonsmooth as a function of space and time, yet the rule that moves the discontinuity can be smooth and simple in phase space. For unit-speed advection, for example, a transported square wave lies in $L^2(\R)$ but in no $H^s(\R)$ with $s\ge1/2$, while its propagator has the polynomial phase $x\xi-t\xi$ and unit amplitude. The field and its generator therefore describe the same evolution at very different levels of regularity.

That gap is not only a matter of training difficulty. The solution manifold of a convection-dominated problem has a slowly decaying Kolmogorov $n$-width, and, for linear transport and wave problems, that width obeys an algebraic lower bound \citep{greif2019nwidth,ohlberger2016reduced}, so no fixed low-dimensional linear approximation of the field is uniformly accurate in time. The established remedies for sharp transport act on the approximation frame or on the optimization while leaving the learned object a field over space--time (Appendix~\ref{app:related}). Changing the target moves the problem to an object that the lower bound does not describe.

This phase-space view is classical in high-frequency propagation, where fronts, caustics, and multivalued arrivals are organized by bicharacteristic flow \citep{engquist2003high,osher2002geometric,hormander1971fio, duistermaat1996fio,hormander1985vol3,maslov1981semiclassical}. Learning the phase makes that motion explicit through the propagator's canonical relation, the rule that says where a singularity sitting at one point of phase space has arrived by a later time. A pseudo-differential symbol instead keeps the fixed identity phase and hence the diagonal relation: Such an operator can sharpen or damp a singularity but cannot move one, whatever symbol is learned (Remark~\ref{rem:pseudo}).

The experiments reflect this difference in representation. On a matched-budget discontinuous-advection test, MiNO stops improving within 10,000 optimization steps because the model has reached the accuracy limit of its finite reconstruction window, a limit computed in closed form in Proposition~\ref{prop:square-tail}, while a PINN with neural-tangent-kernel (NTK) loss balancing remains near its initial error (Sec.~\ref{sec:exp-discont}).

Both methods receive the same governing equation and are asked for different objects; Sec.~\ref{sec:discussion} reads the reported margins in that light. The separation also has a structural source. MiNO satisfies the eikonal Cauchy datum by construction, so the initial-condition penalty that a PINN must balance against its equation residual is absent from the objective (Secs.~\ref{sec:mino-param} and~\ref{sec:exp-discont}).

The learned generator is independent of the initial datum: New data enter only through their Fourier transforms during reconstruction. One trained generator is therefore reused for five distinct initial conditions without retraining (Sec.~\ref{sec:exp-multiic}), a reuse that follows from the linearity of the representation rather than from the coverage of a training set. The same structure supports a bank of forward generators for inverse parameter recovery (Sec.~\ref{sec:exp-inverse}), and the learned generator returns the canonical relation itself, so the propagated position of a singularity is read from the learned phase rather than inferred from a reconstructed field. On smooth advection, MiNO also attains a lower mean error than a supervised Fourier neural operator (FNO) (Sec.~\ref{sec:exp-fno}). These uses require the governing symbol but not labeled solution trajectories.

Related methods use phase-space structure for different targets. Learned eikonal solvers predict a base-space traveltime, learned-symbol operators retain a pseudo-differential phase, and imaging networks inspired by Fourier integral operators (FIOs) learn an imaging map. MiNO instead learns the phase and amplitude of a PDE propagator and reconstructs the evolving field, including its branch structure. Appendix~\ref{app:related} develops this comparison in detail.

Figure~\ref{fig:graphical-abstract} summarizes the representational shift, the phase-space learning mechanism, and the matched-budget sharp-front result.

\begin{figure}[pos=t!]
\centering
\includegraphics[width=\linewidth]{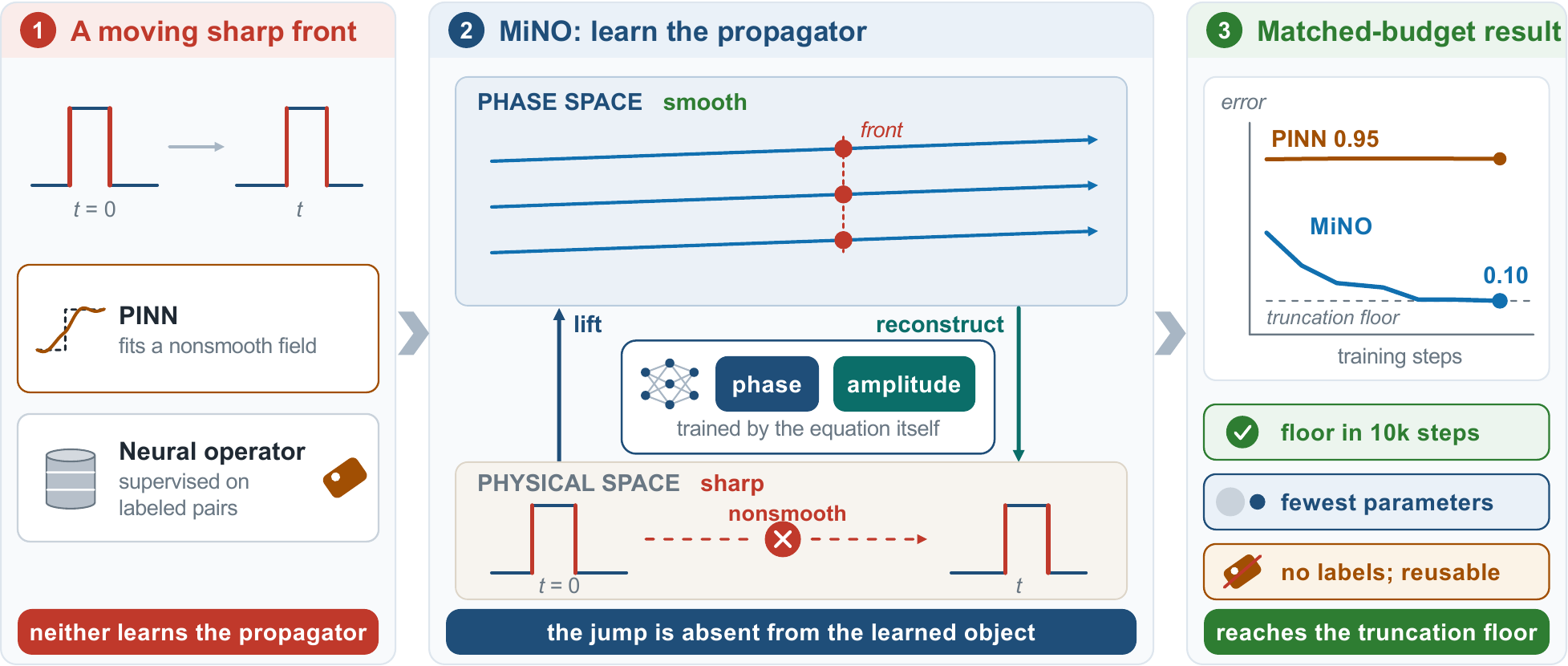}
\caption{Graphical overview of the MiNO paradigm. Conventional field learning must fit a moving discontinuity in physical space, while supervised operator learning is trained on labeled pairs. MiNO instead learns smooth phase--amplitude propagator data in phase space and reconstructs the sharp front. The chart at the right sketches the matched-budget discontinuous-advection benchmark of Sec.~\ref{sec:exp-discont}, in which MiNO, using no labeled trajectories, settles near the analytic frequency-truncation tail scale while the NTK-balanced PINN stays near its initial error. The plotted MiNO curve and PINN level follow the measured means of Table~\ref{tab:learncurve} on a logarithmic error axis, drawn without numerical axes; that PINN has converged to the trivial solution $u\equiv0$ (Sec.~\ref{sec:exp-discont}).}
\label{fig:graphical-abstract}
\end{figure}

\paragraph{Contributions.}
\begin{enumerate}
\item A propagator-learning formulation and architecture. We make the propagator the learned object and introduce MiNO, whose time-factored parameterization satisfies the eikonal Cauchy datum and the amplitude normalization by construction, so neither enters the loss (Secs.~\ref{sec:targets} and~\ref{sec:mino}). Because the learned phase carries a nondiagonal canonical relation, the target admits transport of singularities, which no learned pseudo-differential symbol can represent (Remark~\ref{rem:pseudo}).

\item Certification of the learned geometry. Differentiated eikonal residuals bound the distance between the learned canonical relation and the exact one (Theorem~\ref{thm:canonical-residual} and Corollary~\ref{cor:canonical-uniform}). The certified object is the propagation geometry rather than a function, and that object has no counterpart in field or solution-map error analysis. The same estimates identify which residual norm certifies that geometry: The scalar residual alone does not (Remark~\ref{rem:residual-norm}).

\item Residual-to-error analysis of the reconstruction. We relate continuous phase and amplitude residuals to phase error, reconstruction error, and scalar-transport propagator error. A full-solution estimate then separates generator error from the frequency-truncation tail, including a closed-form tail for discontinuous data (Sec.~\ref{sec:theory}). That tail is $0.1028$ on the benchmark window, against a measured late-budget error of $0.103$, so the analysis predicts the level at which training stops paying.

\item Evaluation across propagation and reuse settings. Twelve studies show that the target reaches the accuracy floor its analysis predicts while a matched field baseline does not, that one generator serves five unseen initial conditions and a bank of generators drives inverse speed recovery, and that these results are obtained without labeled trajectories. The same construction carries to bidirectional waves, two variable-coefficient symbols, a point focus, and four-dimensional phase space (Sec.~\ref{sec:experiments}).
\end{enumerate}

Taken together, these contributions develop MiNO for linear Cauchy propagators through chartwise stability theory and one- and two-dimensional propagation experiments, and the contributions show that the difficulty of a neural PDE problem is set as much by the object one chooses to learn as by the equation one is given.

\section{Representational targets and propagator formulation}
\label{sec:targets}

We define the three targets and record the bicharacteristic kernel relation of the localized homogeneous FIO representation; Appendix~\ref{app:related} places each target against the corresponding literature. Figure~\ref{fig:schematic} summarizes the taxonomy.

\begin{figure}[pos=t!]
\centering
\includegraphics[width=\linewidth]{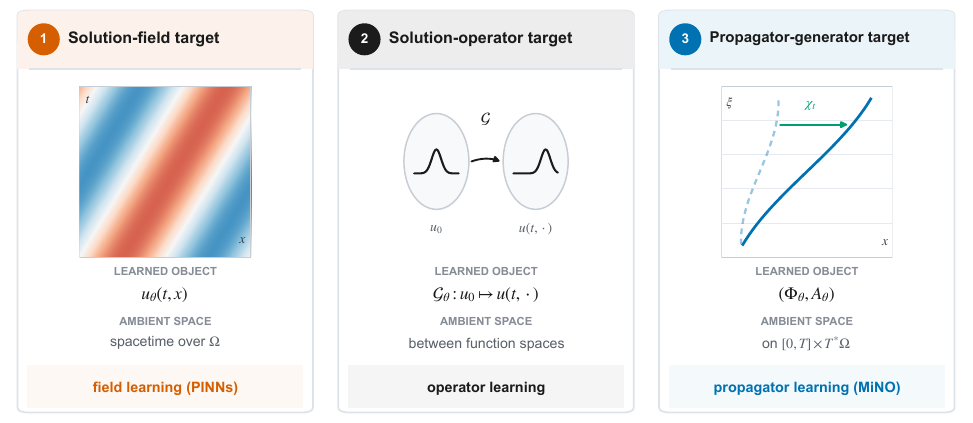}
\caption{The three representational targets. Field learning targets a solution field over space--time; operator learning targets a map between function spaces; and propagator learning targets chartwise phase--amplitude data on the time-parameterized cotangent bundle. In the admissible homogeneous hyperbolic setting of Proposition~\ref{prop:causal}, the phase generates the bicharacteristic canonical relation.}
\label{fig:schematic}
\end{figure}

\subsection{Solution-field learning}
\label{sec:targets-field}
PINNs learn $u_{\theta}(t,x)$. Physics enters through residuals, initial and boundary terms, causal or adaptive weights, or NTK balancing \citep{raissi2019pinn,wang2021gradient,wang2022causal,wang2022ntk, krishnapriyan2021failure}. The learned object is a field on the base manifold. Field learning asks whether a network can approximate the visible solution while being penalized for violating the equation. The penalty does not change the domain of the learned object. The field remains a function on $[0,T]\times\Omega$. Two difficulties of that domain are documented independently of any particular optimizer. Coordinate networks are biased toward low frequencies \citep{rahaman2019spectral,xu2020frequency} unless their inputs are lifted by Fourier features \citep{tancik2020fourier}, and the solution manifold of a convection-dominated problem has a slowly decaying Kolmogorov $n$-width, so no fixed low-dimensional linear subspace approximates that manifold uniformly in time \citep{mojgani2023lpinn}. The same slow $n$-width decay limits linear model reduction for transport-dominated problems \citep{ohlberger2016reduced}, and, for linear transport and wave problems, that width admits an explicit algebraic lower bound \citep{greif2019nwidth}.

\subsection{Solution-operator learning}
\label{sec:targets-op}
Neural operators learn $\mathcal{G}_{\theta}:u_{0}\mapsto u(t,\cdot)$ \citep{li2021fno,lu2021deeponet,kovachki2023neural}. Operator learning asks whether a network can amortize the input--output map of a PDE family from data. The learned object is a point in a space of maps between function spaces. In the supervised setting considered here, that map is trained on input--output pairs and, in widely used architectures, is represented by a composition of integral kernels and pointwise nonlinearities. The propagation geometry of the underlying equation is not explicitly parameterized; that geometry is encoded implicitly through the learned input--output map and its training distribution.

\subsection{Propagator learning}
\label{sec:targets-gen}
\paragraph{Problem setting and symbol conventions.} We make the third target precise. We consider linear Cauchy propagators on $\Omega=\R^{d}$, on $\Omega=\mathbb T^{d}$ with Fourier series, or in a localized coordinate chart with compact cutoffs. In Euclidean charts, we use the Fourier representation below. Consider the linear evolution problem
\begin{equation}
\label{eq:cauchy}
(D_t+p(x,D_x))u=0,\qquad u(0,\cdot)=u_0, \qquad D_t\coloneqq-i\partial_t,\quad D_x\coloneqq-i\nabla_x,
\end{equation}
where the $D=-i\partial$ convention fixes the signs in the eikonal equation and Hamiltonian flow. The notation $p(x,D_x)$ also requires a quantization convention. The scalar transport examples below use the left-quantized operator $c(x)D_x$, whereas the leading half-density formula uses the Weyl convention and makes its subprincipal symbol $p_{\mathrm{sub}}$ explicit \citep{zworski2012semiclassical}. Thus two operators with the same principal symbol need not have the same amplitude equation. We distinguish two settings. In the hyperbolic FIO setting, $p$ is real and positively homogeneous of degree one in $\xi$ away from the zero section. In the quadratic (metaplectic) setting, $p(x,\xi)=w^THw/2+\ell\cdot w+c_0$, with $w\coloneqq(x,\xi)$ and $H=H^T$ real, is a real quadratic Hamiltonian. Appendix~\ref{app:transport} treats the translation-invariant quadratic-multiplier subclass $p=p(\xi)$. Write $\wh{v}(\xi)\coloneqq(2\pi)^{-d/2}\int e^{-ix\cdot\xi}v(x)\dd x$ for the Euclidean Fourier transform. Appendix~\ref{app:notation} collects the symbols that recur throughout the paper.

\paragraph{Fourier-integral propagator representation.} For the third target, we recall the local structure of the propagator. In the homogeneous hyperbolic case, the propagator is locally an FIO \citep{hormander1971fio,duistermaat1996fio,hormander1985vol4}. Its geometric-optics prototype, an eikonal phase carrying an amplitude transported along rays, was constructed for oscillatory hyperbolic Cauchy data by \citet{lax1957asymptotic}. A global propagator generally requires a locally finite sum of phase charts, cutoffs, branch amplitudes, and Maslov transition factors. The transition factors are governed by the characteristic class on the Lagrangian Grassmannian introduced by \citet{arnold1967maslov}. Homogeneous quadratic Hamiltonians generate metaplectic operators; affine and constant terms add the corresponding Heisenberg translation and scalar phase. Metaplectic operators have a separate global phase-space $L^2$ theory \citep{folland1989phasespace,zworski2012semiclassical}, while the local wavefront result below applies to the homogeneous hyperbolic setting. On one fixed chart and frequency window, which is the form used in the experiments, a branch is written
\begin{equation}
\label{eq:fio}
u(t,x) = (F_{\phi,A}u_{0})(x) \coloneqq (2\pi)^{-d/2} \int_{\R^{d}} e^{i\phi(t,x,\xi)}A(t,x,\xi) \wh u_0(\xi)\dd\xi ,
\end{equation}
where compact chart cutoffs are included in $A$. On $\mathbb T^d$, the integral is replaced by the corresponding Fourier series. Globally, one sums expressions of the form in \eqref{eq:fio} over admissible branches; a branch weight may be either absorbed into its amplitude or factored out as an explicit external coefficient. Accordingly, the normalization $A(0)=1$ used below is the normalization of one dynamical branch before any explicit external branch weight, while the total branch sum must reproduce the prescribed Cauchy datum. In a real hyperbolic chart, $\phi$ is positively homogeneous of degree one in $\xi$ and solves the eikonal equation
\begin{equation}
\label{eq:eikonal}
\partial_{t}\phi + p(x,\nabla_{x}\phi) = 0, \qquad \phi(0,x,\xi) = x\cdot\xi ,
\end{equation}
and $A$ solves the associated chart transport equation along the Hamiltonian flow of $p$. The pair $(\phi,A)$ is a chart representation on the cotangent bundle $\Tst=\{(x,\xi)\}$; branch and Maslov data are part of the global generator.

\begin{definition}[Phase--amplitude generator target]
\label{def:gen}
The phase--amplitude generator target is the equivalence class of chartwise phase--amplitude data $(\phi,A)$ defined on $[0,T]\times\Tst$ by \eqrefs{eq:fio}{eq:eikonal}, together with the branch and Maslov transition data needed globally. These data generate the evolution through the oscillatory representation in \eqref{eq:fio}. We use the term generator for these representation data; $-ip(x,D_x)$ remains the infinitesimal semigroup generator of the evolution in \eqref{eq:cauchy}.
\end{definition}

\subsection{Comparison of the three targets}
\label{sec:targets-comparison}
The three targets occupy different ambient spaces: a field over space--time, a map between function spaces, and chartwise phase--amplitude data on the time-parameterized cotangent bundle. Table~\ref{tab:targets} sets the three side by side. The third makes the canonical geometry of the equation explicit through \eqref{eq:eikonal}.

\begin{table}[pos=t!]
\centering
\caption{Attributes of the three paradigms: learned object, ambient space, training signal, and output.}
\label{tab:targets}
\small
\begin{tabular*}{\linewidth}{@{\extracolsep{\fill}}lllll@{}}
\toprule
Paradigm & Learned object & Space & Training signal & Output \\
\midrule
PINN & $u_{\theta}(t,x)$ & base manifold & PDE residual & solution field \\
Neural operator & $\mathcal{G}_{\theta}$ & function-space map & supervised pairs & solution map \\
MiNO & $(\Phi_{\theta},A_{\theta})$ & cotangent bundle & eikonal and transport & propagator reconstruction \\
\bottomrule
\end{tabular*}
\end{table}

\subsection{Bicharacteristic kernel relation and locality}
\label{sec:targets-kernel}
The following proposition identifies the kernel relation of the localized homogeneous hyperbolic FIO representation, for an admissible exact phase. Its proof and the proofs of the results of Sec.~\ref{sec:theory} are collected in Appendix~\ref{app:proofdetails}.

\begin{proposition}[Bicharacteristic kernel relation]
\label{prop:causal}
Let $p\in C^\infty(T^*\Omega\setminus0;\R)$ be positively homogeneous of degree one in $\xi$, and let $\phi$ be a real, degree-one homogeneous admissible phase solving \eqref{eq:eikonal} in a localized hyperbolic FIO chart. For a fixed time $t$ in that chart, let $A(t,x,\xi)$ be a smooth classical amplitude symbol supported in a conic frequency region away from the zero section. Suppose that $\partial^2_{x\xi}\phi$ is nonsingular there and that the resulting localized operator $F_{\phi,A}(t)$ is properly supported. Let $\mathcal K_t$ be its Schwartz kernel. Then
\begin{equation}
\WF'(\mathcal K_t) \subseteq \{(x,\xi_{x};y,\xi_{y}) : (x,\xi_{x}) = \chi_{t}(y,\xi_{y})\},
\end{equation}
where $\chi_t$ is the time-$t$ Hamiltonian flow of $p$.
\end{proposition}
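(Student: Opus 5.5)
Our approach has two parts. First, we apply the standard wavefront bound for oscillatory integral kernels (Hörmander's theorem on Fourier integral distributions). Second, we identify the resulting critical set with the graph of the Hamiltonian flow by the method of characteristics for the eikonal equation \eqref{eq:eikonal}.

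\textbf{Step 1: the kernel as an oscillatory integral.} Inserting the definition of $\wh u_0$ into \eqref{eq:fio}, the kernel of $F_{\phi,A}(t)$ is
\[
\mathcal K_t(x,y)=(2\pi)^{-d}\int e^{i(\phi(t,x,\xi)-y\cdot\xi)}A(t,x,\xi)\dd\xi .
\]
This is an oscillatory integral with phase $\Psi(x,y,\xi)=\phi(t,x,\xi)-y\cdot\xi$. Because $\phi$ is real and homogeneous of degree one, $\Psi$ is homogeneous of degree one in $\xi$. The derivative $\partial_y\Psi=-\xi$ never vanishes on the conic support of $A$, which lies away from the zero section, so $\Psi$ has no critical points in $(x,y,\xi)$ with $\xi\neq0$ there and is a phase function in Hörmander's sense. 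The classical wavefront bound for such integrals gives
\[
\WF(\mathcal K_t)\subseteq\{(x,y;\partial_x\Psi,\partial_y\Psi):\partial_\xi\Psi=0,\ \xi\in\operatorname{conesupp}A\}.
\]
Equivalently, since $\WF'$ flips the sign of the $y$-covariable,
\[
\WF'(\mathcal K_t)\subseteq\{(x,\nabla_x\phi;\ \nabla_\xi\phi,\xi)\}.
\]
Proper support ensures the kernel defines an operator on distributions, so no extra wavefront appears at infinity. The nondegeneracy condition $\det\partial^2_{x\xi}\phi\neq0$ makes $C_\phi=\{(x,\nabla_x\phi;\nabla_\xi\phi,\xi)\}$ the graph of a local canonical transformation $(y,\eta)\mapsto(x,\xi_x)$, defined implicitly by $y=\nabla_\xi\phi(t,x,\eta)$ and $\xi_x=\nabla_x\phi(t,x,\eta)$.

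\textbf{Step 2: $C_\phi$ is the graph of $\chi_t$.} This step carries the real content, and we expect it to be the main obstacle. Fix $(x,\eta)$ and set $y(t)=\nabla_\xi\phi(t,x,\eta)$ and $\xi_x(t)=\nabla_x\phi(t,x,\eta)$. At $t=0$ we have $\phi=x\cdot\xi$, so $C_\phi$ is the diagonal and $\chi_0=\mathrm{id}$. Rather than tracking the map directly, we show that its inverse follows the backward flow. Fix $(y,\eta)$ and define $x(t)$ implicitly by $\nabla_\xi\phi(t,x(t),\eta)=y$; this is solvable by the implicit function theorem because $\partial^2_{x\xi}\phi$ is invertible. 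Differentiating in $t$ gives
\[
\partial_t\nabla_\xi\phi+\partial^2_{\xi x}\phi\,\dot x=0 .
\]
Differentiating \eqref{eq:eikonal} in $\xi$ gives
\[
\partial_t\nabla_\xi\phi=-\partial^2_{\xi x}\phi\,\partial_\xi p(x,\nabla_x\phi).
\]
Together these yield $\dot x=\partial_\xi p(x,\xi_x)$, again using invertibility of $\partial^2_{x\xi}\phi$. For the covariable $\xi_x(t)=\nabla_x\phi(t,x(t),\eta)$, the chain rule and the $x$-derivative of \eqref{eq:eikonal} give
\[
\dot\xi_x=-\partial_x p-\partial_\xi p\cdot\partial^2_{xx}\phi+\partial^2_{xx}\phi\,\dot x=-\partial_x p(x,\xi_x).
\]
So $(x(t),\xi_x(t))$ solves Hamilton's equations with initial datum $(y,\eta)$, which gives $(x,\xi_x)=\chi_t(y,\eta)$. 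Uniqueness of ODE solutions, together with connectedness of $[0,t]$ within the chart where the phase is admissible, completes the identification. The points needing care are these:
\begin{itemize}
\item the implicit solution $x(t)$ must remain in the chart up to time $t$, which is where the admissibility and localization hypotheses are used;
\item homogeneity must be preserved, so that the conic support statement is meaningful.
\end{itemize}

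\textbf{Step 3: conclusion.} Substituting Step 2 into the bound from Step 1 gives the stated inclusion $\WF'(\mathcal K_t)\subseteq\operatorname{graph}\chi_t$. We would also remark that the amplitude $A$ enters only through its conic support. Consequently the transport equation is not needed for this inclusion; it only affects the principal symbol on the relation.
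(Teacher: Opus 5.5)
Your proposal is correct and follows the same route as the paper. Both arguments apply the Hörmander wavefront bound to the kernel written as an oscillatory integral with phase $\phi(t,x,\xi)-y\cdot\xi$, and then identify the critical relation $\{(x,\nabla_x\phi;\nabla_\xi\phi,\xi)\}$ with $\operatorname{graph}\chi_t$ through the characteristics of the eikonal equation; the only difference is that you derive that identification explicitly, by differentiating \eqref{eq:eikonal} in $\xi$ and $x$, where the paper simply cites the method of characteristics.
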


\begin{remark}[Locality of the graph condition]
\label{rem:local}
The mixed-Hessian condition identifies a local graph for the propagator canonical relation $\mathcal C_t$ in the chosen phase chart. Caustics are degeneracies of the base projection of a propagated solution Lagrangian. For a chosen oscillatory initial datum with initial Lagrangian $\Lambda_{S_0}$, write $\Lambda_t^{S_0}\coloneqq\mathcal C_t\circ\Lambda_{S_0}\subset T^*\Omega$. A caustic is a degeneracy of $\pi_x:\Lambda_t^{S_0}\to\Omega$, or equivalently of a specified stationary branch of the reconstruction. In an adapted stationary-phase chart, a Hessian determinant represents its projection Jacobian up to a nonvanishing factor; this representation depends on the chosen chart. The two conditions can fail independently. In the Airy generating-family normal form $x\xi-\xi^3/3$, $\partial^2_{x\xi}\phi=1$ at the fold while $\partial^2_{\xi\xi}\phi=0$. This local normal form separates the two geometric nondegeneracy conditions and is distinct from a degree-one homogeneous propagator phase. Proposition~\ref{prop:causal} treats the local operator chart; Sec.~\ref{sec:theory-caustic} treats the projection of a specified solution Lagrangian.
\end{remark}

The same proposition separates the present target from its closest learning-based neighbor, in which a network parameterizes a pseudo-differential symbol and the solution operator remains the target \citep{shin2024pdno}.

\begin{remark}[A learned symbol cannot move a wavefront set]
\label{rem:pseudo}
A pseudo-differential operator $P$ carries the fixed identity phase $(x-y)\cdot\xi$, so its canonical relation is the diagonal, and $P$ is microlocal: $\WF(Pu)\subseteq\WF(u)$ for every symbol \citep{hormander1985vol3}. The restriction is structural rather than a matter of capacity. No choice of learned symbol moves a wavefront set, so this family cannot represent the transport of singularities at all, however the symbol network is trained. In Proposition~\ref{prop:causal} the propagator relation is instead the graph of $\chi_t$, and transport of singularities is exactly its nondiagonality. Learning the phase is therefore what admits the phenomenon; the amplitude then supplies the frequency-dependent strength along the same geometry.
\end{remark}

\section{Microlocal neural operators}
\label{sec:mino}

MiNO instantiates propagator learning. The single design principle is the time-factored parameterization of the phase increment and chart amplitude, so that the eikonal Cauchy datum and amplitude normalization are imposed by construction rather than by penalty. The construction comprises the symbol and its Hamiltonian flow, the training residuals, the phase and amplitude parameterizations, the reconstruction quadrature, and the symbol-dependent procedure that turns a written symbol into the training problem.

\subsection{Principal symbol and Hamiltonian flow}
\label{sec:mino-flow}
The construction starts from the principal symbol $p(x,\xi)$ of \eqref{eq:cauchy}. Its Hamiltonian vector field on $\Tst$ is
\begin{equation}
\label{eq:hamflow}
\dot{x} = \nabla_{\xi}p(x,\xi), \qquad \dot{\xi} = -\nabla_{x}p(x,\xi),
\end{equation}
and the bicharacteristics are the integral curves of \eqref{eq:hamflow} lifted to the characteristic hypersurface $\{(t,x,\tau,\xi):\allowbreak\tau+p(x,\xi)=0\}$. Three consequences of \eqref{eq:hamflow} are relevant to the subsequent analysis. First, the projections of the bicharacteristics to $\Omega$ are the rays along which the eikonal equation in \eqref{eq:eikonal} is solved by the method of characteristics; this correspondence is the identification used in the proof of Proposition~\ref{prop:causal}. Second, the $\xi$-component of \eqref{eq:hamflow} is driven by $\nabla_{x}p$, so for a spatially homogeneous symbol $p=p_{0}(\xi)$, the frequency is conserved along rays and the phase is exactly $x\cdot\xi-tp_{0}(\xi)$; under the parameterization in \eqref{eq:phase}, the exact phase-increment target is therefore $h^*(t,x,\xi)=-p_0(\xi)$. This spatially homogeneous case is the calibration regime tested in Sec.~\ref{sec:exp-smooth}. Third, for a specified ray-parameterized solution Lagrangian $x=x(t;y)$, the relevant projection Jacobian is
\begin{align*}
J_{\mathrm{proj}}(t,y)\coloneqq\det\frac{\partial x(t;y)}{\partial y}.
\end{align*}
In an adapted stationary-phase normal form, a phase Hessian represents this Jacobian up to a nonvanishing factor at the critical point. Remark~\ref{rem:local} records the chart dependence of that representation, and Sec.~\ref{sec:theory-caustic} illustrates the normal-form geometry. The method derives directly from the propagation geometry. The curves along which the phase is constrained are the bicharacteristics of $p$.

\subsection{The training principle}
\label{sec:mino-train}
\paragraph{Eikonal and transport residuals.} Let $\Phi_{\theta}$ and $A_{\theta}$ be parameterized families on $[0,T]\times\Tst$. Here $\xi$ is the initial Fourier (fiber) label of the mixed generating function, not an independently evolving current momentum; the current covector is $\zeta\coloneqq\nabla_x\Phi_\theta(t,x,\xi)$. Define
\begin{equation}
\label{eq:Reik}
\mathcal{R}_{\mathrm{eik}}[\Phi_{\theta}](t,x,\xi) \coloneqq \partial_{t}\Phi_{\theta} + p(x,\nabla_{x}\Phi_{\theta}), \qquad v_\theta(t,x,\xi) \coloneqq\nabla_\zeta p(x,\nabla_x\Phi_\theta(t,x,\xi)).
\end{equation}
The transport residual is chosen consistently with the target PDE, in a scalar or a Wentzel--Kramers--Brillouin (WKB) half-density form:
\begin{equation}
\label{eq:Rtr}
\mathcal R_{\mathrm{tr}}\coloneqq
\begin{cases}
\partial_tA_\theta+c(x)\partial_xA_\theta, & \text{scalar advection},\\
\partial_tA_\theta+v_\theta\cdot\nabla_xA_\theta +\tfrac12\operatorname{div}_x(v_\theta)A_\theta +ip_{\mathrm{sub}}(x,\nabla_x\Phi_\theta)A_\theta, & \text{half-density WKB, Weyl convention}.
\end{cases}
\end{equation}
In the second line, the target PDE supplies the quantization convention and the corresponding subprincipal symbol $p_{\mathrm{sub}}$; the quadratic Hamiltonians analyzed here have $p_{\mathrm{sub}}=0$. The divergence differentiates $x\mapsto\nabla_\zeta p(x,\nabla_x\Phi_\theta)$ at fixed fiber label $\xi$; equivalently, its geometric term is
\begin{align*}
\tfrac12\operatorname{tr}\left( p_{x\zeta}(x,\nabla_x\Phi_\theta) +p_{\zeta\zeta}(x,\nabla_x\Phi_\theta)\nabla_x^2\Phi_\theta \right)A_\theta.
\end{align*}
Thus for $p(\zeta)=\zeta^TH\zeta/2+\ell\cdot\zeta+c_0$ with $H=H^T$, the geometric term is $\operatorname{tr}(H\nabla_x^2\Phi_\theta)A_\theta/2$; that term reduces to $\Delta_x\Phi_\theta A_\theta/2$ when $H$ is the identity. The exact unit-scale Schr\"odinger amplitude equation for its Weyl quantization with $p_{\mathrm{sub}}=0$ adds $i\operatorname{tr}(H\nabla_x^2A_\theta)/2$ on the right-hand side, an $O(\varepsilon)$ correction in semiclassical scaling. Additional lower-order symbols enter according to their chosen quantization. The second line of \eqref{eq:Rtr} is therefore a leading WKB residual and is exact in quadratic charts with $x$-independent principal amplitude, including the multiplier case $\Phi=x\cdot\xi-tp(\xi)$, $A\equiv1$, and $p_{\mathrm{sub}}=0$. The scalar advection experiments use $\partial_tA+c\partial_xA=0$; adding $c'(x)A/2$ would instead target the half-density equation $\partial_tu+c\partial_xu+c'u/2=0$.

These formulas transport a chart amplitude restricted to the Lagrangian and expressed using the initial label $\xi$; a free function on $T^*\Omega$ follows the full Hamiltonian derivative. All trained experiments use the constant-amplitude specialization $A^*=1$ with $p_{\mathrm{sub}}=0$, for which the chart and experimental residuals have the same exact zero target. Appendix~\ref{app:transport} derives the symbol-specific reductions, and Appendix~\ref{app:experimental-residual} records the experimental specialization.

\paragraph{Training objective.} With $\Phi_\theta(0,x,\xi)=x\cdot\xi$ and, on the normalized initial identity chart before prescribed cutoffs or external branch weights, $A_\theta(0,x,\xi)=1$, the propagator-learning objective is
\begin{equation}
\label{eq:objective}
\mathcal{J}(\theta) \coloneqq \norm{\mathcal{R}_{\mathrm{eik}}[\Phi_{\theta}]}^{2}_{L^{2}(\mu)} + \lambda_{\mathrm{tr}} \norm{\mathcal{R}_{\mathrm{tr}}[\Phi_{\theta},A_{\theta}]}^{2}_{L^{2}(\mu)} + \lambda_{0}\mathcal{B}_{0}(\theta),
\end{equation}
with $\mu$ a sampling measure on $[0,T]\times\Tst$ and $\mathcal{B}_{0}$ an initial-condition term that is identically zero under the parameterization of Sec.~\ref{sec:mino-param}; the term also covers parameterizations that impose the Cauchy data through a penalty. The defining feature relative to solution learning is that \eqrefs{eq:Reik}{eq:Rtr} are equations on $\Tst$ for the generator, not equations on $\Omega$ for the field.

Theorem~\ref{thm:hj-phase-stability} converts a continuous phase residual into a compact-chart phase estimate for a general real Hamilton--Jacobi phase, and Corollary~\ref{cor:hj-reconstruction} combines that estimate with the time-slice amplitude error to control reconstruction. Theorem~\ref{thm:transport-residual} and Corollary~\ref{cor:transport-residual-l2} convert the continuous $L^2$ phase and amplitude residuals into an output-operator estimate for scalar transport. All four are stated for continuous residual norms.

Both ingredients of that conversion are available for the generator target. Tanh networks admit explicit simultaneous approximation bounds in higher-order Sobolev norms \citep{deryck2021tanh}, so a smooth chart generator is approximable in the norms these estimates use, and the estimates then return solution and canonical-geometry error. For a field target on a convection-dominated problem, the corresponding route through fixed low-dimensional linear approximation is constrained by the algebraic $n$-width lower bound recorded in Sec.~\ref{sec:targets-field}. That separation is a statement about approximation classes, not about the behavior of the optimizer.

\paragraph{Collocation sampling.} The sampling measure $\mu$ is a design variable in its own right. For residual collocation on the base manifold, a systematic comparison of nonadaptive and residual-based adaptive distributions shows that this choice materially affects accuracy \citep{wu2023sampling}. Here $\mu$ is the fixed product measure of Appendix~\ref{app:expdetails}, resampled each step using nonadaptive phase-space sampling. In experiments that enable online balancing, the residual weights are balanced by the NTK rule of \citet{wang2022ntk}; this balancing rule belongs to the optimizer, not to the representation.

\subsection{Phase and amplitude parameterizations}
\label{sec:mino-param}

\paragraph{Phase network.} The phase is parameterized in time-factored form,
\begin{equation}
\label{eq:phase}
\Phi_{\theta}(t,x,\xi)\coloneqq x\cdot\xi+th_{\theta}(t,x,\xi).
\end{equation}
The factor $t$ enforces $\Phi_{\theta}(0,\cdot,\cdot)=x\cdot\xi$ identically, so the eikonal Cauchy datum in \eqref{eq:eikonal} is satisfied exactly for every $\theta$ and never enters the loss; the initial-condition term $\mathcal{B}_{0}$ in \eqref{eq:objective} is then identically zero for the phase. The network directly parameterizes the full phase increment: $h_\theta=(\Phi_\theta-x\cdot\xi)/t$ represents the full time-averaged phase increment, whose smooth extension is understood at $t=0$. For a spatially homogeneous symbol, this increment is also the instantaneous phase rate, with exact target $h^*=-p_0(\xi)$.

On the finite window, the network parameterization is a numerical oscillatory ansatz. Degree-one homogeneity in $\xi$ can be imposed separately, \eg, as $h_\theta\coloneqq\abs{\xi}q_\theta(t,x,\xi/\abs{\xi})$ on conic charts, or holds in the exact zero-residual limit; the homogeneous wavefront theorem then applies.

The phase-increment network $h_{\theta}$ uses a modified multilayer perceptron (MLP) with multiplicative skip connections in the form stabilized for residual training \citep{wang2023expert} and sinusoidal Fourier features \citep{tancik2020fourier}; Appendix~\ref{app:expdetails} gives its full configuration. The network learns the geometry of propagation through the full phase increment. The features act on the generator, whose oscillation in $\xi$ is carried by the explicit exponential in \eqref{eq:fio} rather than by the network output.

\paragraph{Amplitude network.} Within the interior of each experimental chart, after any prescribed localization cutoff and explicit external branch weight have been factored out, the nonvanishing amplitude magnitude is parameterized multiplicatively, $B_{\theta}\coloneqq\exp(t\alpha_{\theta})$ with real $\alpha_\theta$, so $B_{\theta}(0,\cdot,\cdot)=1$ and positivity are enforced by construction. We write $A_\theta\coloneqq m_\nu B_\theta$, where $\abs{m_\nu}=1$ carries a fixed Maslov or chart-transition phase. On the initial identity chart, $m_\nu=1$, so $A_\theta(0)=1$. At a chart crossing, $m_\nu$ is applied as a chartwise transition factor; one such factor is $e^{-i\pi\nu/2}$, with $\nu$ the Maslov index of the crossing \citep{arnold1967maslov}. The total coefficient in \eqref{eq:fio} may vanish with its prescribed localization cutoff; the positivity property applies to $B_\theta$ in the chart interior. The present experiments use known $m_\nu$ on charts with zero subprincipal symbol.

The transport equation in \eqref{eq:Rtr} constrains $\alpha_{\theta}$ through its logarithmic derivative. Substituting $B_{\theta}=\exp(t\alpha_{\theta})$ and dividing by the nonzero $B_\theta$ gives a linear equation for $\alpha_\theta$ and its first derivatives, with the divergence term as an additive source. This algebraic normalization is the motivation for the parameterization. The amplitude network $\alpha_{\theta}$ uses the same modified-MLP family; Appendix~\ref{app:expdetails} gives its configuration. That network learns the smooth chart amplitude in phase space. Reduction of the uniform oscillatory integral to separate base-space branches subsequently produces the distinct branchwise stationary-phase coefficient.

The experiments parameterize deviations from their constant exact target as $A_\theta=1+t\gamma_\theta$, which enforces $A_\theta(0)=1$. The exponential model above additionally preserves positivity for general nonvanishing chart amplitudes; Appendix~\ref{app:experimental-residual} states the experimental specialization.

\subsection{Fourier-integral reconstruction}
\label{sec:mino-recon}
Given $(\Phi_{\theta},A_{\theta})$, the solution is recovered by \eqref{eq:fio}. MiNO outputs the generator, from which the solution is synthesized by the oscillatory integral. On a bounded frequency window, write the nonexponential factor as $g(t,x,\xi)\coloneqq A_{\theta}(t,x,\xi)\wh u_0(\xi)$ and the oscillatory factor as $e^{i\Phi_{\theta}(t,x,\xi)}$, whose local frequency in $\xi$ is $\partial_{\xi}\Phi_{\theta}$. Quadrature regularity is datum-dependent. For a general $u_0\in L^2$, its Fourier transform need only belong to $L^2$. The datum determines the location and regularity of $\wh u_0$; increasing a carrier wavenumber shifts its spectrum and may require a larger active window while leaving the target phase unchanged.

A composite trapezoidal rule resolves the integral when the phase and $g$ have sufficient regularity on the chosen bounded window and the frequency grid is fine enough for both. The experiments use this rule by default and use a piecewise-linear-phase Filon-type rule when specified \citep{iserles2005oscillatory,deano2018oscillatory}, which interpolates both the phase and the smooth factor linearly on each frequency panel; its panel formula is stated in Appendix~\ref{app:expdetails}. On a fixed frequency window and for the tested phases, that option reduces the observed dependence on the absolute oscillation count.

The error still depends on phase derivatives and stationary points, the regularity of $A_\theta\wh u_0$, the window size, and the omitted frequency tail; if the datum spectrum leaves the fixed window, the window or node count must change. The asymptotic behavior of Filon-type and stationary-phase-aware rules, including the degradation at a stationary point of the phase, is treated systematically by \citet{deano2018oscillatory}, and a rule matched to that analysis is the natural replacement for the default trapezoidal reconstruction. The trained advection and wave runs use the composite trapezoidal rule; the point-focus diagnostic of Sec.~\ref{sec:exp-caustic} uses the optional Filon rule. Node counts and windows are run-specific (Appendix~\ref{app:expdetails}).

\subsection{Symbol-dependent residual construction and operator-like reuse}
\label{sec:mino-symbol-residual}
\paragraph{Symbol-specific residual construction.} The objective in \eqref{eq:objective} depends on the symbol $p$ only through the differential operators that appear in \eqrefs{eq:Reik}{eq:Rtr}: the spatial gradient $\nabla_{x}\Phi_{\theta}$ inside $p(x,\nabla_{x}\Phi_{\theta})$, the group velocity evaluated at the current covector, and, for the half-density convention, its $x$-divergence and the specified subprincipal symbol. Given a symbolic expression for $p(x,\xi)$, the target-PDE normalization, and the quantization convention, symbolic differentiation yields the corresponding eikonal and transport residuals. Substitution of the exact homogeneous free-symbol generator supplies a constant-coefficient consistency check. For variable-coefficient transport with $p(x,\xi)=c(x)\xi$ in one dimension, this construction forms $\nabla_{x}\Phi_{\theta}=\partial_{x}\Phi_{\theta}$, substitutes into $p$ to get $c(x)\partial_{x}\Phi_{\theta}$, computes the group velocity $\nabla_{\xi}p=c(x)$, and uses the scalar normalization of the target advection equation. The resulting residuals are
\begin{align*}
\mathcal{R}_{\mathrm{eik}} = \partial_{t}\Phi_{\theta}+c(x)\partial_{x}\Phi_{\theta}, \qquad \mathcal{R}_{\mathrm{tr}} = \partial_{t}A_{\theta}+c(x)\partial_{x}A_{\theta}.
\end{align*}
The exact variable-coefficient characteristic phase $\Phi(t,x,\xi)=X(0;t,x)\xi$, where $\partial_sX(s;t,x)=c(X(s;t,x))$ and $X(t;t,x)=x$, with $A\equiv1$, annihilates these scalar residuals. If half-density transport is selected instead under the zero-subprincipal Weyl convention, the amplitude residual includes $c'(x)A_\theta/2$, thereby targeting the different PDE stated after \eqref{eq:Rtr}; a nonzero supplied subprincipal symbol adds its corresponding term. For the pure transport symbol $p(x,\zeta)=c(x)\zeta$, no separate $\partial_x^2\Phi_\theta A_\theta/2$ is appended. The geometric term is obtained once from the full divergence in \eqref{eq:Rtr}. That divergence is symbol-specific. For example, the mixed symbol $p(x,\zeta)=c(x)\zeta+\zeta^2/2$ gives $\operatorname{div}_x(v_\theta)=c'(x)+\partial_x^2\Phi_\theta$, so both contributions occur inside the single divergence term.

\paragraph{Operator-like reuse across initial conditions.} Different initial conditions enter only through $\wh u_0$ in \eqref{eq:fio}, so a generator trained for a given symbol applies to new data without retraining. MiNO therefore provides operator-like reuse across initial conditions under a fixed symbol. This reuse follows from the linearity of \eqref{eq:fio} in $\wh u_0$ and differs from amortized operator learning from input--output pairs. The distinction is tested directly in Sec.~\ref{sec:exp-multiic}.

\section{Theory of propagator learning}
\label{sec:theory}

The bicharacteristic relation for the localized homogeneous FIO target is Proposition~\ref{prop:causal}. Phase and reconstruction stability on a compact chart is developed in Sec.~\ref{sec:theory-phase}, residual control of the learned canonical graph in Sec.~\ref{sec:theory-canonical}, and scalar-transport reconstruction and frequency-tail separation in Secs.~\ref{sec:theory-transport} and~\ref{sec:theory-tail}. A fold normal-form illustration in Sec.~\ref{sec:theory-caustic} distinguishes a uniform oscillatory representation from its branchwise reduction. The proofs are collected in Appendix~\ref{app:proofdetails}.

One of these results certifies an object that previous error analyses do not address. Theorem~\ref{thm:canonical-residual} bounds the distance between the learned canonical relation and the exact one by differentiated eikonal residuals, so what is certified is the propagation geometry rather than a function. Residual-to-error and post-training certification estimates for neural PDE approximations have been developed for other equation classes and solution-field targets \citep{shin2020convergence,deryck2022error,mishra2023estimates, liu2023residual,eiras2024certification}, and approximation-theoretic bounds are available on the operator-learning side in terms of the regularity of the target solution operator and of the encoding and reconstruction maps that surround that operator \citep{kovachki2021universal,lanthaler2022deeponet}. All of these bound the error of a learned function; a geometric defect has no counterpart there.

The remaining results connect residuals to the reconstruction. The first controls phase error from the residual and then controls reconstruction from that phase estimate together with the amplitude error for a general real Hamilton--Jacobi chart. The scalar-transport results then control the complete phase--amplitude reconstruction and give a continuous-residual full-solution bound that includes the frequency tail, with numerical quadrature entering as an additional error component.

\subsection{Hamilton--Jacobi residual-to-phase and reconstruction stability}
\label{sec:theory-phase}
\label{sec:theory-residual}

The scalar value of the eikonal residual and its derivatives play different roles. The value controls the phase that appears in the oscillatory exponential, whereas its derivatives control the canonical graph. We first fix the common setting for a phase estimate and its two consequences.

Let $G\subset\R^{2d}$ be open, let $p\in C^2(G;\R)$, and fix $t\in[0,T]$, a bounded measurable output set $R\Subset\R^d$, and a compact frequency window $K\Subset\R^d$. Let $\phi$ and $\widetilde\phi$ be real $C^2$ phases on a common chart containing the tubes below, with
\begin{align*}
\partial_s\phi+p(z,\nabla_z\phi)=0, \qquad r\coloneqq\partial_s\widetilde\phi+p(z,\nabla_z\widetilde\phi).
\end{align*}
Put $e\coloneqq\widetilde\phi-\phi$ and assume that every line segment between $\nabla_z\phi(s,z,\xi)$ and $\nabla_z\widetilde\phi(s,z,\xi)$ remains in the fiber of $G$. Define the averaged velocity
\begin{equation}
\label{eq:averaged-velocity}
b(s,z,\xi)\coloneqq\int_0^1\nabla_\zeta p( z,\nabla_z\phi(s,z,\xi)+\lambda\nabla_z e(s,z,\xi) )\dd\lambda .
\end{equation}
For every $(x,\xi)\in R\times K$, assume that the terminal-value flow
\begin{equation}
\label{eq:averaged-flow}
\frac{\dd}{\dd s}Y_b(s;t,x,\xi) =b(s,Y_b(s;t,x,\xi),\xi), \qquad Y_b(t;t,x,\xi)=x,
\end{equation}
exists on $[0,t]$ and is a $C^1$ diffeomorphism in $x$. Set
\begin{align*}
R_s^b\coloneqq \{(Y_b(s;t,x,\xi),\xi):x\in R,\xi\in K\}, \qquad \kappa_b\coloneqq\norm{\operatorname{div}_z b}_{L^\infty(\cup_{0\le s\le t} \{s\}\times R_s^b)}.
\end{align*}

\begin{theorem}[Hamilton--Jacobi residual-to-phase stability]
\label{thm:hj-phase-stability}
In the preceding setting,
\begin{equation}
\label{eq:hj-phase-bound}
\norm{e(t)}_{L^2(R\times K)} \le \mathfrak P_t(e(0),r),
\end{equation}
where
\begin{equation}
\label{eq:phase-residual-functional}
\mathfrak P_t(e(0),r)\coloneqq e^{\kappa_b t/2}\norm{e(0)}_{L^2(R_0^b)} +\int_0^t e^{\kappa_b(t-s)/2} \norm{r(s)}_{L^2(R_s^b)}\dd s.
\end{equation}
\end{theorem}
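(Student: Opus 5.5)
Subtracting the two Hamilton--Jacobi equations linearizes the problem exactly. By the fundamental theorem of calculus along the segment $\nabla_z\phi+\lambda\nabla_z e$, which stays in the fiber of $G$ by hypothesis,
\begin{align*}
p(z,\nabla_z\widetilde\phi)-p(z,\nabla_z\phi)=b(s,z,\xi)\cdot\nabla_z e(s,z,\xi),
\end{align*}
with $b$ the averaged velocity of \eqref{eq:averaged-velocity}. Hence $e$ solves the linear transport equation $\partial_s e+b\cdot\nabla_z e=r$, and the fiber label $\xi$ enters only as a parameter. I will therefore prove the estimate for each fixed $\xi\in K$ and integrate over $K$ at the end.

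Along the terminal-value characteristics \eqref{eq:averaged-flow}, set $E(s;x,\xi)\coloneqq e(s,Y_b(s;t,x,\xi),\xi)$. Then $\frac{\dd}{\dd s}E=r(s,Y_b(s),\xi)$, so
\begin{align*}
E(t)=e(0,Y_b(0),\xi)+\int_0^t r(s,Y_b(s),\xi)\dd s .
\end{align*}
Taking the $L^2$ norm over $(x,\xi)\in R\times K$ and applying Minkowski's integral inequality gives
\begin{align*}
\norm{e(t)}_{L^2(R\times K)}\le\norm{e(0,Y_b(0;t,\cdot,\cdot),\cdot)}_{L^2(R\times K)}+\int_0^t\norm{r(s,Y_b(s;t,\cdot,\cdot),\cdot)}_{L^2(R\times K)}\dd s .
\end{align*}
Each term is a composition with the flow map $x\mapsto Y_b(s;t,x,\xi)$, which is a $C^1$ diffeomorphism onto the slice of $R^b_s$ over $\xi$.

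To pass from the composed norms to $L^2(R^b_s)$, I change variables $z=Y_b(s;t,x,\xi)$. Liouville's formula gives
\begin{align*}
J(s)\coloneqq\det\partial_x Y_b=\exp\Bigl(-\int_s^t\operatorname{div}_z b(\sigma,Y_b(\sigma),\xi)\dd\sigma\Bigr),
\end{align*}
so $\dd x=J(s)^{-1}\dd z$ and $J(s)^{-1}\le e^{\kappa_b(t-s)}$, because the trajectory segment on $[s,t]$ lies in $\bigcup_\sigma\{\sigma\}\times R^b_\sigma$ where $\abs{\operatorname{div}_z b}\le\kappa_b$. Consequently
\begin{align*}
\norm{f(Y_b(s;t,\cdot,\cdot),\cdot)}_{L^2(R\times K)}\le e^{\kappa_b(t-s)/2}\norm{f}_{L^2(R^b_s)} .
\end{align*}
Applying this with $f=e(0)$ at $s=0$ and with $f=r(s)$ inside the integral yields exactly $\mathfrak P_t(e(0),r)$ in \eqref{eq:phase-residual-functional}.

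\textbf{Main obstacle.} The delicate step is this Jacobian bookkeeping. The sign convention in Liouville's formula for a terminal-value flow must be fixed correctly, and one must check that the bound uses $\operatorname{div}_z b$ only on the transported tube, which is precisely how $\kappa_b$ is defined. I would first verify the Liouville identity by differentiating $\det\partial_xY_b$ in $s$ via Jacobi's formula, using $\partial_s\partial_xY_b=(\nabla_z b)\,\partial_xY_b$ and the terminal value $J(t)=1$.

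A secondary point is regularity: $b$ is only $C^1$ in $z$ since $p\in C^2$ and $\phi,\widetilde\phi\in C^2$. This suffices for the chain rule along characteristics, given the assumed $C^1$ diffeomorphism property, and measurability in $\xi$ justifies the Fubini step over $K$.
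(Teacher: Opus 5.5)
Your proposal is correct and follows essentially the same route as the paper: linearize exactly via the averaged velocity $b$ to get $\partial_s e+b\cdot\nabla_z e=r$, integrate along the terminal-value characteristics, apply Minkowski's integral inequality, and bound each pulled-back norm by a Liouville-formula change of variables. The only difference is cosmetic. You write Liouville's formula for the Jacobian of $x\mapsto Y_b(s;t,x,\xi)$ with a negative exponent, whereas the paper writes it for the inverse map $z\mapsto Y_b(t;s,z,\xi)$ with a positive exponent, and the two are equivalent.
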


\begin{corollary}[Reconstruction stability on a compact chart]
\label{cor:hj-reconstruction}
In the setting preceding Theorem~\ref{thm:hj-phase-stability}, let $A,\widetilde A$ be complex amplitudes at time $t$, with $A\in L^\infty(R\times K)$ and $\widetilde A-A\in L^2(R\times K)$. If $F^K_{\phi,A}$ and $F^K_{\widetilde\phi,\widetilde A}$ denote the reconstructions in \eqref{eq:fio} truncated to $K$, then
\begin{equation}
\label{eq:hj-reconstruction-bound}
\norm{F^K_{\widetilde\phi,\widetilde A}(t) -F^K_{\phi,A}(t)}_{L^2(\R^d)\to L^2(R)} \le (2\pi)^{-d/2}\left[ \norm{\widetilde A(t)-A(t)}_{L^2(R\times K)} +\norm{A(t)}_{L^\infty(R\times K)} \mathfrak P_t(e(0),r)\right].
\end{equation}
\end{corollary}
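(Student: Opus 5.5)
We prove Corollary~\ref{cor:hj-reconstruction} by splitting the difference of the two truncated reconstructions into an amplitude term and a phase term, and bounding each as an integral operator with square-integrable kernel, \ie\ by a Hilbert--Schmidt-type estimate. For $u_0\in L^2(\R^d)$ and $x\in R$,
\begin{align*}
(F^K_{\widetilde\phi,\widetilde A}-F^K_{\phi,A})u_0(x)
=(2\pi)^{-d/2}\int_K \Bigl[e^{i\widetilde\phi}(\widetilde A-A) + \bigl(e^{i\widetilde\phi}-e^{i\phi}\bigr)A\Bigr](t,x,\xi)\,\wh u_0(\xi)\dd\xi .
\end{align*}
Each bracketed term defines an operator $\wh u_0\mapsto\int_K k(x,\xi)\wh u_0(\xi)\dd\xi$ from $L^2(K)$ to $L^2(R)$.

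For each such operator, Cauchy--Schwarz in $\xi$ followed by integration in $x$ over $R$ gives operator norm at most $\norm{k}_{L^2(R\times K)}$. Plancherel gives $\norm{\wh u_0}_{L^2(K)}\le\norm{u_0}_{L^2(\R^d)}$, so the two kernels are bounded separately.

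\textbf{Amplitude term.} Since $\widetilde\phi$ is real, $\abs{e^{i\widetilde\phi}}=1$. The kernel norm is therefore exactly $\norm{\widetilde A(t)-A(t)}_{L^2(R\times K)}$, which is finite by hypothesis.

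\textbf{Phase term.} We use the elementary bound $\abs{e^{ia}-e^{ib}}\le\abs{a-b}$ for real $a,b$. This gives the pointwise estimate $\abs{(e^{i\widetilde\phi}-e^{i\phi})A}\le\norm{A(t)}_{L^\infty(R\times K)}\abs{e(t,x,\xi)}$, so the kernel norm is at most $\norm{A(t)}_{L^\infty}\norm{e(t)}_{L^2(R\times K)}$. Theorem~\ref{thm:hj-phase-stability} then bounds $\norm{e(t)}_{L^2(R\times K)}$ by $\mathfrak P_t(e(0),r)$; this is the only point where the standing hypotheses on the averaged flow are used.

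Adding the two bounds with the prefactor $(2\pi)^{-d/2}$ and taking the supremum over $\norm{u_0}_{L^2}\le1$ yields \eqref{eq:hj-reconstruction-bound}.

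The proof has no genuine obstacle once Theorem~\ref{thm:hj-phase-stability} is available. Two routine points need checking. First, the kernels must be measurable and square-integrable on $R\times K$; this follows from $R$ being bounded, $K$ compact, and the stated hypotheses $A\in L^\infty$ and $\widetilde A-A\in L^2$. Second, the truncation to $K$ must be applied on the Fourier side, so that Plancherel applies. The substantive step is the phase estimate, which we simply invoke.
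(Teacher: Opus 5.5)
Your proposal is correct and follows essentially the paper's own argument. It uses the same splitting $e^{i\widetilde\phi}\widetilde A-e^{i\phi}A=e^{i\widetilde\phi}(\widetilde A-A)+(e^{i\widetilde\phi}-e^{i\phi})A$, the bound $\abs{e^{i\widetilde\phi}-e^{i\phi}}\le\abs{e}$ for real phases, the Hilbert--Schmidt bound on the operator norm after the unitary Fourier transform, and Theorem~\ref{thm:hj-phase-stability} for $\norm{e(t)}_{L^2(R\times K)}$. The only cosmetic difference is that you bound the Hilbert--Schmidt norms of the two kernels separately, whereas the paper bounds the combined kernel pointwise and then applies the triangle inequality in $L^2(R\times K)$.
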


Thus the phase contribution in \eqref{eq:hj-reconstruction-bound} is controlled by the Hamilton--Jacobi residual, while the amplitude discrepancy is retained as the separate time-slice term $\norm{\widetilde A(t)-A(t)}_{L^2(R\times K)}$.

\begin{corollary}[Phase bound for exact Cauchy data]
\label{cor:hj-exact-data}
In the setting preceding Theorem~\ref{thm:hj-phase-stability}, suppose the phase Cauchy datum is exact. Then
\begin{equation}
\label{eq:hj-exact-data}
\mathfrak P_t(0,r) \le \Gamma_{\kappa_b}(t) \norm{r}_{L^2(\mathcal Q_b(t))}, \qquad \mathcal Q_b(t)\coloneqq\{(s,z,\xi):(z,\xi)\in R_s^b,0<s<t\},
\end{equation}
where
\begin{equation}
\label{eq:gamma-definition}
\Gamma_\kappa(t)\coloneqq
\begin{cases}
((e^{\kappa t}-1)/\kappa)^{1/2},&\kappa>0,\\
t^{1/2},&\kappa=0.
\end{cases}
\end{equation}
\end{corollary}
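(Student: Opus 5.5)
With $e(0)=0$ the first term of $\mathfrak P_t$ in \eqref{eq:phase-residual-functional} drops out, and
\begin{align*}
\mathfrak P_t(0,r)=\int_0^t e^{\kappa_b(t-s)/2}\norm{r(s)}_{L^2(R_s^b)}\dd s .
\end{align*}
The claim then reduces to a single Cauchy--Schwarz inequality in the time variable $s$ on $(0,t)$.

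Pair the weight $e^{\kappa_b(t-s)/2}$ against the residual slice norm $\norm{r(s)}_{L^2(R_s^b)}$:
\begin{align*}
\mathfrak P_t(0,r)\le\Bigl(\int_0^t e^{\kappa_b(t-s)}\dd s\Bigr)^{1/2}\Bigl(\int_0^t\norm{r(s)}^2_{L^2(R_s^b)}\dd s\Bigr)^{1/2}.
\end{align*}

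Evaluate the first factor by the substitution $\sigma=t-s$, which gives $\int_0^t e^{\kappa_b\sigma}\dd\sigma$. For $\kappa_b>0$ this equals $(e^{\kappa_b t}-1)/\kappa_b$. For $\kappa_b=0$ it equals $t$. The definition of $\kappa_b$ as an $L^\infty$ norm of a divergence forces $\kappa_b\ge0$, so these two cases are exhaustive, and the square root is exactly $\Gamma_{\kappa_b}(t)$ from \eqref{eq:gamma-definition}.

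For the second factor, identify $\int_0^t\norm{r(s)}^2_{L^2(R_s^b)}\dd s$ with $\norm{r}^2_{L^2(\mathcal Q_b(t))}$. This is Fubini/Tonelli applied to the nonnegative function $|r|^2\mathbf 1_{\mathcal Q_b(t)}$ on $(0,t)\times\R^{2d}$: its $s$-slice is exactly $R_s^b$.

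The only point that needs care, and the mild main obstacle, is the measurability of $\mathcal Q_b(t)$, so that Tonelli applies. I would get it from the setting preceding Theorem~\ref{thm:hj-phase-stability}. There $Y_b$ is assumed $C^1$ and a diffeomorphism in $x$, so $(s,x,\xi)\mapsto(s,Y_b(s;t,x,\xi),\xi)$ is continuous and injective in $x$. Hence $\mathcal Q_b(t)$ is the image of the measurable set $(0,t)\times R\times K$ under a locally bi-Lipschitz map, and it is therefore measurable.

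If $r\notin L^2(\mathcal Q_b(t))$, the right-hand side is infinite and the bound holds trivially.
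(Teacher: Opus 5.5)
Your proof is correct and follows the paper's argument: with $e(0)=0$ the initial term drops, Cauchy--Schwarz in $s$ is applied, and $\bigl(\int_0^t e^{\kappa_b(t-s)}\dd s\bigr)^{1/2}$ is recognized as $\Gamma_{\kappa_b}(t)$. Your Tonelli identification of $\int_0^t\norm{r(s)}^2_{L^2(R_s^b)}\dd s$ with $\norm{r}^2_{L^2(\mathcal Q_b(t))}$, the measurability of $\mathcal Q_b(t)$, and the observation $\kappa_b\ge0$ are details the paper leaves implicit.
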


Theorem~\ref{thm:hj-phase-stability} and Corollary~\ref{cor:hj-reconstruction} are compact-chart stability results valid independently of positive homogeneity and mixed-Hessian nondegeneracy; Proposition~\ref{prop:causal} supplies the separate wavefront statement. The results also distinguish the roles of a scalar residual and its derivatives. The scalar residual controls the phase value in \eqref{eq:hj-reconstruction-bound}, while the differentiated residual controls the generated canonical graph in the next result.

\subsection{Eikonal-residual control of the canonical geometry}
\label{sec:theory-canonical}

We next separate the two trajectorywise mechanisms behind canonical-graph control. Let $G\subset\R^{2d}$ be open, let $p\in C^2(G;\R)$, and write
\begin{align*}
H_p(x,\zeta)\coloneqq(\nabla_\zeta p(x,\zeta), -\nabla_xp(x,\zeta)).
\end{align*}
Assume that $H_p$ is $L$-Lipschitz on a region containing all trajectories used below. Let $\widetilde\phi\in C^2([0,T]\times U\times V;\R)$ satisfy
\begin{align*}
\widetilde\phi(0,x,\xi)=x\cdot\xi, \qquad r\coloneqq\partial_t\widetilde\phi +p(x,\nabla_x\widetilde\phi).
\end{align*}
Fix $(t,x,\xi)$ in the phase chart and suppose that the terminal-value problem
\begin{equation}
\label{eq:approx-char}
\dot{\widetilde x}(s)= \nabla_\zeta p(\widetilde x(s), \nabla_x\widetilde\phi(s,\widetilde x(s),\xi)), \qquad \widetilde x(t)=x,
\end{equation}
exists for $0\le s\le t$ and remains in the phase chart, and that the exact comparison trajectories remain in the stated Lipschitz region. Set
\begin{align*}
\widetilde\zeta(s)\coloneqq \nabla_x\widetilde\phi(s,\widetilde x(s),\xi), \qquad y_0\coloneqq\widetilde x(0), \qquad \widetilde y\coloneqq\nabla_\xi\widetilde\phi(t,x,\xi).
\end{align*}

\begin{lemma}[Forced characteristic and footpoint estimates]
\label{lem:canonical-characteristic}
In the preceding setting,
\begin{align}
&\left| (x,\nabla_x\widetilde\phi(t,x,\xi)) -\chi_t(y_0,\xi) \right| \le \int_0^t e^{L(t-s)} \left|\nabla_xr(s,\widetilde x(s),\xi)\right|\dd s,
\label{eq:forced-flow-bound}\\
&\abs{\widetilde y-y_0} \le\int_0^t \left|\nabla_\xi r(s,\widetilde x(s),\xi)\right|\dd s.
\label{eq:footpoint-bound}
\end{align}
\end{lemma}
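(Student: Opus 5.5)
The plan is to differentiate the eikonal residual along the approximate characteristic in \eqref{eq:approx-char}. Doing so shows that the pair $(\widetilde x(s),\widetilde\zeta(s))$ solves Hamilton's equations \eqref{eq:hamflow} with a forcing term equal to $\nabla_x r$. Likewise, the mixed quantity $\eta(s)\coloneqq\nabla_\xi\widetilde\phi(s,\widetilde x(s),\xi)$ has derivative exactly $\nabla_\xi r$. The two estimates then follow from a Gr\"onwall comparison and a direct integration, respectively. Throughout, the regularity needed is that $\widetilde\phi\in C^2$, so that $\partial_s\nabla_x\widetilde\phi=\nabla_x\partial_s\widetilde\phi$ and $\partial_s\nabla_\xi\widetilde\phi=\nabla_\xi\partial_s\widetilde\phi$ are continuous. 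Together with $p\in C^2$, this makes $\nabla_xr$ and $\nabla_\xi r$ meaningful; I will read them through these identities.

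\emph{Step 1 (forced Hamilton system).} Differentiating $r=\partial_t\widetilde\phi+p(x,\nabla_x\widetilde\phi)$ in $x$ gives
\begin{align*}
\nabla_x r=\partial_s\nabla_x\widetilde\phi+\nabla_xp(x,\nabla_x\widetilde\phi)+\nabla_x^2\widetilde\phi\,\nabla_\zeta p(x,\nabla_x\widetilde\phi).
\end{align*}
By the chain rule and \eqref{eq:approx-char}, we have $\dot{\widetilde\zeta}=\partial_s\nabla_x\widetilde\phi+\nabla_x^2\widetilde\phi\,\dot{\widetilde x}$, and $\dot{\widetilde x}=\nabla_\zeta p(\widetilde x,\widetilde\zeta)$. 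Subtracting yields
\begin{align*}
\dot{\widetilde x}=\nabla_\zeta p(\widetilde x,\widetilde\zeta),\qquad \dot{\widetilde\zeta}=-\nabla_xp(\widetilde x,\widetilde\zeta)+\nabla_xr(s,\widetilde x(s),\xi),
\end{align*}
that is, $\tfrac{\dd}{\dd s}(\widetilde x,\widetilde\zeta)=H_p(\widetilde x,\widetilde\zeta)+(0,\nabla_xr)$. The Cauchy datum gives $\nabla_x\widetilde\phi(0,y,\xi)=\xi$, so $(\widetilde x(0),\widetilde\zeta(0))=(y_0,\xi)$. This is the same initial point as the exact trajectory $\chi_s(y_0,\xi)$.

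\emph{Step 2 (Gr\"onwall).} Let $w(s)\coloneqq(\widetilde x(s),\widetilde\zeta(s))-\chi_s(y_0,\xi)$, so that $w(0)=0$. Both trajectories lie in the Lipschitz region by hypothesis. Hence $\tfrac{\dd}{\dd s}\abs{w}\le L\abs{w}+\abs{\nabla_xr(s,\widetilde x(s),\xi)}$, to be handled in integrated form to avoid nondifferentiability of $\abs{w}$ at zero. Gr\"onwall's inequality then gives $\abs{w(t)}\le\int_0^te^{L(t-s)}\abs{\nabla_xr}\dd s$. Since $w(t)=(x,\nabla_x\widetilde\phi(t,x,\xi))-\chi_t(y_0,\xi)$, this is \eqref{eq:forced-flow-bound}.

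\emph{Step 3 (footpoint).} Differentiating the residual in $\xi$ gives
\begin{align*}
\nabla_\xi r=\partial_s\nabla_\xi\widetilde\phi+(\nabla_\xi\nabla_x\widetilde\phi)^T\nabla_\zeta p(x,\nabla_x\widetilde\phi).
\end{align*}
Along the curve, $\dot\eta=\partial_s\nabla_\xi\widetilde\phi+(\nabla_x\nabla_\xi\widetilde\phi)\dot{\widetilde x}$, and after substituting \eqref{eq:approx-char} this equals $\nabla_\xi r(s,\widetilde x(s),\xi)$. The datum $\widetilde\phi(0)=x\cdot\xi$ gives $\eta(0)=\widetilde x(0)=y_0$, while $\eta(t)=\widetilde y$. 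Integrating from $0$ to $t$ and taking norms yields \eqref{eq:footpoint-bound}. No exponential factor appears here, because the $\xi$-derivative of the residual enters the evolution of $\eta$ additively, with no feedback term.

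The main obstacle is bookkeeping rather than analysis. One must check carefully that the transposes and index contractions in the two chain-rule identities cancel exactly, including symmetry of $\nabla_x^2\widetilde\phi$ and the transpose in the mixed Hessian. One must also justify the regularity: $\nabla_xr$ involves third-order-looking terms $\partial_s\nabla_x\widetilde\phi$, which are only second derivatives, so $C^2$ suffices. I would first verify Step 1 componentwise in index notation and then reuse the same computation with $\xi_j$ in place of $x_j$ for Step 3.
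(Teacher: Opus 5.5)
Your proposal is correct and follows the paper's proof essentially step for step. The $x$-gradient of the residual turns $(\widetilde x,\widetilde\zeta)$ into a forced Hamiltonian trajectory starting at $(y_0,\xi)$, which Gr\"onwall then compares with $\chi_s(y_0,\xi)$; the $\xi$-gradient gives the exact identity $\frac{\mathrm d}{\mathrm ds}\nabla_\xi\widetilde\phi(s,\widetilde x(s),\xi)=\nabla_\xi r(s,\widetilde x(s),\xi)$, which is integrated from $y_0$ to $\widetilde y$. Your remarks on the integrated form of Gr\"onwall and on $C^2$ regularity, which makes $r\in C^1$ and lets mixed partials commute, correctly fill in details the paper leaves implicit.
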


\begin{theorem}[Eikonal-residual control of the canonical graph]
\label{thm:canonical-residual}
In the setting preceding Lemma~\ref{lem:canonical-characteristic}, suppose additionally that $\partial^2_{x\xi}\widetilde\phi$ is nonsingular in the chart. Then the output point generated by the approximate phase satisfies
\begin{equation}
\label{eq:canonical-defect}
\left| (x,\nabla_x\widetilde\phi(t,x,\xi)) -\chi_t(\widetilde y,\xi) \right| \le \int_0^t e^{L(t-s)} \left|\nabla_xr(s,\widetilde x(s),\xi)\right|\dd s +e^{Lt}\int_0^t \left|\nabla_\xi r(s,\widetilde x(s),\xi)\right|\dd s.
\end{equation}
\end{theorem}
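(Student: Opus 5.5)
The plan is to reduce Theorem~\ref{thm:canonical-residual} to Lemma~\ref{lem:canonical-characteristic} by a triangle inequality through the intermediate point $\chi_t(y_0,\xi)$, and to control the change of footpoint by the Lipschitz stability of the Hamiltonian flow. We write
\begin{align*}
\left|(x,\nabla_x\widetilde\phi(t,x,\xi))-\chi_t(\widetilde y,\xi)\right|
\le \left|(x,\nabla_x\widetilde\phi(t,x,\xi))-\chi_t(y_0,\xi)\right|
+\left|\chi_t(y_0,\xi)-\chi_t(\widetilde y,\xi)\right|.
\end{align*}
The first term is exactly the left side of \eqref{eq:forced-flow-bound}. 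It is therefore bounded by the first integral in \eqref{eq:canonical-defect}.

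For the second term we use Gr\"onwall. Since $H_p$ is $L$-Lipschitz on a region containing the exact comparison trajectories, two exact trajectories $z_1(s)=\chi_s(y_0,\xi)$ and $z_2(s)=\chi_s(\widetilde y,\xi)$ satisfy
\begin{align*}
\tfrac{\dd}{\dd s}\abs{z_1-z_2}\le L\abs{z_1-z_2}.
\end{align*}
Hence $\abs{\chi_t(y_0,\xi)-\chi_t(\widetilde y,\xi)}\le e^{Lt}\abs{(y_0,\xi)-(\widetilde y,\xi)}=e^{Lt}\abs{y_0-\widetilde y}$. Applying \eqref{eq:footpoint-bound} then yields the second integral, $e^{Lt}\int_0^t\abs{\nabla_\xi r}\dd s$. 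Adding the two bounds gives \eqref{eq:canonical-defect}.

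The role of the mixed-Hessian hypothesis is to make $\widetilde y=\nabla_\xi\widetilde\phi(t,x,\xi)$ a genuine local footpoint coordinate. By the implicit function theorem, $x\mapsto\widetilde y$ is then a local diffeomorphism, so the pair $(\widetilde y,\xi)\mapsto(x,\nabla_x\widetilde\phi)$ defines the learned canonical graph. This makes the estimate a statement about the distance between that graph and the graph of $\chi_t$. The estimate itself does not use invertibility quantitatively, so I would record this identification as a remark inside the proof.

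The main obstacle is a domain issue. The segment of trajectories $\chi_s(\lambda y_0+(1-\lambda)\widetilde y,\xi)$, or at least the two exact trajectories, must stay in the Lipschitz region so that Gr\"onwall applies. The standing assumption that ``the exact comparison trajectories remain in the stated Lipschitz region'' is read as covering the trajectory issued from $(\widetilde y,\xi)$. I would make this explicit rather than derive it. If a convexity issue arose, I would avoid it by comparing only the two trajectories directly, which needs only the Lipschitz bound along them.
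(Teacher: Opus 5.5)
Your proposal is correct and follows essentially the same route as the paper's proof. It uses the triangle inequality through $\chi_t(y_0,\xi)$, Gronwall stability of the exact flow giving $\abs{\chi_t(\widetilde y,\xi)-\chi_t(y_0,\xi)}\le e^{Lt}\abs{\widetilde y-y_0}$, and the two estimates of Lemma~\ref{lem:canonical-characteristic}, with the mixed-Hessian hypothesis used only to identify the generated relation as a local canonical graph, exactly as the paper remarks.
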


\begin{corollary}[Uniform canonical-graph residual bound]
\label{cor:canonical-uniform}
Under the hypotheses of Theorem~\ref{thm:canonical-residual}, let $\mathcal D_t$ be the set of $(x,\xi)$ for which its trajectory hypotheses hold. If the indicated residual derivatives are uniformly bounded, every point of the generated relation
\begin{align*}
\widetilde{\mathcal C}_t\coloneqq \left\{ (x,\nabla_x\widetilde\phi(t,x,\xi); \nabla_\xi\widetilde\phi(t,x,\xi),\xi) :(x,\xi)\in\mathcal D_t \right\}
\end{align*}
lies within output-phase-space distance
\begin{equation}
\label{eq:canonical-uniform}
\delta_t\coloneqq \frac{e^{Lt}-1}{L}\norm{\nabla_xr}_{L^\infty} +t e^{Lt}\norm{\nabla_\xi r}_{L^\infty}
\end{equation}
of $\operatorname{graph}\chi_t$, where the first coefficient is interpreted as $t$ when $L=0$, and both $L^\infty$ norms are taken over the union of the approximate trajectories associated with $(x,\xi)\in\mathcal D_t$.
\end{corollary}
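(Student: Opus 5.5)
The plan is to obtain Corollary~\ref{cor:canonical-uniform} pointwise from Theorem~\ref{thm:canonical-residual} and then bound the two trajectory integrals in \eqref{eq:canonical-defect} uniformly.

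Fix $(x,\xi)\in\mathcal D_t$. The hypotheses of Theorem~\ref{thm:canonical-residual} hold at this point: the approximate characteristic \eqref{eq:approx-char} exists on $[0,t]$ and stays in the chart, and the exact comparison trajectories stay in the Lipschitz region. The theorem therefore gives
\begin{align*}
\left|(x,\nabla_x\widetilde\phi(t,x,\xi))-\chi_t(\widetilde y,\xi)\right|
\le \int_0^t e^{L(t-s)}\left|\nabla_xr(s,\widetilde x(s),\xi)\right|\dd s
+e^{Lt}\int_0^t\left|\nabla_\xi r(s,\widetilde x(s),\xi)\right|\dd s ,
\end{align*}
with $\widetilde y=\nabla_\xi\widetilde\phi(t,x,\xi)$.

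Every point $(s,\widetilde x(s),\xi)$ lies on an approximate trajectory associated with some element of $\mathcal D_t$. So each integrand is dominated by the corresponding $L^\infty$ norm taken over the union of those trajectories. Two elementary integrals then remain:
\begin{align*}
\int_0^t e^{L(t-s)}\dd s=\frac{e^{Lt}-1}{L}\quad(\text{equal to } t \text{ when } L=0),
\qquad
e^{Lt}\int_0^t\dd s=te^{Lt}.
\end{align*}
Their sum gives exactly $\delta_t$ in \eqref{eq:canonical-uniform}.

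It remains to convert this into a distance statement. The generated point of $\widetilde{\mathcal C}_t$ is $(x,\nabla_x\widetilde\phi;\widetilde y,\xi)$. The point $(\chi_t(\widetilde y,\xi);\widetilde y,\xi)$ lies on $\operatorname{graph}\chi_t$ and has the same input coordinates $(\widetilde y,\xi)$. The two points therefore differ only in the output phase-space component, and by the bound above that difference is at most $\delta_t$. Hence the distance from the generated point to the graph, measured in output phase space, is at most $\delta_t$. Since $(x,\xi)\in\mathcal D_t$ was arbitrary, this holds for every point of $\widetilde{\mathcal C}_t$.

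The only delicate point is measurability and domination. The $L^\infty$ norms must be read as suprema over the trajectory union, not almost-everywhere norms, so that they bound the integrand pointwise along each curve. Because $r\in C^1$ when $\widetilde\phi\in C^2$, the residual derivatives are continuous, and the essential supremum over a set with nonempty interior controls them pointwise. If the union of trajectories is too thin for this, I would instead take the norms as suprema over that union, which is what "uniformly bounded" in the statement provides. The $L=0$ case is the limit of the first coefficient and needs no separate argument.
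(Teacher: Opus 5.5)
Your proposal is correct and matches the paper's proof, which applies the bound of Theorem~\ref{thm:canonical-residual} at each $(x,\xi)\in\mathcal D_t$, dominates the integrands by the uniform derivative bounds, and evaluates the exponential integral. You add two details the paper leaves implicit: you name $(\chi_t(\widetilde y,\xi);\widetilde y,\xi)$ as the comparison point on $\operatorname{graph}\chi_t$, and you read the $L^\infty$ norms as suprema over the possibly measure-zero trajectory union.
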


Theorem~\ref{thm:canonical-residual} gives the trajectorywise estimate, and Corollary~\ref{cor:canonical-uniform} turns that estimate into uniform control of the geometric object carried by the phase.

\begin{remark}[Which residual norm certifies which object]
\label{rem:residual-norm}
The two statements identify the residual norm the geometry requires, and that norm differs from the norm the objective in \eqref{eq:objective} reports. A small scalar eikonal residual, without control of its $x$- and $\xi$-derivatives, does not by itself place the canonical graph near $\operatorname{graph}\chi_t$. Adding to the phase a function of time alone changes the scalar residual while leaving the graph unchanged, and the differentiated residual in \eqref{eq:canonical-defect} reflects that invariance correctly, whereas the scalar one does not. A reported training loss therefore certifies the reconstructed field, through Corollary~\ref{cor:hj-reconstruction}, but certifies the propagation geometry only through the differentiated norm. The higher-order Sobolev approximation bounds cited in Sec.~\ref{sec:mino-train} supply the approximation route to that norm.
\end{remark}

\subsection{Scalar-transport residual-to-propagator stability}
\label{sec:theory-transport}

Let $c\in C^1(\R^d;\R^d)$ generate a $C^1$ flow $X(s;t,x)$ on the tube considered below. Fix $t\in[0,T]$, a bounded measurable output set $R\Subset\R^d$, and a compact frequency window $K\Subset\R^d$. Put
\begin{align*}
R_s\coloneqq X(s;t,R),\quad 0\le s\le t, \qquad \kappa\coloneqq\sup_{0\le s\le t} \norm{\operatorname{div}c}_{L^\infty(R_s)}.
\end{align*}

\begin{lemma}[Characteristic-tube transport estimate]
\label{lem:tube-error}
Let $e$ be a real- or complex-valued $C^1$ function on the characteristic tube, set
\begin{align*}
r\coloneqq(\partial_s+c(z)\cdot\nabla_z)e, \qquad e_0(z,\xi)\coloneqq e(0,z,\xi),
\end{align*}
and define
\begin{equation}
\label{eq:residual-functional}
\mathfrak E_t(e_0,r)\coloneqq e^{\kappa t/2}\norm{e_0}_{L^2(R_0\times K)} +\int_0^t e^{\kappa(t-s)/2} \norm{r(s)}_{L^2(R_s\times K)}\dd s.
\end{equation}
Then
\begin{equation}
\label{eq:generator-error-from-residual}
\norm{e(t)}_{L^2(R\times K)}\le\mathfrak E_t(e_0,r).
\end{equation}
\end{lemma}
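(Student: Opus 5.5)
The plan is to integrate along characteristics and use the Liouville formula for the Jacobian of the flow. Fix $\xi\in K$ and work pointwise in $\xi$, since the operator $\partial_s+c\cdot\nabla_z$ does not act on $\xi$. Define the energy
\begin{align*}
E(s)\coloneqq\norm{e(s)}^2_{L^2(R_s\times K)}=\int_K\int_{R_s}\abs{e(s,z,\xi)}^2\dd z\dd\xi .
\end{align*}
We differentiate $E$ in $s$ by transporting the domain back to the fixed set $R$. Change variables $z=X(s;t,x)$ with $x\in R$, and write $J(s,x)\coloneqq\det\nabla_xX(s;t,x)$. The Liouville identity gives $\partial_sJ=(\operatorname{div}c)(X(s;t,x))\,J$ with $J(t,x)=1$, so $J>0$ on $[0,t]$.

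Along the curve $s\mapsto X(s;t,x)$, the chain rule gives $\frac{\dd}{\dd s}e(s,X(s;t,x),\xi)=r(s,X(s;t,x),\xi)$. Hence
\begin{align*}
E'(s)=\int_K\int_R\Bigl(2\operatorname{Re}\bigl(\bar e\,r\bigr)\circ X\;J+\abs{e\circ X}^2(\operatorname{div}c)\circ X\;J\Bigr)\dd x\dd\xi .
\end{align*}
Returning to $z$-variables and applying Cauchy--Schwarz with $\abs{\operatorname{div}c}\le\kappa$ on $R_s$ yields
\begin{align*}
E'(s)\le 2\sqrt{E(s)}\,\norm{r(s)}_{L^2(R_s\times K)}+\kappa E(s).
\end{align*}
Equivalently, one may use Reynolds' transport theorem on the moving domain $R_s\times K$, which gives the same identity $E'=\int(2\operatorname{Re}(\bar e r)+\abs{e}^2\operatorname{div}c)$. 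The $C^1$ flow hypothesis justifies differentiating under the integral on the bounded tube.

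Next set $N(s)\coloneqq\sqrt{E(s)}$. Where $N>0$, the inequality reads $N'\le\tfrac\kappa2N+\norm{r(s)}$. Grönwall (variation of constants) from $s=0$ to $s=t$ then gives
\begin{align*}
N(t)\le e^{\kappa t/2}N(0)+\int_0^te^{\kappa(t-s)/2}\norm{r(s)}_{L^2(R_s\times K)}\dd s .
\end{align*}
This is exactly $\mathfrak E_t(e_0,r)$, because $R_t=R$ and $N(0)=\norm{e_0}_{L^2(R_0\times K)}$.

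The only delicate point is the square root at zeros of $E$. To avoid it, I would apply the argument to $N_\varepsilon\coloneqq\sqrt{E+\varepsilon^2}$, which satisfies $N_\varepsilon'\le\tfrac\kappa2N_\varepsilon+\norm{r(s)}$ everywhere, and then let $\varepsilon\to0$. A secondary technical point is finiteness and measurability of $s\mapsto\norm{r(s)}$. This follows from continuity of $e$ and $\nabla e$ on the compact closure of the tube; if only boundedness of $R$ is available, I would approximate $R$ by compact subsets and use monotone convergence.
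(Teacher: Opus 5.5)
Your proof is correct, but it follows a different route from the paper's.

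The paper does not differentiate an energy. Instead it:
\begin{itemize}
\item writes the Duhamel representation along the backward characteristic, $e(t,x,\xi)=e_0(X(0;t,x),\xi)+\int_0^t r(s,X(s;t,x),\xi)\dd s$;
\item derives the pullback bound $\norm{f(X(s;t,\cdot),\cdot)}_{L^2(R\times K)}\le e^{\kappa(t-s)/2}\norm{f}_{L^2(R_s\times K)}$ from Liouville's formula $\abs{\det\partial_zX(t;s,z)}\le e^{\kappa(t-s)}$;
\item applies this bound to the initial term and to each time slice of the forcing;
\item concludes with Minkowski's integral inequality.
\end{itemize}
In that argument the exponential weights in $\mathfrak E_t$ come straight from the Jacobian bound, with no Gronwall step. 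It needs only the fundamental theorem of calculus along each curve. There is no differentiation of a moving-domain integral, no dominated-convergence justification, and no regularization at zeros of $E$, so your two technical paragraphs have no counterpart there.

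The paper's route also yields a reusable ingredient. The same pullback inequality controls $S_c(t)(I-P_K)u_0$ in Corollary~\ref{cor:continuous-full}. The same structure, with $b$ in place of $c$, proves Theorem~\ref{thm:hj-phase-stability}.

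Your energy route gives exactly the same constants. It proceeds through the Reynolds identity $\frac{\dd}{\dd s}\norm{e(s)}^2_{L^2(R_s\times K)}=\int(2\operatorname{Re}(\bar e\,r)+\abs{e}^2\operatorname{div}c)$, then $E'\le2\sqrt{E}\,\norm{r(s)}+\kappa E$, then Gronwall applied to $\sqrt{E+\varepsilon^2}$. What it buys is that it is an a priori $L^2$ estimate that never uses the solution formula. The characteristics serve only to move the domain, so the argument adapts to perturbed operators for which a closed Duhamel formula is less convenient. The price is the extra regularity bookkeeping you correctly identified: bounded derivatives on a compact tube, or exhaustion of $R$ by compact sets.
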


For the operator-level consequence, remain in the preceding characteristic-tube setting and put
\begin{align*}
\phi_c(s,z,\xi)\coloneqq X(0;s,z)\cdot\xi,
\end{align*}
so that $\phi_c$ is the exact scalar-transport phase and the exact amplitude is one. Let $\widetilde\phi$ be real-valued and let $\widetilde A$ be complex-valued, both $C^1$ on the characteristic tube, and define
\begin{align*}
r_\phi&\coloneqq\partial_s\widetilde\phi +c(z)\cdot\nabla_z\widetilde\phi, &e_{\phi,0}(z,\xi)&\coloneqq\widetilde\phi(0,z,\xi)-z\cdot\xi,\\
r_A&\coloneqq\partial_s\widetilde A +c(z)\cdot\nabla_z\widetilde A, &e_{A,0}(z,\xi)&\coloneqq\widetilde A(0,z,\xi)-1.
\end{align*}
Define the learned and exact truncated propagators by
\begin{align*}
\widetilde F^K(t)u_0(x) &\coloneqq(2\pi)^{-d/2}\int_K e^{i\widetilde\phi(t,x,\xi)}\widetilde A(t,x,\xi) \wh u_0(\xi)\dd\xi,\\
S_c^K(t)u_0(x) &\coloneqq(2\pi)^{-d/2}\int_K e^{i\phi_c(t,x,\xi)}\wh u_0(\xi)\dd\xi.
\end{align*}

\begin{theorem}[Residual-to-propagator stability for scalar transport]
\label{thm:transport-residual}
With the preceding definitions,
\begin{equation}
\label{eq:residual-operator-bound}
\norm{\widetilde F^K(t)-S_c^K(t)}_{L^2(\R^d)\to L^2(R)} \le(2\pi)^{-d/2} \left[\mathfrak E_t(e_{A,0},r_A) +\mathfrak E_t(e_{\phi,0},r_\phi)\right].
\end{equation}
\end{theorem}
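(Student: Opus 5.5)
Our plan is to write the operator difference as an integral operator with kernel supported on $R\times K$. We then bound its $L^2\to L^2(R)$ norm by the Hilbert--Schmidt norm of that kernel, and control the kernel by Lemma~\ref{lem:tube-error} applied separately to the amplitude and phase defects.

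First, for $u_0\in L^2(\R^d)$ and $x\in R$, write
\begin{align*}
(\widetilde F^K(t)-S_c^K(t))u_0(x)=(2\pi)^{-d/2}\int_K k(t,x,\xi)\,\wh u_0(\xi)\dd\xi,\qquad k\coloneqq e^{i\widetilde\phi}\widetilde A-e^{i\phi_c}.
\end{align*}
By Cauchy--Schwarz in $\xi$, integration over $x\in R$, and Plancherel ($\norm{\wh u_0}_{L^2}=\norm{u_0}_{L^2}$), the operator norm is at most $(2\pi)^{-d/2}\norm{k(t)}_{L^2(R\times K)}$.

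Next, split the kernel as
\begin{align*}
k=e^{i\widetilde\phi}(\widetilde A-1)+(e^{i\widetilde\phi}-e^{i\phi_c}).
\end{align*}
Since $\widetilde\phi$ is real, $\abs{e^{i\widetilde\phi}}=1$. Since $\abs{e^{ia}-e^{ib}}\le\abs{a-b}$ for real $a,b$, we get pointwise
\begin{align*}
\abs{k}\le\abs{\widetilde A-1}+\abs{\widetilde\phi-\phi_c},
\end{align*}
and hence, by the triangle inequality in $L^2(R\times K)$,
\begin{align*}
\norm{k(t)}\le\norm{e_A(t)}+\norm{e_\phi(t)},\qquad e_A\coloneqq\widetilde A-1,\quad e_\phi\coloneqq\widetilde\phi-\phi_c.
\end{align*}

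Finally, apply Lemma~\ref{lem:tube-error} to each defect on the characteristic tube.

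\textbf{Amplitude defect.} Constants are annihilated by $\partial_s+c\cdot\nabla_z$, so $(\partial_s+c\cdot\nabla_z)e_A=r_A$, with initial value $e_A(0)=e_{A,0}$.

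\textbf{Phase defect.} We must check that $\phi_c$ solves the homogeneous transport equation. The function $X(0;s,z)$ is constant along the characteristics $s\mapsto X(s;t_0,z_0)$, by the flow property $X(0;s,X(s;t_0,z_0))=X(0;t_0,z_0)$. Differentiating in $s$ therefore gives $(\partial_s+c\cdot\nabla_z)\phi_c=0$, using the $C^1$ regularity of the flow. It follows that $(\partial_s+c\cdot\nabla_z)e_\phi=r_\phi$. Also $\phi_c(0,z,\xi)=z\cdot\xi$, because $X(0;0,z)=z$, so $e_\phi(0)=e_{\phi,0}$.

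Lemma~\ref{lem:tube-error} then gives $\norm{e_A(t)}_{L^2(R\times K)}\le\mathfrak E_t(e_{A,0},r_A)$ and $\norm{e_\phi(t)}_{L^2(R\times K)}\le\mathfrak E_t(e_{\phi,0},r_\phi)$. Combining these with the first two steps yields \eqref{eq:residual-operator-bound}.

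The only delicate point is the verification that $\phi_c$ is annihilated by the transport operator, together with the regularity needed to apply the lemma to $e_\phi$. Here $e_\phi$ is $C^1$ because both $\widetilde\phi$ and the flow are $C^1$. This requires an appeal to the flow property and differentiability of $X$ in $(s,z)$, which is assumed on the tube. The measurability and finiteness issues are harmless because $R$ is bounded and $K$ is compact.
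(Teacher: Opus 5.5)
Your proposal is correct and follows the paper's proof essentially step for step. The paper likewise notes that $\phi_c$ is annihilated by $\partial_s+c\cdot\nabla_z$ with exact Cauchy datum, bounds $\abs{e^{i\widetilde\phi}\widetilde A-e^{i\phi_c}}\le\abs{e_A}+\abs{e_\phi}$ pointwise, controls the operator norm by the Hilbert--Schmidt norm of the kernel on $R\times K$, and closes with Lemma~\ref{lem:tube-error} applied to both defects; your derivation of the transport identity for $\phi_c$ from the flow property spells out what the paper calls the backward-flow identity.
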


\begin{corollary}[Propagator bound for exact Cauchy data]
\label{cor:transport-residual-l2}
Under the hypotheses of Theorem~\ref{thm:transport-residual}, suppose the phase and amplitude Cauchy data are imposed exactly, and let
\begin{align*}
\mathcal Q_t\coloneqq \{(s,z,\xi):0<s<t,z\in R_s,\xi\in K\}.
\end{align*}
Then
\begin{equation}
\label{eq:residual-operator-l2}
\norm{\widetilde F^K(t)-S_c^K(t)}_{L^2(\R^d)\to L^2(R)} \le(2\pi)^{-d/2}\Gamma_\kappa(t) \left(\norm{r_A}_{L^2(\mathcal Q_t)} +\norm{r_\phi}_{L^2(\mathcal Q_t)}\right),
\end{equation}
where $\Gamma_\kappa$ is defined in \eqref{eq:gamma-definition}.
\end{corollary}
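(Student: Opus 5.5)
The proof is a direct specialization of Theorem~\ref{thm:transport-residual}: with exact Cauchy data the initial terms vanish, and the remaining time integrals are converted into space--time $L^2$ norms by Cauchy--Schwarz. Since $\widetilde\phi(0,z,\xi)=z\cdot\xi$ and $\widetilde A(0,z,\xi)=1$, we have $e_{\phi,0}=0$ and $e_{A,0}=0$. Hence the functional in \eqref{eq:residual-functional} reduces, for $r\in\{r_A,r_\phi\}$, to
\begin{align*}
\mathfrak E_t(0,r)=\int_0^t e^{\kappa(t-s)/2}\norm{r(s)}_{L^2(R_s\times K)}\dd s .
\end{align*}
Substituting this into \eqref{eq:residual-operator-bound} leaves two integrals of the same form, which we treat identically.

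For each integral we apply Cauchy--Schwarz on $[0,t]$ with the weight $e^{\kappa(t-s)/2}$ and the function $s\mapsto\norm{r(s)}_{L^2(R_s\times K)}$:
\begin{align*}
\mathfrak E_t(0,r)\le\Bigl(\int_0^t e^{\kappa(t-s)}\dd s\Bigr)^{1/2}\Bigl(\int_0^t\norm{r(s)}^2_{L^2(R_s\times K)}\dd s\Bigr)^{1/2}.
\end{align*}
The first factor is evaluated by the substitution $\sigma=t-s$. It equals $((e^{\kappa t}-1)/\kappa)^{1/2}$ when $\kappa>0$ and $t^{1/2}$ when $\kappa=0$, which is exactly $\Gamma_\kappa(t)$ from \eqref{eq:gamma-definition}. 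The definition of $\kappa$ as a supremum of $\norm{\operatorname{div}c}_{L^\infty}$ guarantees $\kappa\ge0$, so these two cases are exhaustive.

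By Fubini, the second factor is $\norm{r}_{L^2(\mathcal Q_t)}$, because $\mathcal Q_t$ is by definition the union over $s\in(0,t)$ of the slices $\{s\}\times R_s\times K$. The only point needing care is measurability of $\mathcal Q_t$. It follows because $R_s=X(s;t,R)$ is the image of a measurable set under a $C^1$ diffeomorphism depending continuously on $s$, so $\mathcal Q_t$ is the image of $(0,t)\times R\times K$ under a $C^1$ map. Combining the bounds for $r_A$ and $r_\phi$ and keeping the prefactor $(2\pi)^{-d/2}$ gives \eqref{eq:residual-operator-l2}.

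There is no genuine obstacle here; the argument is bookkeeping once Theorem~\ref{thm:transport-residual} is available. The two steps to get right are evaluating the exponential weight integral in both the $\kappa>0$ and $\kappa=0$ cases, and identifying the iterated slice norms with the $L^2(\mathcal Q_t)$ norm.
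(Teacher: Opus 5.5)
Your proposal is correct and follows the paper's proof. The exact Cauchy data remove the initial terms of $\mathfrak E_t$, and Cauchy--Schwarz in time with the weight $e^{\kappa(t-s)/2}$ yields $\Gamma_\kappa(t)\norm{r}_{L^2(\mathcal Q_t)}$ for each of $r_A$ and $r_\phi$, which you then insert into the bound of Theorem~\ref{thm:transport-residual}. Your remarks that $\kappa\ge0$ makes the two cases of $\Gamma_\kappa$ exhaustive, and that $\mathcal Q_t$ is measurable so the slice norms combine into $\norm{r}_{L^2(\mathcal Q_t)}$ by Fubini, supply details the paper leaves implicit.
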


\subsection{Full-solution error and frequency-truncation tail}
\label{sec:theory-tail}

The continuous residual estimate above is datum-uniform. Finite-collocation generalization and frequency truncation enter separately; the following corollary isolates the latter contribution.

\begin{corollary}[Full-solution bound with the frequency tail]
\label{cor:continuous-full}
Under the hypotheses of Corollary~\ref{cor:transport-residual-l2}, let $P_K$ be the Fourier projection $\widehat{P_Ku}\coloneqq\mathbf1_K\widehat u$, and let $S_c(t)u\coloneqq u\circ X(0;t,\cdot)$ be the full scalar-transport propagator. Define
\begin{equation}
\label{eq:continuous-residual-delta}
\Delta_{\mathrm{cont}}(t) \coloneqq(2\pi)^{-d/2}\Gamma_\kappa(t) \left(\norm{r_A}_{L^2(\mathcal Q_t)} +\norm{r_\phi}_{L^2(\mathcal Q_t)}\right).
\end{equation}
Then every $u_0\in L^2(\R^d)$ satisfies
\begin{equation}
\label{eq:full-solution-bound}
\norm{\widetilde F^K(t)u_0-S_c(t)u_0}_{L^2(R)} \le \Delta_{\mathrm{cont}}(t)\norm{u_0}_{L^2(\R^d)} +e^{\kappa t/2}\norm{(I-P_K)u_0}_{L^2(\R^d)}.
\end{equation}
\end{corollary}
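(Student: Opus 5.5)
Split the error through the truncated exact propagator $S_c^K(t)$ with the triangle inequality:
\begin{align*}
\norm{\widetilde F^K(t)u_0-S_c(t)u_0}_{L^2(R)} \le \norm{(\widetilde F^K(t)-S_c^K(t))u_0}_{L^2(R)} +\norm{S_c^K(t)u_0-S_c(t)u_0}_{L^2(R)}.
\end{align*}
Corollary~\ref{cor:transport-residual-l2} applies directly to the first term. Since its operator norm is over $L^2(\R^d)\to L^2(R)$, that term is at most $\Delta_{\mathrm{cont}}(t)\norm{u_0}_{L^2(\R^d)}$, with $\Delta_{\mathrm{cont}}$ as in \eqref{eq:continuous-residual-delta}. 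This term therefore needs no further work.

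For the second term, I would first identify $S_c^K(t)$ with $S_c(t)P_K$. With $\phi_c(t,x,\xi)=X(0;t,x)\cdot\xi$, the integral defining $S_c^K(t)u_0(x)$ is exactly the inverse Fourier transform of $\mathbf 1_K\wh u_0$, evaluated at the point $X(0;t,x)$. Hence $S_c^K(t)u_0=(P_Ku_0)\circ X(0;t,\cdot)=S_c(t)P_Ku_0$. Because $P_Ku_0\in L^2$ has an $L^2$ Fourier transform supported in the compact set $K$, it is a continuous function. The identity therefore holds pointwise and needs no a.e.\ composition argument. The second term thus equals $\norm{S_c(t)(I-P_K)u_0}_{L^2(R)}$.

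It then remains to bound $S_c(t)$ from $L^2(R_0)$ to $L^2(R)$, where $R_0=X(0;t,R)$. I would use the change of variables $z=X(0;t,x)$, which maps $R$ onto $R_0$. This gives
\begin{align*}
\norm{w\circ X(0;t,\cdot)}_{L^2(R)}^2 = \int_{R_0}\abs{w(z)}^2 \abs{\det\partial_z X(t;0,z)}\dd z .
\end{align*}
Liouville's formula gives $\frac{\dd}{\dd s}\log\det\partial_zX(s;0,z)=\operatorname{div}c(X(s;0,z))$. For $z\in R_0$, the trajectory $X(s;0,z)$ lies in $R_s$ for $0\le s\le t$, so the Jacobian is at most $e^{\kappa t}$. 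Taking square roots yields the factor $e^{\kappa t/2}$. Finally, bounding $\norm{(I-P_K)u_0}_{L^2(R_0)}$ by the full $L^2(\R^d)$ norm gives \eqref{eq:full-solution-bound}.

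The main obstacle is the change-of-variables step: it needs a nonvanishing Jacobian and must keep the flow inside the tube on which $\kappa$ is measured. This is the same Jacobian computation that presumably underlies Lemma~\ref{lem:tube-error}. I would reuse it with $r=0$: the transported function $e(s,z)=w(X(0;s,z))$ solves $(\partial_s+c\cdot\nabla)e=0$. Applying the lemma's estimate without the $\xi$-variable then gives $\norm{e(t)}_{L^2(R)}\le e^{\kappa t/2}\norm{w}_{L^2(R_0)}$. This route avoids redoing the Liouville argument, provided the lemma's proof does not depend on the $K$-factor. The $C^1$ flow hypothesis guarantees that each $X(s;0,\cdot)$ is a diffeomorphism.
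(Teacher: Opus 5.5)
Your proposal is correct and matches the paper's proof: the paper likewise writes $S_c^K(t)=S_c(t)P_K$ by Fourier inversion, hands $(\widetilde F^K(t)-S_c^K(t))u_0$ to Corollary~\ref{cor:transport-residual-l2}, and bounds $S_c(t)(I-P_K)u_0$ on $R$ by the change of variables $z=X(0;t,x)$ with Liouville's formula, exactly as in \eqref{eq:flow-pullback-l2}. One caution concerns your optional shortcut: Lemma~\ref{lem:tube-error} is stated for $C^1$ functions on the tube, so for the merely $L^2$ function $(I-P_K)u_0$ you should cite the pullback inequality \eqref{eq:flow-pullback-l2} from the lemma's proof, which holds for measurable $f$ and is what the paper uses, rather than the lemma's conclusion.
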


Corollary~\ref{cor:continuous-full} concerns the continuous oscillatory integral $\widetilde F^K$. For a numerical quadrature approximation $\widetilde F^{K,Q}$, the full error additionally contains $\norm{(\widetilde F^{K,Q}(t)-\widetilde F^K(t))u_0}_{L^2(R)}$.

\begin{proposition}[Sharp square-wave truncation tail]
\label{prop:square-tail}
For $d=1$, fixed $a>0$, $\Xi>0$, $K=[-\Xi,\Xi]$, and the square-wave datum $u_0=\mathbf1_{[-a,a]}$, the Fourier-tail norm in Corollary~\ref{cor:continuous-full} has the explicit bound
\begin{equation}
\label{eq:square-tail-bound}
\norm{(I-P_K)u_0}_{L^2(\R)}\le\frac{2}{\sqrt{\pi\Xi}},
\end{equation}
and, writing $\operatorname{Si}(s)\coloneqq\int_0^s(\sin q)/q\dd q$, the exact relative identity
\begin{equation}
\label{eq:square-tail-exact}
\frac{\norm{(I-P_K)u_0}_{L^2(\R)}}{\norm{u_0}_{L^2(\R)}} =\left\{\frac{2}{\pi}\left[ \frac{\sin^2(a\Xi)}{a\Xi}+\frac{\pi}{2} -\operatorname{Si}(2a\Xi) \right]\right\}^{1/2}.
\end{equation}
Consequently, the sharp relative asymptotics are
\begin{equation}
\label{eq:square-tail-asymptotic}
\frac{\norm{(I-P_K)u_0}_{L^2(\R)}}{\norm{u_0}_{L^2(\R)}} =\frac{1}{\sqrt{\pi a\Xi}}+O(\Xi^{-3/2}) \qquad(\Xi\to\infty).
\end{equation}
\end{proposition}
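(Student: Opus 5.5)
The argument reduces everything to Plancherel and an explicit one-dimensional integral. With the convention $\wh v(\xi)=(2\pi)^{-1/2}\int e^{-ix\xi}v(x)\dd x$, the square-wave datum has
\begin{align*}
\wh u_0(\xi)=(2\pi)^{-1/2}\,\frac{2\sin(a\xi)}{\xi}, \qquad \abs{\wh u_0(\xi)}^2=\frac{2}{\pi}\,\frac{\sin^2(a\xi)}{\xi^2}.
\end{align*}
The norm is $\norm{u_0}_{L^2}^2=2a$. By Plancherel, and by evenness of the integrand,
\begin{align*}
\norm{(I-P_K)u_0}_{L^2(\R)}^2=\int_{\abs{\xi}>\Xi}\abs{\wh u_0}^2\dd\xi=\frac{4}{\pi}\int_\Xi^\infty\frac{\sin^2(a\xi)}{\xi^2}\dd\xi .
\end{align*}

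The crude bound \eqref{eq:square-tail-bound} follows at once from $\sin^2\le1$. This gives $\norm{(I-P_K)u_0}^2\le 4/(\pi\Xi)$, and taking square roots yields the claim.

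For the exact identity \eqref{eq:square-tail-exact}, substitute $s=a\xi$. The tail integral becomes $a\int_{a\Xi}^\infty \sin^2 s/s^2\dd s$. Integrate by parts with $u=\sin^2 s$ and $\dd v=s^{-2}\dd s$:
\begin{align*}
\int_{b}^\infty\frac{\sin^2 s}{s^2}\dd s=\frac{\sin^2 b}{b}+\int_b^\infty\frac{\sin 2s}{s}\dd s=\frac{\sin^2 b}{b}+\frac{\pi}{2}-\operatorname{Si}(2b),
\end{align*}
with $b=a\Xi$. The second equality uses $\int_0^\infty \sin q/q\dd q=\pi/2$ after the substitution $q=2s$. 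The boundary term at infinity vanishes. Dividing $\frac{4a}{\pi}[\cdots]$ by $\norm{u_0}^2=2a$ gives exactly $\frac{2}{\pi}[\cdots]$, and taking square roots gives \eqref{eq:square-tail-exact}.

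The main step needing care is the asymptotic statement \eqref{eq:square-tail-asymptotic}. Here I would use the standard expansion
\begin{align*}
\frac{\pi}{2}-\operatorname{Si}(x)=\frac{\cos x}{x}+\frac{\sin x}{x^2}+O(x^{-3}),
\end{align*}
obtained by integrating $\int_x^\infty \sin q/q\dd q$ by parts twice. With $x=2b$, the bracket becomes
\begin{align*}
\frac{\sin^2 b}{b}+\frac{\cos 2b}{2b}+O(b^{-2})=\frac{1}{2b}+O(b^{-2}).
\end{align*}
This works because $\sin^2 b+\tfrac12\cos 2b=\tfrac12$, which is the key cancellation of the oscillatory $b^{-1}$ terms. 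The relative squared tail is therefore $\frac{1}{\pi b}\bigl(1+O(b^{-1})\bigr)$. Applying $\sqrt{1+\epsilon}=1+O(\epsilon)$ gives $(\pi a\Xi)^{-1/2}+O(\Xi^{-3/2})$, with $a$ fixed. The only point to check is that the $O$-terms are uniform, and they are, since $\sin$ and $\cos$ are bounded.
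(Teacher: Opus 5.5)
Your proof is correct and follows the paper's argument: the explicit transform with Plancherel, $\sin^2\le1$ for the crude bound, and the substitution $s=a\xi$ with one integration by parts for the $\operatorname{Si}$ identity. The asymptotic step differs only cosmetically: the paper applies $\sin^2(a\xi)=\tfrac12(1-\cos(2a\xi))$ directly to the tail integral and integrates $\int_\Xi^\infty\cos(2a\xi)\xi^{-2}\dd\xi$ by parts once, which is the same cancellation you obtain from $\sin^2 b+\tfrac12\cos 2b=\tfrac12$. (You only use $\frac{\pi}{2}-\operatorname{Si}(x)=\cos x/x+O(x^{-2})$, which is what two integrations by parts with a crude remainder bound give; your stated $O(x^{-3})$ remainder would need a third.)
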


The bound in \eqref{eq:residual-operator-l2} is uniform over all $u_0\in L^2(\R^d)$ and therefore does not require the datum to be smooth; in particular, that bound covers the discontinuous-advection datum after frequency truncation. The $\Xi^{-1/2}$ rate in \eqref{eq:square-tail-asymptotic} is the $L^{2}$ form of the classical obstruction to spectral approximation of a discontinuous datum, whose pointwise counterpart is the Gibbs phenomenon \citep{gottlieb1997gibbs}; that rate is a property of the truncated reconstruction window in \eqref{eq:fio}, not of the learned generator, which is why Corollary~\ref{cor:continuous-full} keeps the two terms separate. Section~\ref{sec:exp-discont} evaluates this tail at the parameters of the discontinuous-advection benchmark and compares that tail with the observed error floor.

\subsection{A fold normal-form illustration}
\label{sec:theory-caustic}

We use the Airy normal form to describe a fold of the base projection of a propagated solution Lagrangian. The propagator canonical relation remains the operator-level object of Proposition~\ref{prop:causal}. The systematic theory behind such normal forms, covering oscillatory integrals attached to Lagrange immersions and the unfolding of singularities of their base projections, is classical \citep{duistermaat1974oscillatory}.

\begin{example}[The Airy fold, explicitly]
\label{ex:airy}
The canonical fold is generated by $\phi(x,\xi)=x\xi-\xi^{3}/3$. Then $\partial_{\xi}\phi=x-\xi^{2}$, so the stationary set is $x=\xi^{2}$. Its projection $\xi\mapsto x=\xi^2$ has a fold at $\xi=0$, whose caustic value is $x=0$, and $\partial^{2}_{\xi\xi}\phi=-2\xi$ has a simple zero at $\xi=0$. The generator $\phi$ is a polynomial, hence $C^{\infty}$ everywhere, including at the fold. In this specified normal-form chart, the Hessian represents the projection degeneracy on the critical set. The branchwise stationary-phase coefficient contains the factor $\abs{\partial^{2}_{\xi\xi}\phi}^{-1/2}=\abs{2\xi}^{-1/2}$ on the folding branches. Since $x=\xi^2$, its illuminated-side base-distance scaling is $2^{-1/2}x^{-1/4}$ as $x\to0^{+}$. The Airy integral gives the uniform description at the fold, while these branchwise asymptotics apply away from the fold; higher unfoldings belong to the oscillatory-integral theory of \citet{duistermaat1974oscillatory}.
\end{example}

\begin{remark}[Uniformized and branchwise fold behavior]
\label{rem:fold-contrast}
Example~\ref{ex:airy} separates the smooth polynomial phase from the branchwise coefficient produced by nondegenerate stationary phase away from the fold. For a properly supported order-zero FIO associated with a canonical graph, the standard FIO mapping theorem gives an $L^2\to L^2$ operator bound \citep[Ch.~25]{hormander1985vol4}. Ordinary stationary phase supplies the branchwise pointwise asymptotics away from the degenerate critical point.
\end{remark}

Example~\ref{ex:airy} and Remark~\ref{rem:fold-contrast} clarify the local geometry. The point-focus diagnostic of Sec.~\ref{sec:exp-caustic} uses a configuration in which all rays collapse, complementing this fold normal form. To fix the orders of magnitude, the standard normalized one-dimensional fold family, with $A\in C_c^\infty(\R)$ and $A(0)\ne0$,
\begin{align*}
(2\pi\varepsilon)^{-1/2}\int e^{i(x\xi-\xi^3/3)/\varepsilon}A(\xi)\dd\xi
\end{align*}
has an Airy transition of width $\varepsilon^{2/3}$, across which the integral reaches size $\varepsilon^{-1/6}$ and its first $x$-derivative size $\varepsilon^{-5/6}$. The oscillatory integral is smooth for every fixed $\varepsilon>0$; its Maslov change is encoded in the WKB branch representation.

\section{Experiments}
\label{sec:experiments}

The twelve studies are organized into three groups: propagation regimes; representational content, reuse, and extension; and efficiency and supervision. The studies comprise trained-generator benchmarks and exact-generator diagnostics. The latter isolate reconstruction through a point focus and resolved wavefront transport; the remaining studies evaluate learned generators, reuse, inverse recovery, dimensional extension, efficiency, and supervision. The two entries of Sec.~\ref{sec:exp-efficiency-supervision} analyze runs already reported in Sec.~\ref{sec:exp-regimes} rather than adding training runs. Appendix~\ref{app:expdesign} pairs each study with its objective and benchmark and states the common protocol, baselines, and computing environment. Errors are relative $L^{2}$ errors, and trained MiNO results are reported as the mean and sample standard deviation over five seeds unless stated otherwise.

\subsection{Propagation regimes}
\label{sec:exp-regimes}

\subsubsection{Smooth advection: closed-form calibration}
\label{sec:exp-smooth}

A problem with no caustic and a known closed form calibrates the combined learned phase--amplitude model and numerical reconstruction against the exact phase-increment target $-\xi$.

\paragraph{Protocol.} Linear advection $\partial_{t}u+\partial_{x}u=0$ is evaluated on $x\in[-3,3]$, $t\in[0,1.5]$, with the decaying Gaussian initial condition $u_{0}(x)=e^{-x^{2}/(2\sigma^{2})}$, $\sigma=0.3$, and the whole-line exact solution $u(t,x)=u_{0}(x-t)$. The box is chosen so the translated Gaussian is negligible at its boundary over the evaluation interval; no periodic identification is used for the MiNO reference. The symbol is $p(x,\xi)=\xi$, spatially homogeneous, so the exact phase is $x\xi-t\xi$ and the target for the learned phase increment $h_{\theta}$ is $-\xi$. Reconstruction uses the default composite trapezoidal rule (Appendix~\ref{app:expdetails}). MiNO is trained for 50,000 steps over five seeds. The supervised FNO is trained over the same five seeds on its own labeled dataset and under its own epoch budget (Appendix~\ref{app:baseline-config}); step counts are not comparable across the two training modes. The two field baselines, the NTK-balanced PINN and the causal PINN of \citet{wang2022causal}, enter at the 200,000-step cross-paradigm budget, at which MiNO is retrained over the same five seeds. The metric is the relative $L^{2}$ error at the final time $t=1.5$.

\paragraph{Results.} At 50,000 steps, MiNO reaches $3.84\times10^{-3}\pm2.18\times10^{-3}$, and the supervised FNO, at its own budget, reaches $3.12\times10^{-2}\pm4.27\times10^{-3}$. At the 200,000-step cross-paradigm budget (Figure~\ref{fig:crosspar}), MiNO reaches $1.32\times10^{-3}\pm4.36\times10^{-4}$, the NTK-balanced PINN reaches $5.48\times10^{-3}\pm2.04\times10^{-3}$, and the causal PINN reaches $6.09\times10^{-1}\pm2.07\times10^{-1}$.

\paragraph{Interpretation.} Without labeled solution trajectories, MiNO attains an error about eight times smaller than that of the supervised FNO on this benchmark; that contrast, and the data-distribution caveat attached to it, are examined directly in Sec.~\ref{sec:exp-fno}. In this configuration, the causal PINN exhibits a substantially larger mean error and greater seed-to-seed variability than the NTK-balanced PINN.

\begin{figure}[pos=t!]
\centering
\includegraphics[width=\linewidth]{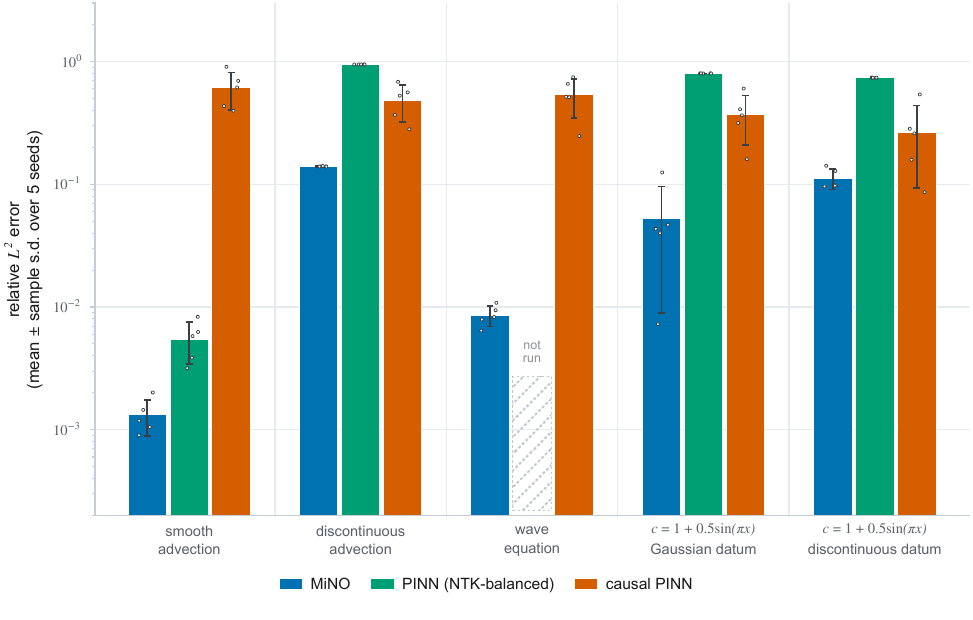}
\caption{Cross-paradigm comparison on the 200,000-step runs, plotted on a log scale, for MiNO, the NTK-balanced PINN, and the causal PINN across smooth advection, discontinuous advection, wave propagation, and the Gaussian and discontinuous initial-data cases for $c(x)=1+0.5\sin(\pi x)$ of Sec.~\ref{sec:exp-varcoeff-sin}. Those two variable-coefficient groups are distinct from the $c(x)=-x$ stationary-point study of Sec.~\ref{sec:exp-varcoeff}, which is not plotted here. All plotted errors use their respective final-time protocols. For the wave equation, the comparison includes MiNO and the causal PINN, the two implementations that enforce the zero-initial-velocity condition. The supervised FNO is evaluated separately.}
\label{fig:crosspar}
\end{figure}

\subsubsection{Discontinuous advection: a matched-budget comparison}
\label{sec:exp-discont}

On sharp transport, field learning and propagator learning occupy different regimes over the budget sweep. MiNO reaches the frequency-truncation floor of its reconstruction window while the NTK-balanced PINN remains near its initial error level.

\paragraph{Protocol.} Linear advection $\partial_{t}u+\partial_{x}u=0$ is evaluated on $x\in[-3,3]$, $t\in[0,1.5]$, with $u_0(x)=\mathbf 1_{[-0.3,0.3]}(x)$ and exact whole-line solution $u(t,x)=u_0(x-t)$. The symbol is $p(x,\xi)=\xi$; the reconstruction uses 1024 frequency nodes on $[-100,100]$ because the square-wave spectrum decays only like $\abs{\xi}^{-1}$. MiNO and the NTK-balanced PINN are trained over five seeds at budgets of 500, 1,000, 2,000, 5,000, 10,000, 20,000, and 50,000 steps. The metric is the discrete relative $L^{2}$ error over the full $21\times256$ space--time evaluation grid; we additionally report the log--log ordinary-least-squares slope and the ratio of mean errors.

\paragraph{Results.} Table~\ref{tab:learncurve} reports the full sweep with standard deviations. MiNO falls from $0.299\pm0.154$ at 500 steps to $0.105\pm0.001$ at 10,000 steps and $0.103\pm0.001$ at 50,000 steps. The change after 10,000 steps is $0.002$, against a reduction of $0.194$ over the preceding budgets, so the error floor is reached by 10,000 steps. The PINN stays at $0.943\pm0.011$ at 500 steps, $0.953\pm0.011$ at 10,000 steps, and $0.950\pm0.003$ at 50,000 steps. Its error is essentially flat over this range. The log--log convergence slope is $-0.21$ for MiNO and $+0.001$ for the PINN. The ratio of mean errors is $3.15\times$ at 500 steps and $9.19\times$ at 50,000 steps.

The 200,000-step final-time benchmark (Figure~\ref{fig:crosspar}) gives MiNO $0.140\pm0.001$, the NTK-balanced PINN $0.948\pm4\times10^{-5}$, and the causal PINN $0.484\pm0.160$, all at $t=1.5$. The 50,000-step sweep reports full-grid error, whereas this 200,000-step benchmark reports final-time error.

\begin{table}[pos=t!]
\centering
\caption{Discontinuous-advection learning curve for MiNO and the NTK-balanced PINN, evaluated over the full space--time grid.}
\label{tab:learncurve}
\small
\begin{tabular}{@{}rccc@{}}
\toprule
Budget (thousand steps) & MiNO & PINN & PINN-to-MiNO ratio ($\times$) \\
\midrule
$0.5$ & $0.299\pm0.154$ & $0.943\pm0.011$ & $3.15$ \\
$1$   & $0.179\pm0.051$ & $0.949\pm0.012$ & $5.31$ \\
$2$   & $0.136\pm0.038$ & $0.953\pm0.014$ & $7.03$ \\
$5$   & $0.127\pm0.049$ & $0.954\pm0.014$ & $7.53$ \\
$10$  & $0.105\pm0.001$ & $0.953\pm0.011$ & $9.09$ \\
$20$  & $0.105\pm0.003$ & $0.952\pm0.007$ & $9.09$ \\
$50$  & $0.103\pm0.001$ & $0.950\pm0.003$ & $9.19$ \\
\bottomrule
\end{tabular}
\end{table}

\paragraph{Interpretation.} Within this sweep, the two methods occupy different convergence regimes. The fitted PINN slope is essentially zero, with no material change in error as the optimization budget increases. The square wave is a transported wavefront feature, a discontinuity whose wavefront set is propagated by the bicharacteristic flow. MiNO has a clear negative slope. For the generator, that same discontinuity is carried by the phase increment $h^{*}=-\xi$, linear in the fiber variable and constant in $t$ and $x$, together with unit amplitude. The feature the field target must resolve is absent from the object MiNO learns.

Proposition~\ref{prop:square-tail} shows that truncating the square-wave spectrum at $\Xi=100$ with datum half-width $a=0.3$ leaves an exact relative $L^2$ tail of $0.1028$; its leading asymptotic value is $(30\pi)^{-1/2}=0.1030$. The analytic tail is therefore numerically close to the late-budget mean $0.103$, supporting the interpretation that the selected frequency window sets the leading scale of the observed late-budget error. The discrete metric additionally contains phase, amplitude, and quadrature error.

The diagnostic of \citet{wang2022ntk} applies. Inspection of the PINN run shows the PDE residual becoming numerically negligible while the initial-condition loss stalls. The network has converged to the trivial solution $u\equiv0$, which satisfies $\partial_{t}u+\partial_{x}u=0$ everywhere. This behavior is consistent with documented difficulties of strong-form residual training on hyperbolic transport carrying a sharp front \citep{fuks2020limitations,krishnapriyan2021failure}. The present comparison adds a matched-budget learning curve against a propagator-learning target on the same problem.

That failure mode is unavailable to MiNO, and not because of tuning. The collapse requires a competition between an equation residual and a separate initial-condition penalty, and the time-factored parameterization in \eqref{eq:phase} removes the second term: $\Phi_\theta(0,\cdot,\cdot)=x\cdot\xi$ and $A_\theta(0,\cdot,\cdot)=1$ hold identically in $\theta$, so $\mathcal B_0$ in \eqref{eq:objective} vanishes for every parameter value (Sec.~\ref{sec:mino-param}). Driving the eikonal and transport residuals to zero cannot trade the Cauchy datum away, because that datum is not something the optimizer holds. What separates the two methods here is the parameterized target itself, not the optimizer applied to it.

\subsubsection{Wave propagation: bidirectional dynamics}
\label{sec:exp-wave}

MiNO handles the second-order wave equation through a two-branch generator, without leaving the propagator-learning paradigm; the canonical relation there has two branches, the left and right movers.

\paragraph{Protocol.} For the one-dimensional wave equation $u_{tt}=u_{xx}$ with zero initial velocity, the d'Alembert decomposition used here is
\begin{align*}
u(t,x)=\tfrac12(F_+(t)u_0)(x) +\tfrac12(F_-(t)u_0)(x) =\tfrac12[u_0(x-t)+u_0(x+t)].
\end{align*}
The normalized branches have phases $(x-t)\xi$ and $(x+t)\xi$ and branch amplitudes $A_\pm\equiv1$; the explicit external weights $1/2$ ensure that their sum equals $u_0$ at $t=0$. The evaluation domain is $x\in[-3,3]$, $t\in[0,1.5]$, with initial displacement $u_0(x)=e^{-x^2/(2\sigma^2)}$, $\sigma=0.2$, and zero initial velocity. The two characteristic families are carried by two single-branch generators, so the parameter count doubles relative to single-branch MiNO (Table~\ref{tab:params}). MiNO is trained over five seeds at both the 50,000-step and 200,000-step budgets. The 50,000-step MiNO metric is the discrete relative $L^{2}$ error on the full $21\times256$ space--time grid; the 200,000-step cross-paradigm metric is the final-time relative $L^{2}$ error at $t=1.5$. The causal PINN supplies the wave baseline because that baseline enforces both the initial displacement and the initial velocity.

\paragraph{Results.} At 50,000 steps, MiNO reaches $8.29\times10^{-3}\pm2.90\times10^{-3}$ on the full grid. Under the 200,000-step final-time protocol (Figure~\ref{fig:crosspar}), MiNO reaches $8.57\times10^{-3}\pm1.65\times10^{-3}$ and the causal PINN is at $0.536\pm0.189$.

\paragraph{Interpretation.} Each branch carries its own eikonal phase and transport amplitude, so the two-branch superposition is the simplest instance of the phase-branch management that higher-dimensional propagation requires. The mean error is about 63 times smaller than that of the causal baseline at the same final-time budget and is stable across seeds at the $10^{-3}$ level.

\subsubsection{Variable-coefficient transport with a stationary point}
\label{sec:exp-varcoeff}

This experiment tests the learned eikonal generator for one inhomogeneous scalar-transport symbol whose velocity vanishes at an interior point.

\paragraph{Protocol.} The problem is $\partial_tu-x\partial_xu=0$ as a whole-line Cauchy problem evaluated on $x\in[-2.5,2.5]$, $t\in[0,1.5]$, with $u_0(x)=(2\pi\sigma^2)^{-1/2}e^{-x^2/(2\sigma^2)}$, $\sigma=0.3$, and symbol $p(x,\xi)=-x\xi$. Its flow is $x(t;y)=e^{-t}y$, its projection Jacobian is $e^{-t}$, and hence
\begin{align*}
  u(t,x)=u_0(e^t x),\qquad \Phi(t,x,\xi)=e^t x\xi,\qquad A\equiv1 .
\end{align*}
The run uses five seeds (Appendix~\ref{app:expdetails}) and reports relative $L^2$ at $t\in\{0.5,1.0,1.5\}$. Its training budget is 20,000 steps.

\paragraph{Results.} The relative $L^{2}$ is $8.89\times10^{-3}\pm3.09\times10^{-3}$ at $t=0.5$, $1.64\times10^{-2}\pm5.88\times10^{-3}$ at $t=1.0$, and $2.37\times10^{-2}\pm6.81\times10^{-3}$ at $t=1.5$.

\paragraph{Interpretation.} The errors grow with time while remaining below $0.024$ in this window. Differentiating the symbol $p=c(x)\xi$ supplies the eikonal residual by the construction of Sec.~\ref{sec:mino-symbol-residual}, whose analytic check is the characteristic phase $\Phi=X(0;t,x)\xi$ with scalar amplitude $A\equiv1$; the eikonal closure therefore follows from the coefficient without being specified by hand, and the run uses the constant-amplitude specialization of Sec.~\ref{sec:mino-train}. The velocity vanishes at $x=0$, while $\partial_yx(t;y)=e^{-t}>0$ for every finite $t$, so the ray projection stays nonsingular and the test isolates a stationary point of the symbol from a projection singularity.

\subsubsection{Variable-coefficient transport with a sinusoidal speed}
\label{sec:exp-varcoeff-sin}

The second inhomogeneous symbol carries the cross-paradigm comparison of Figure~\ref{fig:crosspar} for both a smooth and a discontinuous initial datum, so the effect of initial-data regularity is separated from the effect of the variable coefficient.

\paragraph{Protocol.} The problem is $\partial_tu+c(x)\partial_xu=0$ with $c(x)=1+0.5\sin(\pi x)$ on $x\in[-1,1]$, one spatial period of the coefficient, and $t\in[0,1]$. The reference is the characteristic pullback $u(t,x)=u_0(X(0;t,x))$ evaluated by numerical backward integration of $\dot X=c(X)$. Two initial data are used: the Gaussian $u_0(x)=e^{-25(x+0.5)^2}$ and the square wave $u_0=\mathbf 1_{[-0.8,-0.2]}$. Reconstruction uses the default composite trapezoidal rule on a wider frequency window for the square wave than for the Gaussian datum, again reflecting the $\abs{\xi}^{-1}$ spectral decay (Appendix~\ref{app:expdetails}). MiNO, the NTK-balanced PINN, and the causal PINN are trained over five seeds at the 200,000-step cross-paradigm budget; the metric is final-time relative $L^{2}$ at $t=1$ on the 256-point spatial grid.

\paragraph{Results.} For the Gaussian datum, MiNO reaches $0.052\pm0.043$, the NTK-balanced PINN $0.801\pm0.001$, and the causal PINN $0.370\pm0.160$. For the square wave, MiNO reaches $0.112\pm0.021$, the NTK-balanced PINN $0.739\pm0.001$, and the causal PINN $0.265\pm0.173$. Figure~\ref{fig:crosspar} plots both groups alongside the constant-coefficient cases.

\paragraph{Interpretation.} A spatially varying speed makes the exact phase $X(0;t,x)\xi$ nonlinear in $x$, so the generator is no longer the polynomial phase of the constant-coefficient benchmarks. MiNO yields lower errors than both field baselines for both initial data, with error ratios ranging from approximately $2.4$ to $15.4$. The MiNO seed spread is the widest recorded in this study, a factor of about 17 between the best and worst seeds on the Gaussian datum. Accuracy here is set by the seed-to-seed variance of the learned phase, not by the reconstruction.

\subsubsection{A point focus: exact-generator reconstruction}
\label{sec:exp-caustic}

A caustic makes the base projection of a propagated solution Lagrangian singular while a uniformizing oscillatory representation can remain regular. This experiment evaluates that representation through a point focus.

\paragraph{Protocol.} We reconstruct through the focus using the exact phase $\phi(t,x,\xi)=x\xi-t\xi^2/2$ for the free Schr\"odinger multiplier $p(\xi)=\xi^2/2$ in \eqref{eq:fio} with $A\equiv1$, and compare its reconstruction with a high-resolution spectral solution. The datum is a two-bump amplitude carrying the converging chirp $\exp(-iy^2/(2f))$,
\begin{align*}
u_0(y)=\left[e^{-(y-\delta)^{2}/(2\sigma^{2})} +0.6e^{-(y+\delta)^{2}/(2\sigma^{2})}\right]e^{-iy^{2}/(2f)}, \qquad \sigma=0.3,\quad \delta=0.4\sigma,\quad f=1,
\end{align*}
whose unequal bumps keep the exact solution outside the closed-form chirped Gaussian family. Its rays satisfy
\begin{align*}
x(t;y)=(1-t/f)y, \qquad J_{\mathrm{proj}}(t,y)=1-t/f.
\end{align*}
Thus $J_{\mathrm{proj}}$ vanishes at $t=f$. Because the ray map is linear in $y$, all rays collapse simultaneously at $t=f$, producing a point-focus blowdown. The reconstruction uses the Filon rule of Appendix~\ref{app:expdetails} with 4096 frequency nodes on $[-30,30]$, outside which the two-bump spectrum is negligible, and $\wh u_0$ is computed by quadrature over $y\in[-6,6]$ with 8192 nodes. The reference is the periodic spectral solution on the box $[-L/2,L/2)$ at the fixed spacing $\Delta x=1/256$, for the three widths $L\in\{16,32,48\}$; both fields are compared on $\abs{x}\le3$ at $t/f\in\{0,0.5,0.9,1,1.1,1.5,2\}$. Since the free evolution spreads the datum, a narrow box wraps the datum periodically, and the reference, not the reconstruction, is what $L$ controls.

\paragraph{Results.} The reconstruction tracks the spectral solution through and past the focal time, with relative $L^2$ below $10^{-3}$ at every sampled time once the reference box is wide enough to remove periodic wrap-around (Figure~\ref{fig:robust}).

\paragraph{Interpretation.} At finite wavelength, the oscillatory integral remains finite and smooth at every sampled time, including the focal time. The singularity therefore lives in the base projection of the solution Lagrangian, not in the uniform reconstruction.

\begin{figure}[pos=t!]
\centering
\includegraphics[width=0.60\linewidth]{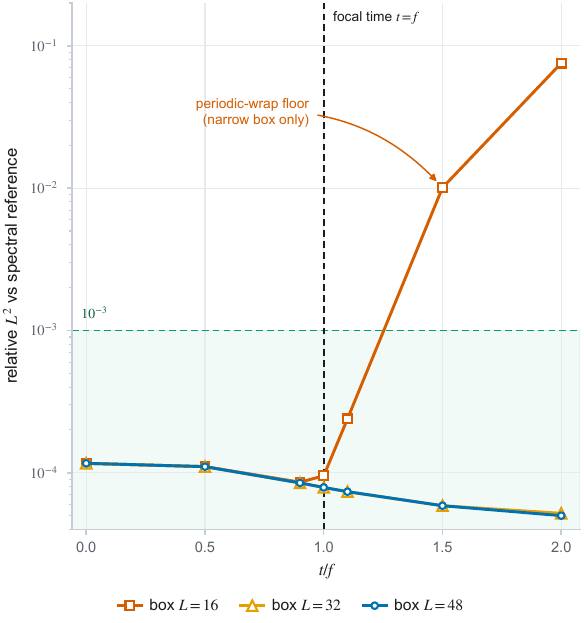}
\caption{Point-focus reconstruction from the exact generator, with no network trained, plotted against the periodic spectral reference computed on the box $[-L/2,L/2)$. Widening that box removes the periodic-wrap floor; at $L=32$ and $L=48$ the reconstruction stays below $10^{-3}$ relative $L^{2}$ through and past the focal time $t=f$.}
\label{fig:robust}
\end{figure}

\subsection{Representational content, reuse, and extension}
\label{sec:exp-representation}

\subsubsection{Finite-resolution microlocal fidelity}
\label{sec:exp-spectral}

A reconstruction can match coarse values while losing resolved high-frequency localization. The generator reconstruction preserves the transported ridge in a fixed-resolution wave-packet diagnostic.

\paragraph{Protocol.} A propagated discontinuity is examined with one fixed Fourier--Bros--Iagolnitzer-type wave-packet transform
\begin{align*}
T_{\lambda}v(x_{0},\xi_{0})\coloneqq(\lambda/\pi)^{1/4} \int e^{i\xi_{0}(x-x_{0})}e^{-\lambda(x-x_{0})^{2}/2}v(x)\dd x, \qquad \lambda=40 ,
\end{align*}
whose squared modulus is the portrait plotted in Figure~\ref{fig:portrait}. The datum is $u_0=\mathbf 1_{[-0.4,0]}$ under unit-speed advection, sampled at $t=0.5$ on 1024 points of $x\in[-1,1]$, so the two transported jumps sit at $x=0.1$ and $x=0.5$. The portrait grid carries 80 points on $x_{0}\in[-0.8,0.8]$ and 200 points on $\xi_{0}\in[-260,260]$. Three fields are compared on that grid: the exact grid values; the reconstruction from the exact generator with 8192 trapezoidal nodes on $[-300,300]$; and a low-pass surrogate that keeps the first 12 discrete Fourier modes of the same grid, \ie, $\abs{\xi}\lesssim35$.

\paragraph{Results.} The low-frequency content agrees across all three portraits. In the resolved high-frequency band associated with the transported discontinuity, the generator reconstruction follows the ridge of the exact solution while the low-pass surrogate does not (Figure~\ref{fig:portrait}).

\paragraph{Interpretation.} At this grid and wave-packet scale, the diagnostic separates value agreement from microlocal agreement. The generator reconstruction carries content that a value-matched low-pass surrogate does not, namely the resolved high-frequency band attached to the transported discontinuity.

\begin{figure}[pos=t!]
\centering
\includegraphics[width=\linewidth]{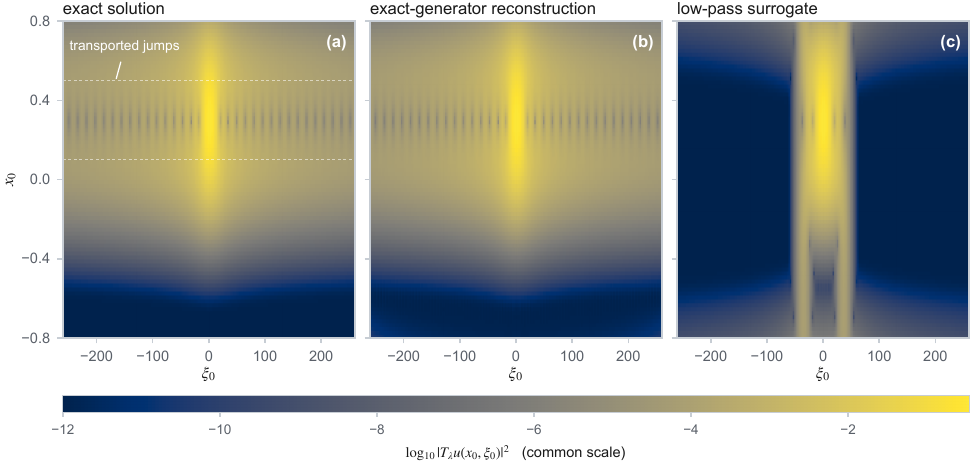}
\caption{Fixed-resolution wave-packet portrait $\log_{10}\abs{T_{\lambda}u(x_{0},\xi_{0})}^{2}$ of a propagated discontinuity at $\lambda=40$, on a common color scale: (a) the exact grid values, with the two transported jumps marked; (b) the reconstruction from the exact finite-window generator, with no network trained; and (c) a low-pass surrogate keeping the first 12 Fourier modes. The generator reconstruction preserves the resolved high-frequency ridge attached to those jumps, whereas the surrogate removes that ridge while retaining coarse content.}
\label{fig:portrait}
\end{figure}

\subsubsection{Multi-initial-condition propagator reuse}
\label{sec:exp-multiic}

A generator trained once for a fixed symbol applies to structurally distinct initial conditions without retraining, because the initial condition enters \eqref{eq:fio} only through $\wh u_0$.

\paragraph{Protocol.} A single MiNO generator is trained for 30,000 steps with one seed, using only residuals for the free-particle symbol $p(\xi)=\xi^{2}/2$. No initial datum enters that objective. The generator is then evaluated, with no further optimization, on five structurally distinct initial conditions: Gaussian, chirp-Gaussian, double-Gaussian, smooth step, and cosine-envelope. The metric is the relative $L^{2}$ error before, at, and after $t=0.5$.

\paragraph{Results.} At $t=0.5$, the relative $L^{2}$ is $0.080$ for the Gaussian, $0.096$ for the chirp-Gaussian, $0.070$ for the double-Gaussian, $0.077$ for the smooth step, and $0.105$ for the cosine-envelope, with mean $0.086$ across the five; the Gaussian serves as the calibration case.

\paragraph{Interpretation.} The generator is reused across initial conditions at comparable error levels, with the largest recorded error on the cosine-envelope. This reuse realizes the linearity of \eqref{eq:fio} in $\wh u_0$. The learned object is the symbol's generator, and the datum is supplied to the reconstruction quadrature, not to the network. For the chirp-Gaussian, the evaluation time also probes focal behavior, since that datum contains the two quadratic-phase components $e^{\pm ix^2/(2f)}$ and the minus component has ray map $x(t;y)=(1-t/f)y$, whose projection collapses at $t=f=0.5$; the other data have different phase geometry. Section~\ref{sec:exp-caustic} isolates the focus in the point-focus diagnostic.

\subsubsection{Inverse transport-speed recovery}
\label{sec:exp-inverse}

The learned propagator can serve as a structured forward model within a parameter-recovery problem, extending its use beyond forward solving.

\paragraph{Protocol.} The forward problem is $\partial_{t}u+c\partial_{x}u=0$ on $x\in[0,3]$, $t\in[0,1]$, with a Gaussian initial condition of width $0.15$ centered at $0.5$. Observations are 8 time points by 16 spatial points, 128 in total, with additive noise of standard deviation $\sigma\in\{0,0.01,0.02,0.05\}$ over five noise seeds, for true speeds $c\in\{0.9,1.3,1.5\}$. Three recovery methods are compared. M1 is a bank of 21 MiNO forward models pre-trained at candidate speeds $c\in\{0.80,0.85,\dots,1.80\}$ for 5,000 steps each, recovering $c$ by the argmin of the data mean squared error. M2 is an analytic forward-model oracle that performs gradient descent on $\log c$ using the exact closed-form solution; this method is an oracle because the exact closed-form solution must be known in advance. M3 is a PINN inverse that jointly optimizes network weights and $c$ over 10,000 steps. The metric is the mean absolute error in $c$ over the fifteen combinations of true speed and seed at each noise level.

\paragraph{Results.} The three true speeds lie on the $0.05$-spaced M1 candidate grid, so recovery by M1 is a grid-node identification rather than a continuous estimate. M1 selects the true-speed node in all sixty configurations, the sampled array of 21 misfit values having a unique minimum there in every run, and its mean absolute error in $c$ therefore vanishes at every tested noise level; that value is not comparable with the continuous estimates below. For the analytic oracle M2, the mean absolute error is $0.0000$, $0.0008\pm0.0004$, $0.0016\pm0.0008$, and $0.0039\pm0.0020$ at $\sigma=0,0.01,0.02,0.05$, and for the PINN inverse M3, the mean absolute error is $0.0004\pm0.0004$, $0.0018\pm0.0015$, $0.0036\pm0.0025$, and $0.0100\pm0.0064$ at the same noise levels.

\paragraph{Interpretation.} The pre-trained generator bank turns inverse recovery into a one-parameter grid-node identification problem, whereas M2 and M3 estimate a continuous parameter and their errors track the noise level. Off-grid resolution for M1 is determined by the candidate spacing and can be increased through grid refinement or local optimization. The experiment thus demonstrates structured one-parameter recovery with a pre-trained forward-model bank.

\subsubsection{Two-dimensional zero-velocity wave propagation}
\label{sec:exp-2d}

This experiment carries the same phase-space parameterization and reconstruction to $T^*(\R^2)$ for a zero-velocity wave solution.

\paragraph{Protocol.} The run uses a single half-wave branch, whose chosen principal symbol is $p_-(\xi)=-\abs{\xi}$ for $\xi=(\xi_1,\xi_2)$. The run therefore targets $\Phi=x\cdot\xi+t\abs{\xi}$ on $T^{*}(\R^{2})=\R^{4}$ and returns the real part of the resulting single half-wave integral. For the real Gaussian datum and zero initial velocity used here, $\operatorname{Re}(e^{it\abs{D}}u_0)=\cos(t\abs{D})u_0$, so this real projection is the exact wave solution. Mathematically, $\abs{\xi}$ is smooth and positively homogeneous of degree one on $\R^2\setminus\{0\}$. Training samples frequencies from the full square $[-40,40]^2$ and uses a negligible positive regularization of $\abs{\xi}$ at the origin. Accordingly, the homogeneous-FIO analysis applies away from the zero section. The model is trained over five seeds for 30,000 steps. The metric is the relative $L^{2}$ error at $t\in\{0,0.1,0.3,0.5\}$.

\paragraph{Results.} The relative $L^{2}$ is $2.1\times10^{-3}$ at $t=0$, $3.8\times10^{-3}\pm1.8\times10^{-3}$ at $t=0.1$, $1.6\times10^{-2}\pm5.7\times10^{-3}$ at $t=0.3$, and $2.2\times10^{-2}\pm6.0\times10^{-3}$ at $t=0.5$, with all five seeds below $0.03$ at $t=0.5$ and the worst seed at $0.0285$. The $t=0$ value is the initial-condition consistency check; that value is identical across seeds because the parameterization imposes $\Phi_\theta(0)=x\cdot\xi$ and $A_\theta(0)=1$ exactly.

\paragraph{Interpretation.} The same phase-space parameterization carries to four-dimensional phase space through one half-wave generator, a two-dimensional oscillatory reconstruction, and a real projection supplying the conjugate branch for this real zero-velocity datum. The error growth in time is comparable to that in the one-dimensional variable-coefficient case. This experiment demonstrates the parameterization with a full two-dimensional oscillatory quadrature; scalable quadrature is the main computational frontier in further dimensions.

\subsection{Efficiency and supervision}
\label{sec:exp-efficiency-supervision}

\subsubsection{Parameter count and wall-clock comparison}
\label{sec:exp-efficiency}

The convergence-regime advantage of Sec.~\ref{sec:exp-discont} occurs with a lower parameter count and a comparable or higher per-step cost, so that advantage is not attributable to model capacity or to cheaper steps.

\paragraph{Protocol.} Parameter counts follow from the architectures in Appendix~\ref{app:baseline-config}. Wall-clock time is measured on the 200,000-step runs using the common hardware described in Appendix~\ref{app:expdesign}.

\paragraph{Results.} Table~\ref{tab:params} reports the parameter counts and the measured per-step times. Single-branch MiNO uses 58,658 parameters, fewer than every baseline in the comparison; the two-branch wave configuration uses 117,316. On smooth advection, MiNO costs slightly more per step than the NTK-balanced PINN, and on the wave problem, two-branch MiNO costs slightly less than the causal PINN.

\begin{table}[pos=ht!]
\centering
\caption{Parameter counts and per-step wall-clock time on the 200,000-step runs. Every method listed is trained on residuals alone, with no labeled solution data, except the FNO, which is supervised on solutions under a separate epoch budget and is not timed here. The single-branch MiNO and NTK-balanced PINN rows are timed on smooth advection; the two-branch MiNO and causal PINN rows are timed on the wave problem.}
\label{tab:params}
\small
\begin{tabular}{@{}llrr@{}}
\toprule
Paradigm & Method & Parameters & Time per step (ms) \\
\midrule
\multirow{2}{*}{Propagator learning} & MiNO (single-branch)    & 58,658  & $2.5$ \\
 & MiNO (two-branch wave)  & 117,316 & $5.1$ \\
\addlinespace
\multirow{2}{*}{Field learning} & PINN (NTK-balanced)                   & 99,649  & $2.0$ \\
 & Causal PINN                           & 202,764 & $6.3$ \\
\addlinespace
Operator learning & FNO (supervised)                      & 329,281 & --- \\
\bottomrule
\end{tabular}
\end{table}

\paragraph{Interpretation.} Single-branch MiNO has the smallest parameter count among the compared models; the observed regime separation is therefore not explained by additional capacity. Its per-step cost on smooth advection reflects the phase-space eikonal and transport residuals and the automatic-differentiation derivatives those residuals require \citep{baydin2018autodiff}. The time-to-accuracy advantage therefore comes from needing far fewer steps to reach that scale (Sec.~\ref{sec:exp-discont}).

The residual-only objective in \eqref{eq:objective} excludes the oscillatory reconstruction, which is used only for prediction and periodic evaluation; the reported hardware-specific timings include that periodic evaluation under the stated collocation batching and derivative requirements.

\subsubsection{Supervised Fourier neural operator comparison}
\label{sec:exp-fno}

The two paradigms differ in what they consume as a training signal. This comparison records that asymmetry at the smooth-advection operating point of Sec.~\ref{sec:exp-smooth}; the comparison analyzes the runs reported there rather than introducing a new training run.

\paragraph{Protocol.} The supervised FNO baseline of \citet{li2021fno} is trained on solution data and compared against unsupervised MiNO on smooth advection under the same final-time metric (Sec.~\ref{sec:exp-smooth}). Appendix~\ref{app:baseline-config} records its architecture, and Table~\ref{tab:params} reports its parameter count.

\paragraph{Results.} The two final-time errors are those of Sec.~\ref{sec:exp-smooth}. MiNO is trained on residuals alone; the FNO is trained on 1024 labeled input--output pairs drawn from a band of sinusoids, so the localized Gaussian used for evaluation lies outside its training distribution (Appendix~\ref{app:baseline-config}).

\paragraph{Interpretation.} At this operating point, the residual-trained generator reaches the lower error while consuming no labeled solution data, placing the eikonal and transport residuals on the same footing as supervision for a translation symbol whose propagation geometry is simple.

\section{Discussion and conclusions}
\label{sec:discussion}

\subsection{Applicability and scope}

The three targets have complementary regimes. Field-learning methods are well suited to smooth, directly observable solutions and provide a flexible framework for inverse problems. Neural operators are effective when large trajectory datasets are available and fast amortized inference across a PDE family is the goal. Propagator learning is most relevant when the propagation geometry is simpler than the projected solution, as in transport, waves, sharp fronts, caustics, inverse propagation, and shared-symbol reuse across initial conditions. The exact-generator diagnostic demonstrates regular reconstruction through the free-particle focus studied here.

The three targets do not consume the same information, and the reported margins should be read accordingly. MiNO and the field baselines are both given the governing equation, but they are asked for different objects. On the constant-coefficient benchmarks, the exact phase increment is the linear function $h^{*}=-\xi$, constant in $t$ and $x$, whereas the corresponding field carries the transported front. The supervised FNO is given labeled pairs rather than the symbol. The comparisons therefore measure how much the choice of representational target changes the difficulty of the learning problem under a fixed governing equation, not the performance of methods supplied with identical information.

Datum regularity and proximity to a projection singularity provide natural qualitative coordinates for discussing the regime boundary (Figure~\ref{fig:failure}). The present evidence is scoped: The trained studies use linear problems in one and two space dimensions, a single phase chart, zero subprincipal symbols, and constant target amplitudes. The comparison covers two PINN variants and one supervised FNO at the stated budgets; the FNO test uses a localized Gaussian outside its sinusoidal training distribution. Nontrivial chart transitions, nonconstant learned amplitudes, broader PDE families, and additional matched baselines remain open evaluation directions. Appendix~\ref{app:limitations} gives the full limitations.

\begin{figure}[pos=t!]
\centering
\includegraphics[width=0.62\linewidth]{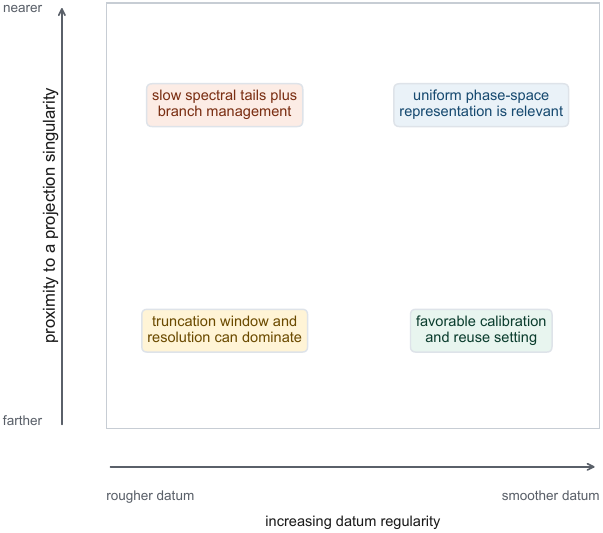}
\caption{Conceptual failure-mode coordinates based on datum regularity and proximity to a projection singularity. The four situations are qualitative prompts for method selection, not measured regimes or decision boundaries.}
\label{fig:failure}
\end{figure}

\subsection{Conclusions}
\label{sec:conclusion}

MiNO develops propagator learning as a third target alongside solution-field and solution-map learning. Its learned object is chartwise phase--amplitude data on the cotangent bundle: The eikonal equation constrains the phase, the transport equation evolves the amplitude, and an oscillatory quadrature returns the solution. Sharp fronts, discontinuities, and caustics are thereby represented through propagation geometry rather than only as features that a field surrogate must resolve pointwise.

The residual-to-error analysis developed in Sec.~\ref{sec:theory} controls the phase, the canonical graph, and the truncated propagator through residuals of the generator alone, while separating the frequency-truncation tail from that control. For discontinuous data the tail is sharp at rate $\Xi^{-1/2}$ (Proposition~\ref{prop:square-tail}), and its value provides a reference scale for interpreting the late-budget error on the chosen window.

Computationally, the target produces a distinct optimization regime in the wavefront-dominated benchmark. On discontinuous advection, MiNO reaches an error level consistent with the reconstruction-window tail while the NTK-balanced PINN remains near its initial error over the matched sweep, and at the reported smooth-advection operating point MiNO attains a lower mean error than the supervised FNO without using labeled trajectories. The remaining studies carry the same generator to bidirectional waves, to reuse across initial conditions, to inverse speed recovery, and to four-dimensional phase space, and the exact-generator diagnostic reconstructs through the focal time, with single-branch MiNO the smallest model in the comparison.

Propagator learning reframes neural PDE solving as learning the geometry that generates physical evolution rather than only the field that records that evolution. Where that geometry is simpler than the projected solution, the choice of target, and not the choice of optimizer, is what sets the difficulty of the learning problem.

\appendix
\counterwithin{equation}{section}
\counterwithin{table}{section}
\counterwithin{figure}{section}
\counterwithin{lstlisting}{section}

\section{Notation}
\label{app:notation}

Table~\ref{tab:notation} collects the symbols that recur throughout the paper, ordered as the construction is built: phase-space coordinates, the symbol and its canonical geometry, the learned generator and its Fourier-integral reconstruction, the training residuals, and the quantities that enter the residual-to-error estimates of Sec.~\ref{sec:theory}. Symbols confined to a single experiment or proof are defined at the point of occurrence.

\begin{table}[pos=t!]
\centering
\caption{List of notation: the symbols used repeatedly in the paper.}
\label{tab:notation}
\footnotesize
\begin{tabular}{@{}ll@{}}
\toprule
Symbol & Meaning \\
\midrule
\multicolumn{2}{@{}l}{\textbf{Phase space and data}} \\
$d,\Omega,[0,T]$ & Spatial dimension; domain $\R^d$, $\mathbb T^d$, or a chart; time interval \\
$\Tst$ & Cotangent bundle $\{(x,\xi)\}$; the phase space of the generator \\
$x,\xi$ & Base point; initial Fourier (fiber) label of the generator \\
$\zeta$ & Current covector $\zeta=\nabla_x\Phi_\theta(t,x,\xi)$ \\
$u_0,u(t,x)$ & Initial datum; solution of the Cauchy problem \\
$\wh v$ & Fourier transform of $v$ \\
$D_t,D_x$ & $-i\partial_t$ and $-i\nabla_x$; fix the eikonal sign convention \\
\addlinespace
\multicolumn{2}{@{}l}{\textbf{Symbol and canonical geometry}} \\
$p(x,\xi)$ & Principal symbol of the evolution operator \\
$p_{\mathrm{sub}}$ & Subprincipal symbol in the chosen (Weyl) quantization \\
$c(x)$ & Transport velocity of the scalar symbol $p=c(x)\xi$ \\
$H_p$ & Hamiltonian vector field $(\nabla_\zeta p,-\nabla_xp)$ \\
$\chi_t$ & Time-$t$ Hamiltonian (bicharacteristic) flow of $p$ \\
$X(s;t,x)$ & Characteristic flow of $c$, from time $t$ back to time $s$ \\
$\WF,\WF'$ & Wavefront set of a distribution; its twisted form for a kernel \\
$\mathcal C_t,\widetilde{\mathcal C}_t$ & Canonical relation of the exact propagator; of a learned phase \\
$\nu,m_\nu$ & Maslov index; its unit-modulus chart-transition factor \\
\addlinespace
\multicolumn{2}{@{}l}{\textbf{Learned generator and reconstruction}} \\
$\phi,A$ & Exact chart phase and amplitude \\
$\Phi_\theta,A_\theta$ & Learned phase and amplitude: the propagator generator \\
$h_\theta$ & Phase network; $\Phi_\theta=x\cdot\xi+th_\theta$ \\
$\alpha_\theta,B_\theta$ & Log-amplitude network; $B_\theta=\exp(t\alpha_\theta)$ \\
$F_{\phi,A}$ & Fourier-integral reconstruction operator \\
$R,K,\Xi$ & Bounded spatial output set; compact frequency window and its half-width \\
$P_K$ & Fourier projection onto the window $K$ \\
$\widetilde F^K,S_c^K,S_c$ & Learned truncated, exact truncated, and full transport propagators \\
\addlinespace
\multicolumn{2}{@{}l}{\textbf{Learning problem and residuals}} \\
$\theta$ & Trainable network parameters \\
$\mathcal R_{\mathrm{eik}}$ & Eikonal residual $\partial_t\Phi_\theta+p(x,\nabla_x\Phi_\theta)$ \\
$\mathcal R_{\mathrm{tr}}$ & Chart transport residual of the amplitude \\
$v_\theta$ & Group velocity $\nabla_\zeta p$ at the learned covector \\
$\mathcal J(\theta)$ & Propagator-learning objective \\
$\mu$ & Collocation sampling measure on $[0,T]\times\Tst$ \\
$u_\theta,\mathcal G_\theta$ & Learned field of a PINN; learned solution map of a neural operator \\
\addlinespace
\multicolumn{2}{@{}l}{\textbf{Residual-to-error estimates}} \\
$r,r_\phi,r_A$ & Residuals of an approximate phase; of phase and of amplitude \\
$e$ & Phase error $\widetilde\phi-\phi$ \\
$b,Y_b$ & Averaged velocity in \eqref{eq:averaged-velocity} and its terminal-value flow \\
$R_s,\mathcal Q_t$ & Output set transported to time $s$; the space--time tube swept by that set \\
$\kappa,\kappa_b$ & Suprema of the divergence of the transport velocity \\
$\mathfrak E_t,\mathfrak P_t$ & Residual functionals bounding amplitude and phase error \\
$\Gamma_\kappa(t)$ & Growth factor $((e^{\kappa t}-1)/\kappa)^{1/2}$ \\
$L$ & Lipschitz constant of $H_p$ \\
$\Delta_{\mathrm{cont}}$ & Continuous-residual bound on the truncated propagator \\
\bottomrule
\end{tabular}
\end{table}

\section{Related work}
\label{app:related}

This appendix positions MiNO through the three-target taxonomy of Sec.~\ref{sec:targets}. The comparison emphasizes what each method learns, how physics enters, and whether propagation geometry is represented explicitly.

\subsection{Solution-field learning and its remedies}
PINNs parameterize a field $u_{\theta}(t,x)$ and minimize strong-form PDE, initial-condition, and boundary residuals at collocation points \citep{raissi2019pinn,sirignano2018dgm,e2018deepritz}, with derivatives computed by automatic differentiation \citep{baydin2018autodiff}; the broader framework is surveyed by \citet{karniadakis2021piml}. This target is effective for smooth fields and remains a flexible mesh-free framework for inverse problems, but a transported discontinuity must still be represented as a nonsmooth function on the base manifold.

Two families of remedies address that difficulty without changing the learned object. Weak-form training replaces an undefined shock residual by a Kru\v zkov entropy formulation \citep{kruzkov1970first,deryck2024wpinn}, and Lagrangian PINNs move collocation points along characteristics, producing a frame in which the convection-dominated solution manifold has a more rapidly decaying $n$-width \citep{mojgani2023lpinn}. Causal weighting, gradient or NTK loss balancing, and modified MLPs with Fourier features instead improve the training dynamics \citep{wang2021gradient,wang2022causal,wang2022ntk, wang2023expert,krishnapriyan2021failure}. The Lagrangian formulation is the closest field-learning analog of MiNO, but that formulation transports coordinates while retaining a field target; MiNO learns the phase--amplitude generator of the flow on $\Tst$. The experiments compare against strong-residual NTK-balanced and causal PINNs (Sec.~\ref{sec:exp-discont}).

\subsection{Solution-operator learning}
The FNO \citep{li2021fno}, DeepONet \citep{lu2021deeponet}, and the general neural-operator framework \citep{kovachki2023neural} learn $\mathcal G_\theta:u_0\mapsto u(t,\cdot)$ between function spaces. DeepONet uses a branch--trunk factorization rooted in universal approximation for nonlinear operators \citep{chen1995universal}; the FNO alternates truncated spectral convolutions with pointwise operations. Both have approximation theory with explicit error bounds \citep{kovachki2021universal,lanthaler2022deeponet}. For advection-dominated problems with discontinuous solutions, however, linear reconstruction maps obey lower bounds that nonlinear reconstruction can avoid \citep{lanthaler2023nonlinear}. Physics-informed operator methods replace labeled trajectories by PDE residuals while preserving the same solution-map target \citep{wang2021pideeponet,li2024pino}.

Standard neural operators encode propagation geometry implicitly through the learned input--output map and its training distribution. MiNO parameterizes that geometry explicitly through a generator from which a solution map is synthesized by \eqref{eq:fio}. Its reuse across initial conditions follows directly from the linearity of that representation in $\wh u_0$, whereas operator networks learn amortized maps from pairs (Sec.~\ref{sec:exp-multiic}). The supervised FNO comparison in Sec.~\ref{sec:exp-fno} illustrates this separation at one smooth-advection operating point.

\subsection{Learned eikonal and analytic phase-space methods}
EikoNet, PINNeik, and related tomography methods learn the eikonal solution itself as a base-manifold traveltime field \citep{smith2021eikonet,binwaheed2021pinneik,chen2022eikonal}. These eikonal solvers are well suited to single-valued first-arrival computation, including the possibly nonsmooth viscosity solution \citep{crandall1983viscosity}. Classical fast-marching and sweeping methods address the first-arrival problem on a grid \citep{sethian1996fastmarching,zhao2005sweeping}; Eulerian phase-space, level-set, and ray-tube methods lift or decompose the multivalued arrival structure \citep{osher2002geometric,symes2003slowness,engquist2003high, runborg2007highfreq,benamou1996bigray}. Architectures that encode a Hamilton--Jacobi representation formula target the equation's solution \citep{darbon2020overcoming}; MiNO targets the phase--amplitude data of a PDE propagator and accommodates its branch structure.

The same lift underlies analytic high-frequency solvers. Maslov representations, Gaussian-beam and frozen-Gaussian superpositions, and curvelet representations prescribe phase-space structure that remains useful at focal points \citep{maslov1981semiclassical,ralston1982gaussian,liu2013gaussianbeam, lu2011frozen,candes2005curvelet}. Fast and butterfly FIO algorithms then apply a known oscillatory kernel through hierarchical low-rank factorization \citep{candes2007fast,candes2009butterfly}. These methods provide the geometric template and the efficient deployment engines for that template.

These methods also fix what a learned generator adds, since the classical solvers already propagate eikonal and transport data without any network. Their phase-space data are produced by a numerical construction, a ray or beam integration performed on a mesh or a beam family for each configuration; that construction is not differentiable with respect to the symbol that generated those data. MiNO produces the same data as a mesh-free closed-form map on $[0,T]\times\Tst$, evaluable at any point without reintegration and differentiable in its inputs and in the parameters of the symbol. The consequences are the ones exercised here: One trained generator serves new initial conditions through \eqref{eq:fio} alone (Sec.~\ref{sec:exp-multiic}), and a bank of generators enters a gradient-based parameter recovery as a structured forward model (Sec.~\ref{sec:exp-inverse}). What is learned is the representation, not the propagation principle, which stays the classical one.

\subsection{Learned symbols and microlocal neural architectures}
Phase-space structure is an established learning target for finite-dimensional dynamics, where the learned object is a Hamiltonian or a symplectic map \citep{greydanus2019hnn,jin2020sympnets}. For PDE propagators, the learned object has remained a field on the base manifold or a map between function spaces.

A close operator-learning neighbor parameterizes a pseudo-differential symbol $\sigma_\theta(x,\xi)$, generalizing the FNO while retaining the solution operator as its target \citep{shin2024pdno}. Its kernel has the fixed identity phase $(x-y)\cdot\xi$ and hence the diagonal canonical relation, which by Remark~\ref{rem:pseudo} bars that family from moving a wavefront set for any learned symbol. MiNO learns instead the admissible phase of a homogeneous hyperbolic FIO, whose canonical relation is locally the graph of $\chi_t$ (Proposition~\ref{prop:causal}); this phase carries the propagation geometry alongside the frequency-dependent amplitude.

Microlocal structure also appears in learned imaging. An FIO-inspired kernel can replace translation equivariance in variable-background wave imaging \citep{kothari2020geometry}, while wavefront-set extraction and completion can use the known canonical relation of the Radon transform to guide limited-angle reconstruction \citep{andradeloarca2022microlocal}. In the first case the trained object is an imaging map with an FIO prior; in the second the trained object is a data-domain singularity set used with a known forward operator. MiNO applies related microlocal structure to the eikonal- and transport-constrained generator of a PDE propagator.

\subsection{Classical microlocal theory and the present contribution}
The structure represented by MiNO is classical: geometric optics for oscillatory hyperbolic Cauchy data \citep{lax1957asymptotic}, the FIO representation and its eikonal--transport hierarchy, canonical relations, and Maslov transitions \citep{hormander1971fio,duistermaat1972fio,duistermaat1996fio, hormander1985vol3,hormander1985vol4,arnold1967maslov, maslov1981semiclassical,zworski2012semiclassical}, together with uniform oscillatory analysis at degenerate projections \citep{duistermaat1974oscillatory}.

The contribution here is to make the microlocal phase--amplitude generator a primary neural PDE-learning target and to connect its residual training to the generated object. The analysis gives compact-chart Hamilton--Jacobi phase and reconstruction stability (Theorem~\ref{thm:hj-phase-stability} and Corollary~\ref{cor:hj-reconstruction}), control of the canonical graph by differentiated eikonal residuals (Theorem~\ref{thm:canonical-residual} and Corollary~\ref{cor:canonical-uniform}), operator-norm control for scalar transport (Theorem~\ref{thm:transport-residual} and Corollary~\ref{cor:transport-residual-l2}), and a full-solution separation of generator error from the sharp discontinuity tail (Corollary~\ref{cor:continuous-full} and Proposition~\ref{prop:square-tail}). Together with the architecture and experiments, these results distinguish MiNO from learning a field, a solution map, an FIO prior for imaging, or a wavefront set under a known measurement operator.

\FloatBarrier
\section{Transport residuals for scalar advection and quadratic multipliers}
\label{app:transport}
\noindent Here $\xi$ is the initial Fourier (fiber) label of the generating function, while the current covector is $\zeta\coloneqq\nabla_x\Phi(t,x,\xi)$. Thus $A(t,x,\xi)$ denotes a chart amplitude pulled back to this fixed-label parameterization. A free phase-space amplitude evolves under the full Hamiltonian derivative $H_p\coloneqq\nabla_\zeta p\cdot\nabla_x-\nabla_xp\cdot\nabla_\zeta$, whereas the pulled-back chart amplitude follows the chart transport derived from its generating function. The ansatz below is for one normalized dynamical branch; prescribed localization cutoffs and any explicit external multi-branch weights are suppressed. We use
\begin{equation}
\label{eq:fioansatz}
u(t,x) = (2\pi)^{-d/2}\int_{\R^{d}} e^{i\Phi(t,x,\xi)}A(t,x,\xi)\wh u_0(\xi)\dd\xi , \qquad \Phi(0,x,\xi)=x\cdot\xi,\quad A(0,x,\xi)=1 .
\end{equation}

\subsection{Class 1: first-order transport $p(x,\xi)=c(x)\xi$, $d=1$}
The equation is $\partial_{t}u+c(x)\partial_{x}u=0$. Substituting \eqref{eq:fioansatz} and differentiating under the integral gives
\begin{align*}
\partial_{t}u+c\partial_{x}u =(2\pi)^{-1/2}\int e^{i\Phi} [i(\partial_{t}\Phi+c\partial_{x}\Phi)A +(\partial_{t}A+c\partial_{x}A)]\wh u_0\dd\xi .
\end{align*}
A sufficient exact split is to set the coefficient multiplying $iA$ and the remaining amplitude bracket separately to zero. With a semiclassical parameter inserted, these two conditions are the leading and next equations. The first condition is the eikonal equation $\partial_{t}\Phi+c(x)\partial_{x}\Phi=0$ with $\Phi(0,x,\xi)=x\xi$, solved by the characteristics $\dot{x}=c(x)$. Along those curves, $\Phi$ is constant, while its covector satisfies $\mathrm{d}(\partial_x\Phi)/\mathrm{d}t=-c'(x)\partial_x\Phi$. The remaining order is exactly the scalar advective equation $\partial_{t}A+c(x)\partial_{x}A=0$. If $X(s;t,x)$ is the characteristic flow, then
\begin{align*}
\Phi(t,x,\xi)=X(0;t,x)\xi,\qquad A\equiv1
\end{align*}
gives the scalar pullback $u(t,x)=u_0(X(0;t,x))$. In particular, for $c(x)=x$, $\Phi=e^{-t}x\xi$ and hence $\Delta_x\Phi=0$ while $c'(x)=1$, demonstrating that $\Delta_x\Phi$ and $c'(x)$ are independent quantities.

The alternative half-density equation $\partial_tu+c(x)\partial_xu+c'(x)u/2=0$ has amplitude residual $\partial_tA+c\partial_xA+c'A/2=0$ in the zero-subprincipal Weyl convention and represents a different PDE. The scalar experiments and residual construction in this paper target the former normalization and contain no $c'A/2$ term.

\subsection{Class 2: quadratic Fourier multiplier $p(\xi)$}
The symbol is $x$-independent, as for free Schr\"odinger $p(\xi)=\abs{\xi}^{2}/2$. The exact propagator is the Fourier multiplier $e^{-itp(\xi)}$, so \eqref{eq:fioansatz} is exact with
\begin{align*}
\Phi(t,x,\xi)=x\cdot\xi-tp(\xi),\qquad A\equiv 1 .
\end{align*}
Then $\nabla_{x}\Phi=\xi$ and the eikonal residual is $\partial_{t}\Phi+p(\nabla_{x}\Phi)=-p(\xi)+p(\xi)=0$ exactly; $\nabla_x^2\Phi=0$ and all derivatives of $A$ vanish. Hence both the leading transport equation and the exact finite-wavelength amplitude equation are satisfied.

For a general learned $x$-dependent phase and amplitude and $p(\zeta)=\zeta^TH\zeta/2+\ell\cdot\zeta+c_0$ with $H=H^T$, the leading half-density transport operator is
\begin{align*}
\partial_tA+(H\nabla_x\Phi+\ell)\cdot\nabla_xA +\tfrac12\operatorname{tr}(H\nabla_x^2\Phi)A +ip_{\mathrm{sub}}(x,\nabla_x\Phi)A.
\end{align*}
The quadratic multiplier considered here has $p_{\mathrm{sub}}=0$. For the Weyl quantization of the quadratic symbol above with no additional lower-order term, the exact unit-scale Schr\"odinger amplitude equation under the convention of \eqref{eq:cauchy} sets the leading transport operator equal to $i\operatorname{tr}(H\nabla_x^2A)/2$; in semiclassical scaling, the same correction is $O(\varepsilon)$. A nonzero lower-order Weyl symbol has its own unit-scale action on the amplitude and is represented by $i p_{\mathrm{sub}}A$ only at leading WKB order. For an arbitrary $x$-dependent learned amplitude, the transport residual above is therefore the leading WKB equation; exact finite-frequency evolution also contains the corresponding higher-order terms.

\subsection{Extension to other symbols}
The invariant leading half-density geometric term is $\operatorname{div}_x(\nabla_\zeta p(x,\nabla_x\Phi))A/2$; that term reduces to $\Delta_x\Phi A/2$ for the identity quadratic multiplier. The full leading transport equation also contains $i p_{\mathrm{sub}}A$ when the chosen quantization has nonzero subprincipal symbol. In the two direct reductions above, Class~1 follows by direct substitution into the scalar PDE, and Class~2 has $\nabla_xp=0$ and $A\equiv1$. For each additional chart, including an $x$-dependent quadratic lens and the two-dimensional half-wave case, the transport law is determined by its symbol, normalization, and quantization.

\FloatBarrier
\section{Proofs}
\label{app:proofdetails}
This appendix collects the proofs of the statements of Secs.~\ref{sec:targets} and~\ref{sec:theory}. Each proof uses the notation and the standing hypotheses of the setting that precedes its statement in the main text.

\subsection{Bicharacteristic kernel relation}

\begin{proof}[Proof of Proposition~\ref{prop:causal}]
At the fixed time in the statement, write the kernel as the oscillatory integral
\begin{align*}
\mathcal K_t(x,y) = (2\pi)^{-d}\int_{\R^{d}} e^{i(\phi(t,x,\xi)-y\cdot\xi)}A(t,x,\xi)\dd\xi .
\end{align*}
By the theory of oscillatory integrals \citep[Ch.~VIII]{hormander1983vol1}, the wavefront set of a distribution defined by such an integral is contained in the set of $(x,y;\xi_{x},\xi_{y})$ at which the phase $\Psi(x,y,\xi) \coloneqq \phi(t,x,\xi) - y\cdot\xi$ is stationary in the fiber variable $\xi$ and the differential in $(x,y)$ is the corresponding covector. Stationarity in $\xi$ reads $\nabla_{\xi}\phi(t,x,\xi)=y$. The unprimed kernel covectors are $(x,y;\nabla_x\phi,-\xi)$. By definition, the primed kernel relation reverses the input sign and therefore contains $(x,\nabla_x\phi;y,\xi)$. The set $\mathcal C_t\coloneqq\{(x,\nabla_x\phi;\nabla_{\xi}\phi,\xi)\}$ is, by the nonsingularity of $\partial^{2}_{x\xi}\phi$, a canonical graph, the graph of a homogeneous canonical transformation. Because $\phi$ solves the eikonal equation in \eqref{eq:eikonal} with Cauchy datum $\phi(0,x,\xi)=x\cdot\xi$, the method of characteristics for Hamilton--Jacobi equations \citep[Ch.~3]{evans2010pde} identifies the characteristic curves of \eqref{eq:eikonal} with the bicharacteristic curves of $p$, the integral curves of the Hamiltonian vector field $H_{p}$. Hence $\mathcal C_t$ is the graph of the time-$t$ bicharacteristic flow $\chi_{t}$. The wavefront containment gives $\WF'(\mathcal K_t)\subseteq\operatorname{graph}\chi_{t}$.
\end{proof}

\subsection{Hamilton--Jacobi phase and reconstruction stability}

\begin{proof}[Proof of Theorem~\ref{thm:hj-phase-stability}]
The fundamental theorem of calculus in the covector variable gives the exact pointwise identity
\begin{align*}
p(z,\nabla_z\widetilde\phi)-p(z,\nabla_z\phi) =b(s,z,\xi)\cdot\nabla_z e.
\end{align*}
Subtracting the exact Hamilton--Jacobi equation for $\phi$ from the definition of $r$ therefore yields the linear equation
\begin{equation}
\label{eq:phase-error-transport}
\partial_s e+b\cdot\nabla_z e=r.
\end{equation}
Along the characteristic in \eqref{eq:averaged-flow}, integration of \eqref{eq:phase-error-transport} gives
\begin{equation}
\label{eq:phase-error-char}
e(t,x,\xi)=e(0,Y_b(0;t,x,\xi),\xi) +\int_0^t r(s,Y_b(s;t,x,\xi),\xi)\dd s.
\end{equation}
For fixed $s$ and $\xi$, put $z\coloneqq Y_b(s;t,x,\xi)$. The inverse map is $x=Y_b(t;s,z,\xi)$, and Liouville's formula gives
\begin{align*}
\left|\det\frac{\partial Y_b(t;s,z,\xi)}{\partial z}\right| =\exp\left(\int_s^t \operatorname{div}_z b(\tau,Y_b(\tau;s,z,\xi),\xi)\dd\tau\right) \le e^{\kappa_b(t-s)}.
\end{align*}
Consequently, for every measurable $f$ on $R_s^b$,
\begin{align*}
\norm{f(Y_b(s;t,\cdot,\cdot),\cdot)}_{L^2(R\times K)} \le e^{\kappa_b(t-s)/2}\norm{f}_{L^2(R_s^b)}.
\end{align*}
Applying this inequality to the initial term and each integrand in \eqref{eq:phase-error-char}, followed by Minkowski's integral inequality, proves \eqrefs{eq:hj-phase-bound}{eq:phase-residual-functional}.
\end{proof}

\begin{proof}[Proof of Corollary~\ref{cor:hj-reconstruction}]
After the unitary Fourier transform, the localized difference of the two truncated reconstructions is an integral operator from $L^2(K_\xi)$ to $L^2(R_x)$ with kernel
\begin{align*}
(2\pi)^{-d/2}\left( e^{i\widetilde\phi}\widetilde A-e^{i\phi}A\right).
\end{align*}
Because both phases are real,
\begin{align*}
\abs{e^{i\widetilde\phi}\widetilde A-e^{i\phi}A} \le \abs{\widetilde A-A} +\abs{A}\cdot\abs{e^{i\widetilde\phi}-e^{i\phi}} \le \abs{\widetilde A-A}+\abs{A}\cdot\abs e.
\end{align*}
The operator norm is bounded by the Hilbert--Schmidt norm of this kernel. The triangle inequality in $L^2(R\times K)$ and \eqref{eq:hj-phase-bound} prove \eqref{eq:hj-reconstruction-bound}.
\end{proof}

\begin{proof}[Proof of Corollary~\ref{cor:hj-exact-data}]
Cauchy--Schwarz in time gives
\begin{align*}
\int_0^t e^{\kappa_b(t-s)/2}\norm{r(s)}_{L^2(R_s^b)}\dd s \le\left(\int_0^t e^{\kappa_b(t-s)}\dd s\right)^{1/2} \norm{r}_{L^2(\mathcal Q_b(t))},
\end{align*}
and the first factor is $\Gamma_{\kappa_b}(t)$. This estimate proves \eqref{eq:hj-exact-data}.
\end{proof}

\subsection{Canonical-geometry estimates}

\begin{proof}[Proof of Lemma~\ref{lem:canonical-characteristic}]
Along the curve in \eqref{eq:approx-char}, differentiate $\widetilde\zeta(s)=\nabla_x\widetilde\phi (s,\widetilde x(s),\xi)$. The chain rule and the $x$-gradient of the residual give
\begin{align*}
\dot{\widetilde\zeta} &=\partial_t\nabla_x\widetilde\phi +\nabla_x^2\widetilde\phi \nabla_\zeta p(\widetilde x,\widetilde\zeta),\\
\nabla_xr &=\partial_t\nabla_x\widetilde\phi +\nabla_xp(\widetilde x,\widetilde\zeta) +\nabla_x^2\widetilde\phi \nabla_\zeta p(\widetilde x,\widetilde\zeta).
\end{align*}
Therefore, with $\widetilde w\coloneqq(\widetilde x,\widetilde\zeta)$,
\begin{equation}
\label{eq:forced-ham}
\dot{\widetilde w}(s)=H_p(\widetilde w(s)) +(0,\nabla_xr(s,\widetilde x(s),\xi)).
\end{equation}
The exact initial phase gives $\widetilde\zeta(0)=\xi$. If $w(s)=\chi_s(y_0,\xi)$, subtracting the exact Hamiltonian equation from \eqref{eq:forced-ham}, using the $L$-Lipschitz property, and applying Gronwall's inequality proves \eqref{eq:forced-flow-bound}.

For the footpoint estimate, define $q(s)\coloneqq\nabla_\xi\widetilde\phi (s,\widetilde x(s),\xi)$. Differentiating the residual with respect to the fixed label $\xi$ and using \eqref{eq:approx-char} gives the exact identity
\begin{align*}
\dot q(s)=\nabla_\xi r(s,\widetilde x(s),\xi).
\end{align*}
Since the derivative defining $q$ is taken at fixed spatial argument, $q(0)=[\nabla_\xi\widetilde\phi(0,x,\xi)]_{x=y_0}=y_0$, and $q(t)=\widetilde y$. Integration proves \eqref{eq:footpoint-bound}.
\end{proof}

\begin{proof}[Proof of Theorem~\ref{thm:canonical-residual}]
The $L$-Lipschitz assumption and Gronwall's inequality imply continuous dependence of the exact Hamiltonian flow on its initial point:
\begin{align*}
\abs{\chi_t(\widetilde y,\xi)-\chi_t(y_0,\xi)} \le e^{Lt}\abs{\widetilde y-y_0}.
\end{align*}
The triangle inequality and the two estimates in Lemma~\ref{lem:canonical-characteristic} prove \eqref{eq:canonical-defect}. The mixed-Hessian assumption identifies the relation generated by the phase as a local canonical graph; that assumption is not needed for the differential inequalities in the lemma.
\end{proof}

\begin{proof}[Proof of Corollary~\ref{cor:canonical-uniform}]
Apply \eqref{eq:canonical-defect} to each $(x,\xi)\in\mathcal D_t$, take the uniform derivative bounds, and evaluate the exponential integral.
\end{proof}

\subsection{Scalar-transport stability and frequency-tail estimates}

\begin{proof}[Proof of Lemma~\ref{lem:tube-error}]
Integration along the characteristic ending at $x$ at time $t$ gives
\begin{equation}
\label{eq:char-error-rep}
e(t,x,\xi)=e_0(X(0;t,x),\xi) +\int_0^t r(s,X(s;t,x),\xi)\dd s.
\end{equation}
For $z\coloneqq X(s;t,x)$, the inverse change of variables is $x=X(t;s,z)$. Liouville's formula for the flow Jacobian gives
\begin{align*}
\left|\det\frac{\partial X(t;s,z)}{\partial z}\right| =\exp\left(\int_s^t \operatorname{div}c(X(\tau;s,z))\dd\tau\right) \le e^{\kappa(t-s)}.
\end{align*}
Consequently, for every measurable $f$ on $R_s\times K$,
\begin{equation}
\label{eq:flow-pullback-l2}
\norm{f(X(s;t,\cdot),\cdot)}_{L^2(R\times K)} \le e^{\kappa(t-s)/2}\norm{f}_{L^2(R_s\times K)}.
\end{equation}
Apply \eqref{eq:flow-pullback-l2} to the initial term and to each integrand in \eqref{eq:char-error-rep}, and then use Minkowski's integral inequality. These two steps prove \eqref{eq:generator-error-from-residual}.
\end{proof}

\begin{proof}[Proof of Theorem~\ref{thm:transport-residual}]
The backward-flow identity implies $\partial_s\phi_c+c\cdot\nabla_z\phi_c=0$ and $\phi_c(0,z,\xi)=z\cdot\xi$. Hence the phase error $e_\phi\coloneqq\widetilde\phi-\phi_c$ and amplitude error $e_A\coloneqq\widetilde A-1$ satisfy, respectively,
\begin{align*}
(\partial_s+c\cdot\nabla_z)e_\phi=r_\phi, \qquad (\partial_s+c\cdot\nabla_z)e_A=r_A.
\end{align*}
Lemma~\ref{lem:tube-error} bounds both errors by their corresponding functionals in \eqref{eq:residual-functional}. After the unitary Fourier transform, the difference of the two truncated operators has kernel
\begin{align*}
(2\pi)^{-d/2}\mathbf 1_R(x)\mathbf 1_K(\xi) \left(e^{i\widetilde\phi(t,x,\xi)}\widetilde A(t,x,\xi) -e^{i\phi_c(t,x,\xi)}\right).
\end{align*}
Both phases are real, so
\begin{align*}
\abs{e^{i\widetilde\phi}\widetilde A-e^{i\phi_c}} \le \abs{\widetilde A-1} +\abs{e^{i\widetilde\phi}-e^{i\phi_c}} \le \abs{e_A}+\abs{e_\phi}.
\end{align*}
The operator norm is bounded by the Hilbert--Schmidt norm of this kernel. Using the triangle inequality in $L^2(R\times K)$ and then \eqref{eq:generator-error-from-residual} for the two errors proves \eqref{eq:residual-operator-bound}.
\end{proof}

\begin{proof}[Proof of Corollary~\ref{cor:transport-residual-l2}]
With the initial-error terms absent from \eqref{eq:residual-functional}, Cauchy--Schwarz in time gives
\begin{align*}
\int_0^t e^{\kappa(t-s)/2}\norm{r(s)}_{L^2(R_s\times K)}\dd s \le\left(\int_0^t e^{\kappa(t-s)}\dd s\right)^{1/2} \norm{r}_{L^2(\mathcal Q_t)},
\end{align*}
and the first factor is exactly $\Gamma_\kappa(t)$. Apply this estimate to both residuals in \eqref{eq:residual-operator-bound}.
\end{proof}

\begin{proof}[Proof of Corollary~\ref{cor:continuous-full}]
The exact truncated operator satisfies $S_c^K(t)=S_c(t)P_K$, because Fourier inversion gives
\begin{align*}
(S_c(t)P_Ku_0)(x)=(2\pi)^{-d/2}\int_K e^{iX(0;t,x)\cdot\xi}\wh u_0(\xi)\dd\xi.
\end{align*}
Consequently,
\begin{align*}
\widetilde F^K(t)u_0-S_c(t)u_0 =(\widetilde F^K(t)-S_c^K(t))u_0 -S_c(t)(I-P_K)u_0.
\end{align*}
For the second term, the change of variables $z\coloneqq X(0;t,x)$ and Liouville's formula, exactly as in \eqref{eq:flow-pullback-l2}, give
\begin{align*}
\norm{S_c(t)(I-P_K)u_0}_{L^2(R)} \le e^{\kappa t/2}\norm{(I-P_K)u_0}_{L^2(R_0)} \le e^{\kappa t/2}\norm{(I-P_K)u_0}_{L^2(\R^d)}.
\end{align*}
The triangle inequality, Corollary~\ref{cor:transport-residual-l2}, and \eqref{eq:continuous-residual-delta} prove \eqref{eq:full-solution-bound}.
\end{proof}

\begin{proof}[Proof of Proposition~\ref{prop:square-tail}]
Under the Fourier convention of Sec.~\ref{sec:targets-gen},
\begin{align*}
\wh u_0(\xi)=\sqrt{\frac2\pi}\frac{\sin(a\xi)}{\xi}, \qquad \norm{u_0}_{L^2(\R)}^2=2a.
\end{align*}
Plancherel's theorem yields
\begin{equation}
\label{eq:square-tail-integral}
\norm{(I-P_K)u_0}_{L^2(\R)}^2 =\frac4\pi\int_\Xi^\infty\frac{\sin^2(a\xi)}{\xi^2}\dd\xi.
\end{equation}
The inequality $\sin^2(a\xi)\le1$ proves \eqref{eq:square-tail-bound}. A change of variables and integration by parts give
\begin{align*}
\int_\Xi^\infty\frac{\sin^2(a\xi)}{\xi^2}\dd\xi =a\left[ \frac{\sin^2(a\Xi)}{a\Xi}+\frac{\pi}{2}-\operatorname{Si}(2a\Xi) \right].
\end{align*}
Together with \eqref{eq:square-tail-integral} and $\norm{u_0}_{L^2(\R)}^2=2a$, this evaluation proves \eqref{eq:square-tail-exact}. Moreover,
\begin{align*}
\int_\Xi^\infty\frac{\sin^2(a\xi)}{\xi^2}\dd\xi =\frac1{2\Xi} -\frac12\int_\Xi^\infty\frac{\cos(2a\xi)}{\xi^2}\dd\xi =\frac1{2\Xi}+O(\Xi^{-2}),
\end{align*}
where the last estimate follows from one integration by parts. Inserting this expansion into \eqref{eq:square-tail-integral}, dividing by $2a$, and taking the square root proves \eqref{eq:square-tail-asymptotic}.
\end{proof}

\FloatBarrier
\section{Experimental design and implementation details}
\label{app:experimental-appendix}

\subsection{Evaluation design}
\label{app:expdesign}

The evaluation spans propagation regimes, representational fidelity and reuse, inverse recovery, dimensional extension, efficiency, and supervision. Table~\ref{tab:evalmap} summarizes the objective and benchmark associated with each study.

\begin{table}[pos=t!]
\centering
\caption{Evaluation overview.}
\label{tab:evalmap}
\small
\begin{tabular*}{\linewidth}{@{\extracolsep{\fill}}lll@{}}
\toprule
Objective & Benchmark & Section \\
\midrule
\multicolumn{3}{@{}l}{\textbf{Propagation regimes}} \\
Closed-form propagation calibration & smooth advection & Sec.~\ref{sec:exp-smooth} \\
Sharp-transport regime comparison & discontinuous advection & Sec.~\ref{sec:exp-discont} \\
Two-branch propagation & wave equation & Sec.~\ref{sec:exp-wave} \\
Inhomogeneous eikonal construction & stationary-point transport & Sec.~\ref{sec:exp-varcoeff} \\
Cross-paradigm variable-coefficient comparison & sinusoidal-speed transport & Sec.~\ref{sec:exp-varcoeff-sin} \\
Exact-generator reconstruction through a focus & point-focus caustic & Sec.~\ref{sec:exp-caustic} \\
\addlinespace
\multicolumn{3}{@{}l}{\textbf{Representational content, reuse, and extension}} \\
Finite-resolution microlocal fidelity & wave-packet portrait & Sec.~\ref{sec:exp-spectral} \\
Propagator reuse across initial data & multi-initial-condition reuse & Sec.~\ref{sec:exp-multiic} \\
Structured inverse recovery & transport-speed recovery & Sec.~\ref{sec:exp-inverse} \\
Dimensional extension & two-dimensional propagation & Sec.~\ref{sec:exp-2d} \\
\addlinespace
\multicolumn{3}{@{}l}{\textbf{Efficiency and supervision}} \\
Capacity and per-step cost & parameter-count and wall-clock comparison & Sec.~\ref{sec:exp-efficiency} \\
Supervision requirement & smooth-advection MiNO--FNO comparison & Sec.~\ref{sec:exp-fno} \\
\bottomrule
\end{tabular*}
\end{table}

\paragraph{Common protocol.} Representation diagnostics use exact generators to isolate the reconstruction, whereas learning benchmarks use the learned-generator specialization documented in Appendix~\ref{app:experimental-residual}. Errors are evaluated either at the final time or over the full space--time grid, as specified for each result. The seed protocol and the per-seed aggregation conventions behind the quoted dispersions are collected in Appendix~\ref{app:expdetails}.

\paragraph{Baselines and computing environment.} The cross-paradigm comparisons use two field-learning baselines trained over five seeds for 200,000 steps: the NTK-balanced PINN, a modified MLP with Fourier features and NTK balancing in the form of \citet{wang2023expert}, and the causal PINN of \citet{wang2022causal}. The supervised FNO of \citet{li2021fno} is trained separately on labeled solution data under its own epoch budget (Appendix~\ref{app:baseline-config}). All runs use a single NVIDIA A100 GPU. MiNO and the NTK-balanced PINN are implemented in JAX \citep{bradbury2018jax}. Each experiment states the applicable subset of baselines, and Appendix~\ref{app:baseline-config} records their configurations.

\subsection{Experimental residual specialization}
\label{app:experimental-residual}
The experiments directly parameterize the full phase increment $h_\theta$, giving the phase $x\cdot\xi+t h_\theta$, and use the amplitude $1+t\gamma_\theta$. The default one-dimensional reconstruction is the composite trapezoidal rule. The point-focus diagnostic uses the Filon rule of Appendix~\ref{app:expdetails}. During amplitude training, the experimental residual is
\begin{align*}
\partial_tA+\nabla_\zeta p(x,\zeta)\cdot\nabla_xA -\nabla_xp(x,\zeta)\cdot\nabla_\xi A, \qquad \zeta\coloneqq\nabla_x\Phi_\theta(t,x,\xi).
\end{align*}
Both symbol derivatives are evaluated at the learned covector, matching Listing~\ref{lst:experimental-kernel}. This residual is the full Hamiltonian derivative of a free phase-space function, specialized to $p_{\mathrm{sub}}=0$. For general chart amplitudes, \eqref{eq:Rtr} uses the pulled-back chart derivative and the term $i p_{\mathrm{sub}}A$, so the two formulations are not identical away from their target. The trained experiments are restricted to the constant-amplitude subclass $A^*=1$ with $p_{\mathrm{sub}}=0$, for which both formulations have the same exact zero target.

\paragraph{JAX implementation of the experimental kernel.} Listing~\ref{lst:experimental-kernel} records the minimal computational core used by the trained experiments. Here \texttt{h\_apply} and \texttt{gamma\_apply} are parameter-closed Flax-network outputs, while \texttt{p} evaluates one selected symbol branch; the derivatives of \texttt{p} are partial derivatives with the other argument fixed. Every trained case uses a real amplitude and $p_{\mathrm{sub}}=0$, so no subprincipal term appears. The residuals are batched with \texttt{jax.vmap} and combined as in \eqref{eq:objective}.

\begin{lstlisting}[style=mino-python, float=tp, floatplacement=tp,
  caption={Minimal JAX kernel used in the trained MiNO experiments.},
  label={lst:experimental-kernel}]
import jax
import jax.numpy as jnp
# Exact phase and amplitude data at t = 0.
def make_generator(h_apply, gamma_apply):
    def phi(t, x, xi):
        h = jnp.asarray(h_apply(t, x, xi)).squeeze()
        return jnp.dot(x, xi) + t * h
    def amplitude(t, x, xi):
        gamma = jnp.asarray(gamma_apply(t, x, xi)).squeeze()
        return 1.0 + t * gamma
    return phi, amplitude
# Eikonal residual on the learned Lagrangian.
def eikonal_residual(phi, p, t, x, xi):
    phi_t = jax.grad(phi, argnums=0)(t, x, xi)
    zeta = jax.grad(phi, argnums=1)(t, x, xi)
    return phi_t + p(x, zeta)
# Full-Hamiltonian amplitude residual used in the experiments.
def experimental_transport_residual(
        phi, amplitude, p, t, x, xi):
    a_t = jax.grad(amplitude, argnums=0)(t, x, xi)
    a_x = jax.grad(amplitude, argnums=1)(t, x, xi)
    a_xi = jax.grad(amplitude, argnums=2)(t, x, xi)
    zeta = jax.grad(phi, argnums=1)(t, x, xi)
    p_x = jax.grad(p, argnums=0)(x, zeta)
    p_zeta = jax.grad(p, argnums=1)(x, zeta)
    return a_t + jnp.dot(p_zeta, a_x) - jnp.dot(p_x, a_xi)
\end{lstlisting}

\subsection{Networks, optimization, and reconstruction}
\label{app:expdetails}
\paragraph{Networks and parameterization.} The phase network is a modified MLP with four hidden layers of width 64, multiplicative skip connections in the form of \citet{wang2023expert}, and 32 sinusoidal input features at the fixed scales $0.25$ in $x$ and $0.1$ in $\xi$. The amplitude network is a modified MLP with three hidden layers of width 64 and 16 sinusoidal input features. The phase is parameterized as in \eqref{eq:phase}. The experimental amplitude $1+t\gamma_\theta$ enforces its initial value; the general positivity-preserving model $\exp(t\alpha_{\theta})$ and its unit-complex Maslov factor are defined in Sec.~\ref{sec:mino-param}. Appendix~\ref{app:experimental-residual} gives the experimental specialization.

\paragraph{Optimization and collocation.} Optimization uses Adam \citep{kingma2015adam} with a constant learning rate $10^{-3}$ and 4096 collocation points resampled each step from a product measure on time, space, and frequency. This fixed, nonadaptive sampling design follows the comparison setting of \citet{wu2023sampling}. When enabled by the individual run configuration, online NTK balancing updates the eikonal and transport weights every 100 steps.

\paragraph{Seeds and per-seed aggregation.} Unless stated otherwise, results for trained MiNO and for the baseline methods are reported over five seeds as the mean and sample standard deviation. The multi-initial-condition generator of Sec.~\ref{sec:exp-multiic} is trained once with a single seed, so the errors reported there are single-run values. The inverse study of Sec.~\ref{sec:exp-inverse} uses five noise seeds for each combination of true speed and noise level, giving fifteen configurations per noise level and sixty in total.

\paragraph{Reconstruction windows and domains.} Reconstruction uses the composite trapezoidal rule by default. The smooth-advection run uses 1024 nodes on $[-30,30]$, the wave run uses 512 nodes on $[-100,100]$, and the discontinuous-advection run uses 1024 nodes on $[-100,100]$. The two sinusoidal-speed runs of Sec.~\ref{sec:exp-varcoeff-sin} use 512 nodes on $[-30,30]$ for the Gaussian datum and 1024 nodes on $[-60,60]$ for the square wave. The stationary-point run of Sec.~\ref{sec:exp-varcoeff} uses 1024 nodes on $[-20,20]$, and the two-dimensional run of Sec.~\ref{sec:exp-2d} samples training frequencies from $[-40,40]^2$. The optional Filon rule is used in the exact-generator point-focus test, whose parameters are listed with that experiment. Because the windows and node counts differ across runs, every reported error includes its own quadrature contribution. Each run samples collocation points over the space--time domain of its own experiment, as stated with that experiment, and over the frequency window listed above for that run.

\paragraph{Filon-type panel formula.} The optional Filon rule of Sec.~\ref{sec:mino-recon} is the following piecewise-linear-phase rule. On each frequency panel $[\xi_n,\xi_{n+1}]$ of width $\Delta\xi$, with $s\coloneqq(\xi-\xi_n)/\Delta\xi\in[0,1]$, both the phase and the smooth factor $g=A_\theta\wh u_0$ are interpolated linearly,
\begin{align*}
\Phi(\xi)\approx \Phi_n+\beta_ns,\qquad g(\xi)\approx(1-s)g_n+s g_{n+1},\qquad \beta_n\coloneqq\Phi_{n+1}-\Phi_n ,
\end{align*}
and the panel contribution is evaluated as
\begin{align*}
\Delta\xi e^{i\Phi_n}[g_n(E_1(\beta_n)-E_2(\beta_n)) +g_{n+1}E_2(\beta_n)],
\end{align*}
where $E_1(\beta)\coloneqq\int_0^1e^{i\beta s}\dd s$ and $E_2(\beta)\coloneqq\int_0^1s e^{i\beta s}\dd s$, using their Taylor limits near $\beta=0$.

\subsection{Baseline configurations}
\label{app:baseline-config}
The PINN baseline is a modified-MLP architecture implemented for this study, with four hidden layers of width 128, 64 Fourier features, and NTK balancing in the form of \citet{wang2023expert}; the baseline has 99,649 parameters and is trained for 200,000 steps over five seeds. The causal baseline of \citet{wang2022causal} uses PirateNet with three layers of width 256, 202,764 parameters, and the same training budget. The operator baseline uses the FNO architecture of \citet{li2021fno}, with 32 modes, four layers of width 64, and 329,281 parameters, trained supervised on solution data. The FNO is trained over the same five seeds for 200 epochs with batch size 32 and learning rate $10^{-3}$ on 1024 supervised pairs $(u_{0},u(t_{\max},\cdot))$, with 256 further pairs held out. Each training datum is a normalized sum of one to three sinusoids $\sin(\pi f(x-\varphi))$ with $f$ drawn uniformly from $[0.25,1.5]$ and $\varphi$ from $[0,1]$, and its label is the corresponding translation on $[-3,3]$. The evaluation datum of Sec.~\ref{sec:exp-smooth} is the localized Gaussian, whose spectrum extends beyond that training band, so the reported FNO error is that of a trained operator applied to an initial condition outside its training distribution.

\FloatBarrier
\section{Limitations}
\label{app:limitations}

The evidence reported here comes from a scoped evaluation rather than an exhaustive one. The test suite consists of linear propagation problems in one and two space dimensions, each generated from the initial data, coefficients, and phase-space windows of Appendix~\ref{app:expdetails}; the comparison covers NTK-balanced and causal PINNs and a supervised FNO. That evaluation therefore does not span every benchmark dataset or every field-learning and operator-learning architecture in current use, and the reported margins are specific to the regimes and baselines examined. Wider coverage, in particular further PDE families and additional baselines under matched data, resolution, and hyperparameter budgets \citep{lu2022fair}, would place the propagator target on a broader empirical footing.

The trained configurations are also narrower than the construction of Sec.~\ref{sec:mino}. Every trained run stays on a single phase chart with $m_\nu=1$, so no chart crossing and no nontrivial Maslov transition is exercised; the two-branch wave configuration superposes two single-chart generators under explicit external weights rather than transitioning between charts. All trained runs take $p_{\mathrm{sub}}=0$ and the constant-amplitude specialization $A^{*}=1$ (Appendix~\ref{app:experimental-residual}), so the positivity-preserving amplitude model $\exp(t\alpha_{\theta})$ and general nonconstant chart amplitudes are defined but not trained here. The supervised comparison of Sec.~\ref{sec:exp-fno} is likewise at a single labeled-data budget.

\FloatBarrier

\printcredits

\section*{Declaration of competing interest}
The authors declare that they have no known competing financial interests or personal relationships that could have appeared to influence the work reported in this paper.

\section*{Data availability}
The MiNO implementation and the experiment records that reproduce the reported numbers are available from the corresponding author on request.

\section*{Funding}
This research received no specific grant from any funding agency in the public, commercial, or not-for-profit sectors.

\section*{Acknowledgments}
The authors thank the Axiom Research Group for sustained discussion of the framework.

\bibliographystyle{cas-model2-names}
\bibliography{mino}

\end{document}